\newcommand{\tn}[1]{\|#1\|_{2}}
\newcommand{\BE}{\mathbb{E}} 
\newcommand{\BS}{\mathbb{S}} 
\newcommand{\R}{\mathbb{R}} 
\newcommand{\BT}{\mathbb{T}} 
\newcommand{\BD}{\mathbb{D}} 
\newcommand{\BC}{\mathbb{C}} 
\newcommand{\TiaQED}{\hfill $\triangle$} 
\definecolor{mygray}{gray}{0.4}
\newcommand{\CS}{\mathcal{S}} 
\newcommand{\CA}{\mathcal{A}} %
\newcommand{\CI}{\mathcal{I}} %
\newcommand{\CC}{\mathcal{C}} %
\newcommand{\CO}{\mathcal{O}} %
\newcommand{\CM}{\mathcal{M}} %
\newcommand{\CN}{\mathcal{N}} %
\newcommand{\CD}{\mathcal{D}} %
\newcommand{\CB}{\mathcal{B}} %
\newcommand{\CK}{\mathcal{K}} %
\newcommand{\CH}{\mathcal{H}} %
\newcommand{\Blkdiag}{{\rm Blkdiag}}
\newcommand{\Tr}{{\rm Tr}} %
\newcommand{\op}{{\rm op}}
\DeclareMathOperator*{\argmax}{arg\,max}
\DeclareMathOperator*{\argmin}{arg\,min}
\newcommand{\diag}{{\rm diag}} 
\newcommand{\AngBra}[1]{\langle #1 \rangle}
\theoremstyle{definition}
\newtheorem{remark}{Remark}
\newtheorem{theorem}{Theorem}
\newtheorem{definition}{Definition}
\newtheorem{assumption}{Assumption}
\newtheorem{lemma}{Lemma}
\newtheorem{approach}{Approach}
\newcommand*{\QE}{\hfill\ensuremath{\square}}%
\newenvironment{pfof}[1]{\vspace{1ex}\noindent{\textbf{Proof of
			#1:}}\hspace{0.5em}} {\hfill\QE\vspace{1ex}}
\newcommand{\vct}[1]{{\bm{#1}}}
\newcommand{\x}{\vct{x}}
\newcommand{\y}{\vct{y}}
\newcommand{\br}{\vct{r}}
\newcommand{\z}{\vct{z}}
\newcommand{\w}{\vct{w}}
\newcommand{\onebb}{\mathbf{1}_m}
\newcommand{\va}{\vct{a}}
\newcommand{\ab}{\vct{a}}
\newcommand{\bb}{\vct{b}}
\newcommand{\vd}{\vct{d}}
\newcommand{\vb}{\vct{b}}
\newcommand{\bC}{\vct{C}}
\newcommand{\ve}{\vct{e}}
\newcommand{\vv}{\vct{v}}
\newcommand{\bt}{{\boldsymbol{\theta}}}
\newcommand{\bEps}{{\boldsymbol{{\varepsilon}}}}
\newcommand{\bF}{{\boldsymbol{F}}}
\newcommand{\bmu}{\boldsymbol{\mu}}
\newcommand{\bR}{\boldsymbol{R}}
\newcommand{\bA}{\boldsymbol{A}}
\newcommand{\bB}{\boldsymbol{B}}
\newcommand{\bP}{\boldsymbol{P}}
\newcommand{\bV}{\boldsymbol{V}}
\newcommand{\bQ}{\boldsymbol{Q}}
\newcommand{\bM}{\boldsymbol{M}}
\newcommand{\bU}{\boldsymbol{U}}
\newcommand{\bZ}{\boldsymbol{Z}}
\newcommand{\bg}{\boldsymbol{g}}
\newcommand{\bh}{\boldsymbol{h}}
\newcommand{\bz}{\boldsymbol{z}}
\newcommand{\bNeps}{{\boldsymbol{{\epsilon}}}}
\newcommand{\eps}{\varepsilon}
\newcommand{\BMxi}{\boldsymbol{\xi}}
\newcommand{\BMtheta}{\boldsymbol{\theta}}
\newcommand{\BMphi}{\boldsymbol{\phi}}
\newcommand{\BMXi}{\boldsymbol{\Xi}}
\newcommand{\BMPsi}{\boldsymbol{\Psi}}
\newcommand{\BMPhi}{\boldsymbol{\Phi}}
\newcommand{\distas}{\overset{\text{i.i.d.}}{\sim}}
\newcommand\scalemath[2]{\scalebox{#1}{\mbox{\ensuremath{\displaystyle #2}}}}
\newcommand{\ThrLN}[1]{{\left\vert\kern-0.25ex\left\vert\kern-0.25ex\left\vert #1 
		\right\vert\kern-0.25ex\right\vert\kern-0.25ex\right\vert}}
\title{ Stochastic Contextual Bandits with Long Horizon Rewards}
\author{ Yuzhen Qin\thanks{University of California, Riverside, \textsuperscript{$\dagger$}University of Washington, \textsuperscript{$\ddagger$}University of Michigan. \newline$~~~~~~$Emails: \{yuzhenq, yli692\}@ucr.edu, fabiopas@engr.ucr.edu, mfazel@uw.edu,  oymak@umich.edu.  \newline$~~~~~~$This paper will appear at AAAI 2023. },\quad Yingcong Li,\textsuperscript{*}\quad Fabio Pasqualetti,\textsuperscript{*}\quad Maryam Fazel,\textsuperscript{$\dagger$}\quad \vspace{8pt}
	Samet Oymak\textsuperscript{*,$\ddagger$}\\
}
\date{}
\begin{document}
\maketitle	

\begin{abstract}
	The growing interest in complex decision-making and language modeling problems highlights the importance of sample-efficient learning over very long horizons. This work takes a step in this direction by investigating contextual linear bandits where the current reward depends on at most $s$ prior actions  and contexts (not necessarily consecutive), up to a time horizon of $h$. 
	In order to avoid polynomial dependence on $h$, we propose new algorithms that leverage sparsity to discover the dependence pattern and arm parameters jointly. We consider both the data-poor ($T<h$) and data-rich  ($T\ge h$) regimes, and derive  respective regret upper bounds $\tilde O(d\sqrt{sT} +\min\{ q, T\})$ and $\tilde O(\sqrt{sdT})$,  with sparsity $s$, feature dimension $d$,  total time horizon $T$, and $q$ that is adaptive to the reward dependence pattern. 
	Complementing upper bounds, we also show that learning over a single trajectory brings inherent challenges: While the dependence pattern and arm parameters form a rank-1 matrix, circulant matrices are not isometric over rank-1 manifolds and sample complexity indeed benefits from the sparse reward  dependence  structure. Our results necessitate a new analysis to address long-range temporal dependencies across data and avoid polynomial dependence on the reward horizon $h$. Specifically, we utilize connections to the restricted isometry property of circulant matrices formed by dependent sub-Gaussian vectors and establish new guarantees that are also of independent interest.
\end{abstract}



\section{Introduction}

Multi-armed bandits (MAB) serve as a prototypical model to study exploration-exploitation trade-off in sequential decision-making (e.g., see \citet{bubeck2012regret}).  The agent needs to repeatedly make decisions by interacting with an unknown environment, aiming to maximize the cumulative reward. As a generalization of MAB, the contextual bandits allow the agent to take actions based on contextual information \cite{langford2007epoch}. Extensive studies have been conducted on contextual bandits due to its wide applications such as clinical trials, recommendation, and advertising (e.g., see \citet{woodroofe1979one,chu2011contextual,li2017provably,li2010contextual,QinACC2022,Qin2022_OJCSYS}).

Most existing work on contextual bandits assume that each reward only depends on a single action and the associated context. This action can be the one just taken (instantaneous reward) or the one taken a certain number of steps before (delayed rewards). However, in realistic decision-making scenarios, the reward generating process can have a more complex, non-Markovian nature. Multiple prior actions can jointly affect the current reward. For instance, whether a learner will 
take a course recommended by an online education platform depends not only on that course, but also on what combination of courses they have taken before. 
Recommending courses in a complicated curriculum to users with diverse backgrounds and past experiences requires accounting for the combined effects of past contexts on the current recommendation. 
Similarly, the attention mechanism  \citep{vaswani2017attention} is finding increasing success in reinforcement learning and NLP applications \citep{chen2021decision,brown2020language} and it makes predictions by assessing the similarities between current and past contexts (e.g.,~that correspond to words in a sentence or frames in a video game) and creating a history-weighted adaptive context. In connection to this, the benefit of using a long context history has been well acknowledged in RL and control theory (e.g.~frame/state stacking practice \cite{hessel2018rainbow}). These observations motivates the following central question:
\begin{itemize}
	\item[\textbf{Q:}]  Can we provably and efficiently learn from long-horizon rewards? What is the role of reward dependence structure in sample efficiency?
\end{itemize}

In this work, we thoroughly address these questions for a novel variation of stochastic linear contextual bandits problems, where the current reward depends on 
a subset of prior 
contexts, up to a time horizon of $h$ (see Fig.~\ref{conceptual} for an illustration). Specifically, the reward is determined by a \textit{filtered context} that is a linear combination of prior $h$ selected contexts. 
Moreover, inspired by practical decision making scenarios, we account for sparse interactions where only $s$ ($s\ll h$) of $h$ prior contexts actually contributing to the current reward. Here $s=1$ corresponds to the special instance of {delayed rewards}. Crucially, we develop strategies that leverage this sparse dependence structure of the reward function and establish regret guarantees for long horizon rewards.

\begin{figure*}[t]
	\centering
	\includegraphics[scale=1.7]{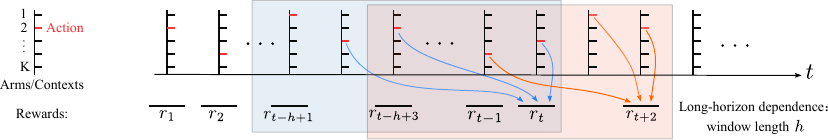}
	\caption{Contextual Bandits with Long-Horizon Rewards. The reward at each round $t$ depends on the contexts associated with latest $h$ actions ($h$ can be very large).}
	\vspace{-8pt}
	\label{conceptual}
\end{figure*}

\subsection{Related Work} 
\subsubsection{Composite anonymous rewards.} \citet{pike2018bandits} considered bandits with composite anonymous rewards, where 1) the reward that the agent receives at each round is the sum of the delayed rewards of an unknown subset of past actions, and 2) individual contributions of past actions to the reward are not discernible. \citet{cesa2018nonstochastic} generalized this setting to a case where the reward generated by an action is not simply revealed to the agent at a single instant in the future, but rather spreads over multiple  rounds.   Recent work along this line is also found in \citet{garg2019stochastic}, \citet{zhang2022gaussian}, and \citet{wang2021adaptive}. 
In this paper, we consider a contextual setting with, which is different from the above ones and poses new  challenges since each arm no longer has  a fixed reward distribution.


\subsubsection{Delayed rewards.} Bandits with delayed rewards are also related to our work. Stochastic linear bandits with random delayed rewards was studied in \citet{vernade2020linear}. \citet{li2019bandit} investigated the case where the delay is unknown. Generalized stochastic linear bandits with random delays were studied in \citet{zhou2019learning} and \citet{howson2022delayed}.  \citet{cella2020stochastic} and \citet{lancewicki2021stochastic} studied bandit problems with reward-dependent delays.  In \citet{lancewicki2021stochastic} and \citet{thune2019nonstochastic},  delays are allowed to be unrestricted. Arm-dependent delays in stochastic bandits are studied in \citet{gael2020stochastic}. Delays are also considered in adversarial bandits \citep{bistritz2019online,gyorgy2021adapting,zimmert2020optimal}. 
Recently, non-stochastic cooperative linear bandits with delays have also been studied \citep{ito2020delay,cesa2019delay}. In fact,  our setting captures  unknown fixed delays and also aggregated and anonymous delayed rewards. 

\subsubsection{Sparse parameters.} In sparse bandits, feature vectors can have large dimension $d$, but only a small subset, $s\ll d$, of them affect rewards. 
Early studies on sparse \textit{linear bandits} are found in \citet{carpentier2012bandit} and \citet{abbasi2012online}. Recent results studied both the data-poor and data-rich regimes, depending on whether the total horizon $T$ is less or larger than $d$. In the data-rich regime, \citet{lattimore2020bandit} proved a regret lower bound $\Omega(\sqrt{sdT})$. In the data-poor regime, \citet{hao2020high} showed a regret lower bound  $\Omega(s^{\frac{1}{3}}T^{\frac{2}{3}})$. A recent work used information-directed sampling techniques \citet{hao2021information}. Sparse \textit{contextual linear bandits} also receive increasing interests. \citet{kim2019doubly} proposed an algorithm that combines Lasso with doubly-robust techniques, and provided an upper bound $O(s\log(dT)\sqrt{T})$. An extended setting wherein each arm has its own parameter was studied in \citet{bastani2020online}, \citet{wang2018minimax}, where upper bounds $O(s^2\log^2(T))$ and $O(s^2\log(T))$ were shown, respectively. \citet{oh2021sparsity} proposed an exploration-free algorithm and obtained an upper bound $O(s^2\log(d)+s\sqrt{T\log(dT)})$. 
In \citet{ariu2022thresholded}, a thresholded Lasso algorithm is presented, resulting in an upper bound $O(s^2\log(d)+\sqrt{sT})$. In \citet{ren2020dynamic}, the dynamic batch learning approach was used and a upper bound $O(s\cdot{\rm polylog}(d)+\log (T) \sqrt{sT \log (d)})$ was obtained. In comparison, sparsity in our case results from the reward dependence structure.  As we will discuss in Sec.~\ref{challenges}, learning the dependence pattern is challenging since the measurements have an inherent circulant structure. 

\subsubsection{Online Convex Optimization (OCO).} Another line of research related to ours is OCO with memory where the losses depend on the past decisions taken from a convex set \citep{anava2015online,shi2020online,kumar2022online}.

\subsection{Contributions}

The contributions of this paper are summarized as follows:

1. We introduce a new contextual bandit model, motivated by realistic scenarios where rewards have a long-range and sparse dependence on prior actions and contexts. The problem of identifying the reward parameter and sparse delay pattern admits a special low-rank and sparse structure.

2. We propose two sample-efficient algorithms for the data-poor and data-rich regimes by leveraging sparsity prior. For the former, we prove a regret upper bound $O \big( d\sqrt{sT \log(dT)}+\min\{ q, T\}\big)$ that is adaptive to the reward dependence pattern described by $q$; for the latter, we obtain a regret upper bound $O(\sqrt{sdT \log(dT)})$. Note that neither of the bounds has polynominal dependence on the horizon $h$, enabling efficient learning across long horizons; and both are optimal in $T$ (up to logarithmic factors).


3. We make technical contributions to address temporal dependencies within data that has a block-Toeplitz/circulant matrix form. First, the seminal work by \citet{krahmer2014suprema} on Restricted Isometry Property (RIP) of circulant matrices assume context vectors have i.i.d.~entries. We generalize their result to milder concentrability conditions that allow dependencies. Second, we establish results that highlight the challenges of low-rank estimation unique to circulant measurements. In line with theory, numerical experiments demonstrate that our sparsity-based approach indeed outperforms low-rank ones.

\section{Problem Setting}

\textbf{Notation.}  Given $\x=[\x_1^\top,\dots,\x_h^\top]^\top \in \R^{dh}$ with each block $\x_i\in\R^d$, denote $\|\x\|_{2,1}^{(d)}:=\sum_{i=1}^h\|\x_i\|_2$; for $\bA \in \R^{m \times n}$, $\|\bA\|_\op :=\sup_{\|\x\|_2\le 1}\|\bA\x\|_2$ denotes its operator norm. Let $[n]=\{1,2,\dots,n\}$ for any integer $n$.  For any $\CS\subset [n]$, $\x_{\CS}$ denotes the sub-vector of $\x$ with entries indexed by $\CS$. Let $\AngBra{\cdot,\cdot}$ be the inner product; for $\bA$ and $\bB\in\R^{m\times n}$, $\AngBra{\bA,\bB}=\Tr(\bA^\top \bB)$. Let $\otimes$ be the Kronecker product. Given $\bA \in \R^{m\times p}$, it is said to satisfy RIP if there is $\delta\in(0,1)$ such that $(1-\delta)\|\x\|_2^2 \le \|\bA \x\|_2^2 \le (1+\delta)\|\x\|_2^2 $ holds for all $\x\in \R^p$; the smallest $\delta$ satisfying this inequality is called the RIP constant (see Appendix for more details).


\subsection{Stochastic Linear Contextual Bandits}
In this paper, we study a stochastic linear contextual bandit problem with  rewards that depend on past actions and contexts (see Fig.~\ref{conceptual} for an illustration). 
Let $K$ be the number of arms, and then the action set is $[K]$. At each round $t$, the agent observes $K$ context vectors,  $\{\x_{t,a}\in\R^d:a\in[K]\}$,  each associated with an arm and drawn i.i.d. from an unknown distribution $\nu$. It then selects an action $a_t\in[K]$ and receives a reward generated by
\begin{equation}\label{model:CB}
	r_t=\AngBra{\y_{t,a_t},\bt}+\eps_t,
\end{equation}
where
\begin{equation*}
	\y_{t,a_t}=\sum_{i=0}^{h-1}w_i\x_{t-i,a_{t-i}}, \text{ and } t \in [T].
\end{equation*}
Here $\bt\in \R^d$ is the coefficient vector, $\eps_t\in\R$ is additive noise that is zero-mean 1-sub-Gaussian. Particularly, the vector $\y_{t,a_t}$ is the \textit{filtered context}, determined by the weight vector $\w:=[w_0,w_1,\dots, w_{h-1}]^\top\in\R^h$ that describes how rewards depend on the past and current selected contexts (where $w_i\ge 0$). The range of the dependence $h$ can be very large, indicating that a reward can have a long-range contextual dependence. Assume $\x_{j,a_j}=0$ for $j\le 0$ since $a_j$'s in this case correspond to nonexistent actions. 

In this paper, we consider sparse contextual dependence, that is, the weight vector $\w$ is $s$-sparse (i.e., $\|\w\|_0\le s$) with $s\ll h$. This is particularly relevant to many realistic situations since often only a small number of past ``events'' matter.   As we mentioned before, this setting captures: a) bandits with unknown delays ($\w$ has only one non-zero entry and it is $1$-valued), and b) bandits with aggregated and anonymous rewards (all the non-zero entries of $\w$ are $1$-valued). 

The coefficient vector $\bt\in \R^d$ and the weight vector $\w \in \R^h$ are unknown. Without loss of generality, we assume that $\|\BMtheta\|_2\le 1$ and $\|\w\|_1\le 1$. We also made the mild boundedness assumption that $\x_{t,a}$ 
satisfies $\|\x_{t,a}\|_\infty \le 1$ for all $t\in[T]$ and $a\in[K]$.

The agent's objective is to maximize the cumulative reward over the course of $T$ rounds, or equivalently, to minimize the pseudo-regret defined as
\begin{equation}\label{Def:regret}
	R_T = \left [\sum_{t=1}^{T}\sum_{i=0}^{h-1}{w_i}\left( \AngBra{\x_{t,a^*_t},\bt}-\AngBra{\x_{t,a_t},\bt}\right) \right],
\end{equation}
where $a^*_t= \argmax_{a\in [K]} \AngBra{\x_{t,a},\BMtheta}$ 
defines the optimal action at round $t$. 
\begin{remark}
	The definition of the regret here is slightly different from simply summing up $\AngBra{\y_{t,a_t^*}-\y_{t,a_t},\bt}$. In fact, the two regret definitions are essentially the same. The reason is that taking an action, say $a_t$, gives the agent a total reward $\sum_{i=0}^{h-1} w_i \AngBra{\x_{t,a_t},\BMtheta}$ that spreads over the next $h$ rounds. Therefore, the agent can make decisions without knowing $\w$ if $\BMtheta$ is known as a prior: a greedy strategy seeking to maximize the instantaneous reward at each round maximizes the cumulative reward in the long run. 
	Further discussion on this point can be found in Appendix~\ref{def:regret}. \TiaQED
\end{remark}

\begin{remark}
	Although the only knowledge of $\BMtheta$ seems sufficient for our decision-making purpose,  learning $\BMtheta$ is actually challenging. This is because each reward can come in a composite manner, possibly consisting of the contributions from the latest $h$ actions.  Learning $\BMtheta$ requires to sort out the reward dependence structure. \TiaQED
\end{remark}


\subsection{Discussion of Challenges}\label{challenges}
Next, we discuss some technical challenges inherent in our problem. For this purpose, we denote $\BMxi_t=\x_{t,a_t}$ as the chosen context for each $t$ to make notation concise.

\subsubsection{Circulant design matrices in low-rank matrix recovery.}

Let $\bZ_{t}=[\BMxi_{t},\BMxi_{t-1},\dots,\BMxi_{t-h+1}]\in \R^{d\times h}$, and then \eqref{model:CB} can be rewritten as
\begin{equation}\label{model:low_rank}
	r_t=\AngBra{\bZ_{t},\BMtheta \w^\top}+\varepsilon_t. 
\end{equation}
At first glance, it seems  that the problem reduces to reconstructing the rank-$1$ and sparse matrix $\BMtheta \w^\top \in \R^{d\times h}$, and classic techniques for low-rank (and sparse) matrix recovery can be applied (e.g., \citet{richard2012estimation}, \citet{oymak2015simultaneously} \citet{davenport2016overview},  and \citet[Chap. 10]{wainwright2019high}). However, we find this is \textit{not} true due to the \textit{Toeplitz/circulant structure} of the design matrices $Z_t$.  The following lemma shows that circulant matrices, even if its first row has i.i.d. entries, do not obey RIP for rank-1 matrices with exponentially high probability (see Appendix~\ref{Failure_Matrix_RIP} for more details). 
\begin{lemma}\label{norip} 
	Let ${\bf{C}}\in\R^{n\times p}$ be a subsampled circulant matrix whose first row has i.i.d. Gaussian entries (normalized properly) and $n\leq p$. For any $\delta\leq 1$, there exists a constant $c<1$ such that with probability at least $1-c^p$, ${\bf{C}}$ does not obey RIP over rank-$1$ matrices in $\R^{p_1\times p_2}$ with $p=p_1p_2$.
\end{lemma}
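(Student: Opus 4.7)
The plan is to exhibit an explicit family of real rank-$1$ matrices on which the subsampled circulant action concentrates its energy in a small number of Fourier modes of the underlying random vector $c$, and then to use approximate independence across modes to boost a per-direction constant failure probability to the stated $1-c^p$ bound.

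The first step is to diagonalize the full circulant $\widetilde{\mathbf{C}}=F^{*}DF$ by the unitary $p$-point DFT $F$, so that $\|\widetilde{\mathbf{C}}x\|_2^2=\sum_k p|\hat c_k|^2|(Fx)_k|^2$ for any $x\in\R^p$, and for the subsampled $\mathbf{C}=S\widetilde{\mathbf{C}}$ this is rescaled by $n/p$ in expectation over the row selection $S$. Under the normalization $c_j\sim\CN(0,1/n)$, each $n|\hat c_k|^2$ is $\mathrm{Exp}(1)$-distributed (or $\chi^2_1$ for $k\in\{0,p/2\}$), with mean $1$ but constant variance. Hence if one can find a real rank-$1$ matrix $M$ whose vectorization is $p$-point DFT-supported on a single conjugate pair of frequencies, the ratio $\|\mathbf{C}\mathrm{vec}(M)\|_2^2/\|M\|_F^2$ collapses to a single exponential variable that falls outside $(1-\delta,1+\delta)$ with a positive constant probability $\alpha(\delta)$ for any $\delta<1$.

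The concrete construction is $M_k:=u_k\mathbf 1_{p_2}^{\top}$ for $k=1,\ldots,p_1/2$ with $u_{k,i}=\cos(2\pi k(i-1)/p_1)$ (and a companion $\sin$ version). A Kronecker/Dirichlet calculation shows $\mathrm{vec}(M_k)=\mathbf 1_{p_2}\otimes u_k$ is $p_1$-periodic in $\R^p$, so its $p$-point DFT is supported on multiples of $p_2$, and the sinusoidal structure of $u_k$ isolates the conjugate pair $\{kp_2,\,p-kp_2\}$. By Parseval, $\|\mathbf{C}\mathrm{vec}(M_k)\|_2^2/\|M_k\|_F^2=n|\hat c_{kp_2}|^2$ up to an $O(n^{-1/2})$ fluctuation from the row subsampling. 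The variables $|\hat c_{kp_2}|^2$ for distinct $k$ live in disjoint real/imaginary coordinate pairs of $\hat c$ and are mutually independent. A symmetric construction using $\mathbf 1_{p_1}v_l^{\top}$ with real $\R^{p_2}$-Fourier modes $v_l$ contributes further independent failure events at disjoint frequencies $\{l,\,p-l\}$; an extension using those tensor products of $\R^{p_1}$ and $\R^{p_2}$ real Fourier modes whose real combinations remain genuinely rank-$1$ is aimed at pushing the family size from $\Theta(p_1+p_2)$ up to $\Theta(p)$. A union bound in complement form then yields $\Pr(\text{RIP holds})\le(1-\alpha(\delta))^{\Theta(p)}=c^p$ for some $c<1$ depending only on $\delta$.

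The hard part is precisely this amplification step. Real rank-$1$ matrices form a restrictive class: a pure $1$D DFT basis vector of $\R^p$, when reshaped to $p_1\times p_2$, is only complex rank-$1$ and generically becomes rank-$2$ over $\R$ after taking real or imaginary parts, so one cannot naively enumerate the $p$ Fourier modes. The $\mathbf 1$-decimation constructions above preserve real rank-$1$ exactly because the companion factor is the all-ones vector, and they deliver $\Theta(p_1+p_2)$ clean single-exponential events; closing the gap to $\Theta(p)$ approximately-independent directions whose circulant measurements probe fresh Fourier content of $c$ is the main technical obstacle. A natural complementary route that sidesteps this book-keeping is a dimension-counting argument combined with a Gaussian-width lower bound for the rank-$1$ Segre variety under circulant measurements: for $n$ bounded away from $p$, $\ker(\mathbf{C})$ is a subspace of codimension $n$ and intersects the $(p_1+p_2-1)$-dimensional rank-$1$ cone, and one would show this intersection produces a rank-$1$ matrix near the kernel with probability exponential in $p$, yielding the claimed bound directly.
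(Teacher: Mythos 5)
Your core mechanism --- diagonalize the circulant by the DFT so that its action on a Fourier-aligned rank-$1$ test matrix collapses to a single (roughly) exponential random variable, then boost a constant per-mode failure probability by independence across modes --- is exactly the engine of the paper's proof. The paper, however, takes the complex route: it draws the generating vector from $\CN_\CC(0,1)$, uses the conjugated DFT rows $f_k^*/\sqrt p$ themselves as the test vectors (each reshapes to a rank-$1$ matrix $\va\vb^\top$ via the Kronecker identity $f_k=\va\otimes\vb$), and exploits the exact identity $\bar{\bC}\,f_k^*/\sqrt p = d_k\va_k$ with $\|\va_k\|_2=1$, so that $\|\bar{\bC}f_k^*/\sqrt p\|_2=|d_k|$ with no subsampling error at all. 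This yields $p$ genuinely independent events $\{\,||d_k|^2-1|>1\,\}$ and hence the full $1-c^p$ rate in one line.

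The genuine gap in your write-up is the one you yourself flag: your real constructions $u_k\mathbf 1_{p_2}^\top$ and $\mathbf 1_{p_1}v_l^\top$ supply only $\Theta(p_1+p_2)$ independent single-mode events, which gives at best $\Pr(\text{RIP holds})\le(1-\alpha)^{\Theta(p_1+p_2)}$; for $p_1\asymp p_2\asymp\sqrt p$ this is $e^{-\Theta(\sqrt p)}$, not $c^p$. The ``extension using tensor products of real Fourier modes'' that would close the gap to $\Theta(p)$ directions is precisely the step you admit you cannot carry out, and the dimension-counting/Gaussian-width alternative at the end is only a sketch, so the stated bound is not established. Two smaller issues: (i) the $O(n^{-1/2})$ control of the cross term between the two subsampled DFT columns at the conjugate pair $\{kp_2,\,p-kp_2\}$ is asserted rather than proved, and the Dirichlet-kernel inner product $\frac1n\sum_{j<n}\omega^{j\Delta}$ can be of order $1$ when $\Delta$ is small modulo $p$, so some frequencies would have to be excluded; (ii) on the positive side, your observation that the real and imaginary parts of a reshaped DFT row are generically rank-$2$ over $\R$ is correct and nontrivial --- it is exactly the reason the paper's argument, as written, really lives over $\CC$ (complex Gaussian entries, complex rank-$1$ matrices), and your attempt to repair this over $\R$ is what costs you the exponent.
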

We further provide numerical experiments in Fig.~\ref{low_rank} to show that circulant measurements are indeed problematic while dealing with low-rank matrix recovery.



\begin{figure}[t]
	\centering
	\includegraphics[scale=0.23]{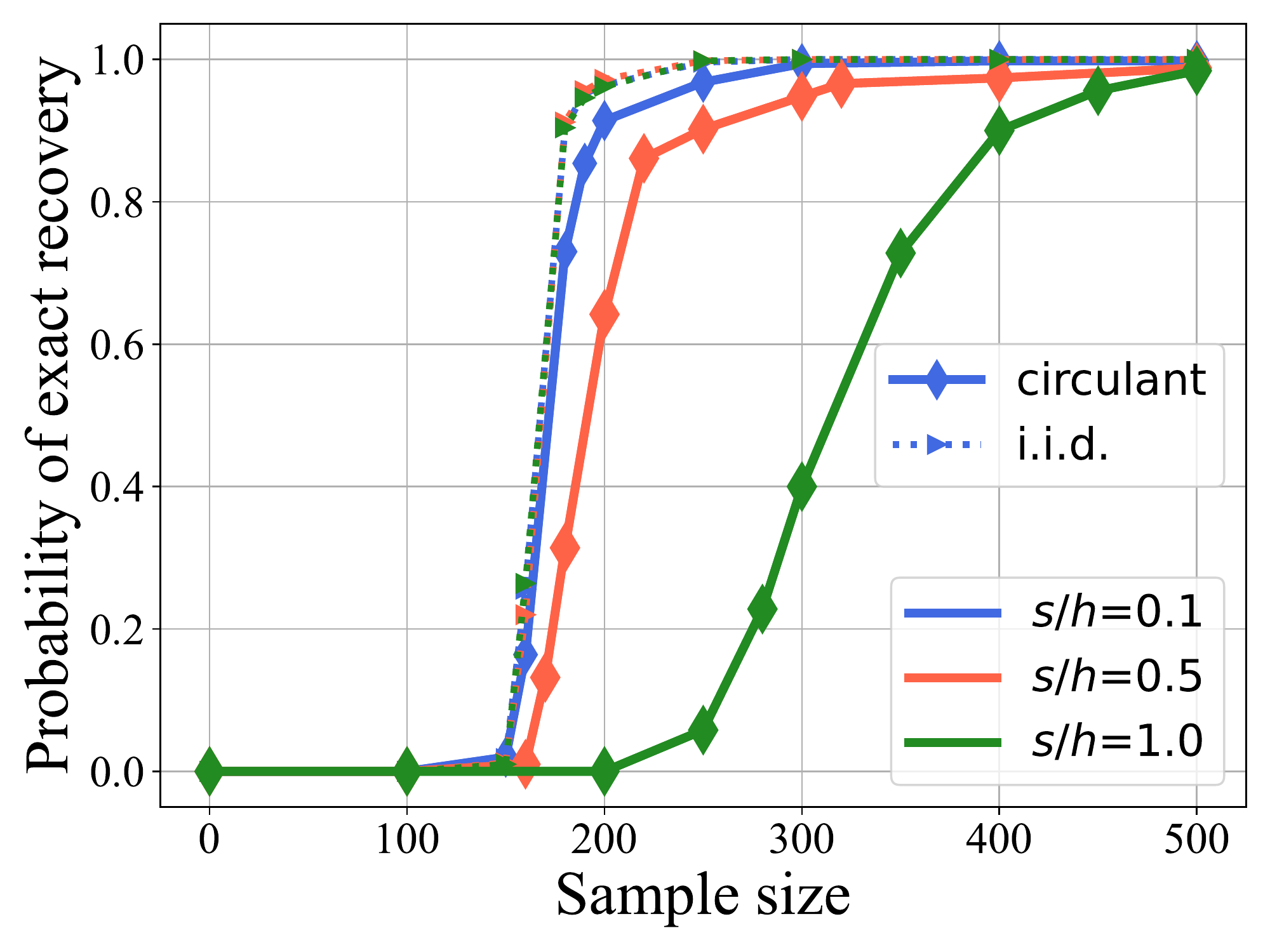}
	\caption{Probability of exact recovery  of the matrix $\Phi =  \BMtheta \w^\top  \in \R^{d\times h}$ from the noiseless measurement $\y=\AngBra{\bZ,\Phi}$ using low-rank recovery ($d=10$ and $h=100$). We compare two types of design matrices: 1) $\bZ$ with i.i.d. entries, and 2) circulant $\bZ$ generated by an i.i.d. vector. Different instances, where $\w$ has different sparsity (quantified by $s$), are considered. The experiment shows that: 1) the same amount of data is needed for all instances if $\bZ$ has  i.i.d. entries, but 2) the number of samples needed varies significantly when it comes to circulant $\bZ$. These observations indicate that for circulant measurements, the amount of data needed to recover a low-rank matrix may not simply depend on the rank, which is substantially different from i.i.d. measurements.}
	\label{low_rank}
	\vspace{-12pt}
\end{figure}

These findings indicate that tackling our problem via low-rank matrix estimation may not work. Therefore, we resort to another technique-- \textit{sparsity estimation}--  by leveraging the sparse structure in the reward dependence pattern. First, let $\va _t =\{a_t,a_{t-1},\dots,a_1\}$ denote the sequence of past actions, $\z_{\va _t}=[\BMxi_t^\top,\BMxi_{t-1}^\top, \dots,\BMxi_{t-h+1}^\top]^\top$, and $\BMphi=\w \otimes \bt \in \R^{dh}$. Then, \eqref{model:CB} can be rewritten as
\begin{equation}\label{CB:high-d}
	r_t= \AngBra{\z_{\va _t}, \BMphi}+\eps_t.
\end{equation}
Since $\w$ is $s$-sparse, reconstructing $\BMtheta$ and $\w$ becomes to estimate the $s$-block-sparse vector $\BMphi$.  

Denote $\br_t=[r_1,\dots,r_t]^\top$ and $\bEps_t=[\eps_1,\dots,\eps_t]^\top$, and it follows from \eqref{CB:high-d} that 
\begin{equation}\label{cic:equation}
	\br_t = \scalemath{0.75}{\begin{bmatrix}
			\BMxi_{1}^\top&0&\cdots&0\\
			\BMxi_{2}^\top&\BMxi_{1}^\top&\cdots&0\\
			\vdots&\vdots&\ddots&\vdots\\
			\BMxi_{h}^\top&\BMxi_{h-1}^\top&\cdots&\BMxi_{1}^\top\\
			\BMxi_{h+1}^\top&\BMxi_{h}^\top&\cdots&\BMxi_{2}^\top\\
			\vdots&\vdots&\ddots&\vdots\\
			\BMxi_{t}^\top&\BMxi_{t-1}^\top&\cdots&\BMxi_{t-h+1}^\top
	\end{bmatrix}} \BMphi +\bEps_t := \Xi_t \BMphi+ \bEps_t.
\end{equation}
One can observe that the design matrix $\Xi_t$ above also has a \textit{Toeplitz/circulant} structure. Learning the block-sparse $\BMphi$ using this special form of design matrices has some other challenges, which we discuss below. 

\subsubsection{Circulant matrices with dependent entries.} 

Estimating $\BMphi$ is a sparse regression problem. RIP and related restricted eigenvalue condition (REC) are widely used for such problems \citep{candes2007dantzig,bickel2009simultaneous}. Earlier studies show that sub-sampled circulant matrices whose first row is i.i.d. sub-Gaussian  satisfy RIP for $s$-sparse vectors if there are at least $\tilde \Omega(s\log^2(s))$ samples \citep{krahmer2014suprema}. In our case, the circulant matrix is generated by random vectors with \textit{dependent} entries (i.e., entries in each $\BMxi_t$ may be dependent). The new challenge is: how many samples are needed for such circulant measurements to satisfy RIP/REC? 

\subsection{Technical Result}\label{sec:assump}

We first present a technical result on RIP that paves the way for the analysis of our bandit problem, which is also of independent interest (see Appendix~Part I for the proof). 

\begin{theorem}\label{RIP:informal}
	Let $\BMxi_1,\dots,\BMxi_n\in \R^d$ be independent sub-Gaussian isotropic random vectors. Assume that each $\BMxi_i$ satisfies the Hanson-Wright inequality (HWI)
	\begin{align}\label{ineq:HW}
		{\Pr \left[|\BMxi_i^\top  \bA \BMxi_i -\BE (\BMxi_i^\top  \bA \BMxi_i)|\ge \eta \right]}
		{\le 2 \exp \big(- \frac{1}{c} \min \big\{ \frac{\eta^2}{k^4 \|\bA\|_F^2},\frac{\eta}{k^2 \|\bA\|_\op}\big\}\big), ~~~~~\forall \eta>0,}
	\end{align}
	for any positive semi-definite matrix $\bA\in \R^{d\times d}$, where $k$ is a constant, and $c$ is an absolute constant. Let $\BMXi\in\R^{m\times nd}$ be a matrix formed by sub-sampling \textit{any} $m$ rows from the block-circulant matrix:
	\begin{equation*}
		\bC= \scalemath{0.9}{\begin{bmatrix}
				\BMxi_{n}^\top&\BMxi_{n-1}^\top&\cdots&\BMxi_{1}^\top\\
				\BMxi_{1}^\top&\BMxi_{n}^\top&\cdots&\BMxi_{2}^\top\\
				\vdots&\vdots&\ddots&\vdots\\
				\BMxi_{n-1}^\top&\BMxi_{n-2}^\top&\cdots&\BMxi_{n}^\top
		\end{bmatrix}}.
	\end{equation*} 
	Then, for all $s$-sparse vectors, the restricted isometry constant of $\Xi$, denoted by $\delta_s$,  satisfies $\delta_s\le \delta$ if $m \ge c_1 \delta^{-2}s \log^2 (s) \log^2(nd)$ for some constant $c_1$. 
\end{theorem}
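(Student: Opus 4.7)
My plan is to follow the Krahmer--Mendelson--Rauhut (KMR) roadmap for partial circulant matrices with an i.i.d.\ sub-Gaussian generator, but to re-open their argument at the one place where the i.i.d.\ assumption is actually used---namely a Hanson--Wright inequality for the generator---and replace it with the assumed bound \eqref{ineq:HW}. The first step is a structural reformulation. Because $\BMXi$ is a sub-sampling of the block-circulant matrix $\bC$ and depends linearly on $\BMxi:=[\BMxi_1^\top,\dots,\BMxi_n^\top]^\top\in\R^{nd}$, the vector $\BMXi x$ is linear in $\BMxi$ for each fixed $x$, so one can write $\BMXi x = V_x\BMxi$ for a matrix $V_x\in\R^{m\times nd}$ whose rows are circular rearrangements of $x$ restricted to the sampled indices. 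Writing $D_s$ for the $s$-sparse unit sphere in $\R^{nd}$, the restricted isometry constant becomes
\begin{equation*}
\delta_s = \sup_{x\in D_s}\bigl|\,\|V_x\BMxi\|_2^2 - \BE\|V_x\BMxi\|_2^2\,\bigr|,
\end{equation*}
i.e.\ a supremum of a chaos process over the matrix class $\CV:=\{V_x:x\in D_s\}$.

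The next step is to lift the block-wise Hanson--Wright hypothesis \eqref{ineq:HW} to a Hanson--Wright bound for the full vector $\BMxi\in\R^{nd}$. Using independence across blocks, for any symmetric $A\in\R^{nd\times nd}$ one decomposes
\begin{equation*}
\BMxi^\top A\BMxi-\BE[\BMxi^\top A\BMxi] = \sum_{i=1}^n\bigl(\BMxi_i^\top A_{ii}\BMxi_i-\BE[\BMxi_i^\top A_{ii}\BMxi_i]\bigr) + 2\sum_{i<j}\BMxi_i^\top A_{ij}\BMxi_j.
\end{equation*}
The diagonal piece is a sum of independent centred Bernstein-type variables whose variance and $\psi_1$ parameters aggregate via $\sum_i\|A_{ii}\|_F^2\le\|A\|_F^2$ and $\max_i\|A_{ii}\|_{op}\le\|A\|_{op}$. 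The off-diagonal decoupled chaos is handled in the standard way: a decoupling inequality replaces it by $\sum_{i<j}\tilde\BMxi_i^\top A_{ij}\BMxi_j$ with an independent copy $\tilde\BMxi$, after which conditioning turns the inner sum into a sub-Gaussian linear form with the correct Frobenius/operator splits. The upshot is a Hanson--Wright bound for $\BMxi$ with constants of the same order as in \eqref{ineq:HW}.

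With HWI available for $\BMxi$, I would then invoke the KMR supremum-of-chaos tail bound
\begin{equation*}
\BP\Bigl[\sup_{V\in\CV}\bigl|\|V\BMxi\|_2^2-\BE\|V\BMxi\|_2^2\bigr|\ge c_1 E+t\Bigr]\le 2\exp\bigl(-c_2\min(t^2/\sigma^2,\,t/U)\bigr),
\end{equation*}
with $E,\sigma,U$ built from $\gamma_2(\CV,\|\cdot\|_{op})$, $d_F(\CV):=\sup_{V\in\CV}\|V\|_F$ and $d_{op}(\CV):=\sup_{V\in\CV}\|V\|_{op}$; the proof of this tail bound is a deterministic generic-chaining argument whose only probabilistic input is HWI for the generator, so it transfers to our setting after the previous step. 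It remains to estimate the three complexity parameters. The bound $d_F(\CV)\le\sqrt{m}$ is immediate from $\|V_x\|_F^2\le m\|x\|_2^2$; because convolution is diagonalised by the DFT, $\|V_x\|_{op}$ is controlled by an $\ell_\infty$-Fourier norm of $s$-sparse vectors, yielding $d_{op}(\CV)\lesssim\sqrt{s\log(nd)}$ by a standard Rudelson--Vershynin sparse-Fourier argument; and $\gamma_2(\CV,\|\cdot\|_{op})\lesssim\sqrt{s}\log(s)\log(nd)$ follows from Dudley's integral over covering numbers of $s$-sparse unit vectors in $\R^{nd}$, exactly as in KMR. Substituting into the chaos inequality and asking for deviation below $\delta$ produces the claimed $m\gtrsim\delta^{-2}s\log^2(s)\log^2(nd)$.

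The main obstacle is the second step: one must verify that the block constant $k$ in \eqref{ineq:HW} propagates through the concatenation with no hidden dependence on $n$ or $d$, and in particular that the decoupled off-diagonal chaos inherits the correct $\|A\|_F$-versus-$\|A\|_{op}$ split. Once this structural extension is secured, the remaining steps essentially reproduce the KMR argument, with the arbitrariness of the row sub-sampling harmlessly absorbed into $\CV$ (it only shrinks the class and its complexity parameters), and the block rather than scalar circulant structure absorbed via the identification $\R^{nd}\cong\R^d\otimes\R^n$ with the circular shift acting on the second factor.
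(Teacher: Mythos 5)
Your overall route is the paper's: rewrite $\BMXi \va = V_{\va}\BMxi$ so that $\delta_s$ becomes the supremum of a chaos process over the class of matrices generated by $s$-sparse unit vectors, split the quadratic form into block-diagonal and off-block-diagonal contributions, decouple the latter and run the Krahmer--Mendelson--Rauhut chaining with $d_F\lesssim 1$, $d_{\op}\lesssim\sqrt{s/m}$ and $\gamma_2\lesssim\sqrt{s/m}\,\log(s)\log(nd)$ (after normalization), and invoke the Hanson--Wright hypothesis for what remains. Your complexity accounting and the resulting threshold $m\gtrsim\delta^{-2}s\log^2(s)\log^2(nd)$ match the paper's.

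The step that would fail as written is the hand-off in your third paragraph: a \emph{pointwise} Hanson--Wright inequality for the concatenated vector $\BMxi\in\R^{nd}$ is not the ``only probabilistic input'' to the KMR supremum bound, and establishing it does not let you cite that bound as a black box. The uniform statement needs (i) a decoupling inequality that holds \emph{inside the supremum} for vectors with independent blocks but dependent within-block entries --- the paper proves this separately as a block generalization of the scalar decoupling inequality --- and (ii) sub-Gaussian increment control for the chaining, which comes from sub-Gaussianity of the blocks as vectors, not from HWI. If you instead tried to pass from the pointwise HWI to uniformity by a union bound over a net of the $s$-sparse sphere, the net has $\exp(\Theta(s\log(nd)))$ points while the Frobenius parameter of $V_{\va}^\top V_{\va}$ is only $O(\sqrt{s/m})$, which lands you at $m\gtrsim \delta^{-2}s^2\log(nd)$ and loses the linear-in-$s$ rate that the chaining is there to deliver. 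Relatedly, your step 2 proves more than is needed: the paper applies the HWI hypothesis only to block-diagonal matrices $\Blkdiag(\bP_1,\dots,\bP_n)$, where it reduces to Bernstein's inequality across the $n$ independent blocks combined with a union bound over an $\|\cdot\|_F$-net; the off-block-diagonal part is never touched by HWI and is absorbed entirely into the decoupled chaos. So the full-vector HWI you propose is both unnecessary for the argument and, on its own, insufficient to yield the uniform bound.
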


\begin{remark}
	Although we just need to reconstruct a block-sparse vector in the bandit problem, this theorem applies to \textit{general sparse} vectors. The assumption \eqref{ineq:HW} holds for many random vectors, e.g., sub-Gaussian vectors with independent entries and random vectors that obey convex concentration property \cite{adamczak2015note}. 
\end{remark}

\section{Algorithms and Main Results}\label{main_results}
Next, we present some algorithms for the bandit problem described in \eqref{model:CB}, taking into account data-poor and data-rich regimes, accompanied with their regret bounds. First, we make the following assumption.


\begin{assumption}\label{ass:cv_concentr}
	We assume that the distribution $\nu$ is such that for all $t$: (a) the context vectors, $\x_{t,a}, a\in[K]$, are i.i.d., (b) $\BE_\x=\BE[\frac{1}{K}\sum_{a=1}^K\x_{t,a} \x_{t,a} ^\top]$ satisfies $\lambda_{\min} (E_\x) \ge \sigma^2$ for some $\sigma$, and (c) $\x_{t,a}$ satisfies HWI given by \eqref{ineq:HW}. 
\end{assumption}

\begin{remark}\label{mention:weaker}
	If we let $\xi_t$ be a random vector deterministically chosen from the set $\{x_{a,t},a\in[K] \}$, Assumption~\ref{ass:cv_concentr} ensures that $\xi_t$ also satisfies the Hanson-Wright inequality (see Appendix~\ref{HW:chosen} for the proof). We will use this property with Theorem~\ref{RIP:informal} to analyze our bandit algorithms. 
	\TiaQED
	
	
\end{remark}

\subsection{Data-Poor Regime}\label{sec:poor}
First, we consider the situation where the dimension of the weight vector $\w$ is larger than the number of rounds (i.e., $T< h$). 
In this data-poor regime, one can  observe from \eqref{cic:equation} that it is impossible to reconstruct $\BMphi$. Fortunately, it is not necessary to completely learn $\BMphi$ to guide the decision-making; instead, a good estimate of $\bt$ is sufficient (see the definition of regret in \eqref{Def:regret} for the reason). Thus, we propose the following approach to \textit{partially} and \textit{gradually} learn $\BMphi$ such that  $\bt$ can be estimated and exploited at an early stage. 

\begin{algorithm}[t]
	\caption{Doubling Lasso}
	\label{alg:initial}
	\begin{algorithmic}[1]
		\State \textbf{Input:}  parameter $L$, the doubling sequence $\{T_i\}$ with $T_i=4(2^{i}-1)L$ [see Fig.~\ref{doubling} (a)], $\hat \BMtheta_0=0$.
		\For{$t=1: T$}
		\State {Observe context vectors $\{\x_{t,a}:a\in[K]\}$}.
		\State Take the greedy action $a_t\in \sup_{a\in[K]}\AngBra{\x_{t,a},\hat \BMtheta_{i-1}}$, and receive a reward $r_t$.
		\If{$t= T_i$} \\ \hfill {\color{mygray} \footnotesize \# \textit{end of the $i$th epoch, estimate a new  $\hat \BMtheta$} }
		\State Calculate $\hat \phi_{2^{i-1}L}$ according to the Lasso \eqref{Lasso:initial}.
		\State Let $\hat \BMtheta_i$ be the singular vector of $\hat \Phi_{\CS_i}$ associated with the largest singular value. 
		\EndIf
		\EndFor
	\end{algorithmic}
\end{algorithm}

\begin{approach}\label{idea:1}
	Recall that $\BMphi=\w \otimes \BMtheta$ with $\w\in\R^h$. For any integer $k\le h$, let $\w_{[k]}=[w_1,\dots,w_k]^\top$ and $\BMphi_{\CK}=\w_{[k]} \otimes \BMtheta\in \R^{kd}$.  Then, if one has learned $\hat \BMphi_\CK$ as an estimate of $\phi_\CK$, $\BMtheta$ can be estimated in the following steps:
	
	(a1) Transform vector $\hat \BMphi_{\CK}$ into a matrix $\hat \BMPhi_{\CK} \in \R^{d \times k}$ (the $i$th column of $\hat \Phi_{\CK} $ is the $i$th block of $\hat \BMphi_{\CK}$ with size $d$.
	
	(a2) Let $\hat \BMtheta$ be the left singular vector of $\hat \Phi_\CK$ associated with the largest singular value\footnote{We point out that what we estimate in this step is not exactly $\bt$, but rather its direction ${\bt}/{\|\bt\|}$. As it turns out later, a good estimate of the direction ensures a small angle between $\hat \bt$ and $\bt$, and is thus sufficient for good decision-making.}. 	\TiaQED
\end{approach}

With this approach, we use a doubling trick to design our algorithm (see Algorithm~\ref{alg:initial} and Fig.~\ref{doubling}). We select a constant $L$ satisfying $s\le L<h$  and define a sequence of sets $\CS_1,\CS_2,\dots,\CS_m$ with growing number of elements, where $\CS_i=[2^{i-1}Ld]$. Then, we aim to estimate
\begin{align*}
	{\BMphi_{\CS_1},\BMphi_{\CS_2},\BMphi_{\CS_3},\dots, \BMphi_{\CS_m}},
\end{align*}
in \textit{sequential epochs}, where $\BMphi_{\CS_i} \in \R^{2^{i-1}Ld}$ contains the first $2^{i-1}L$ blocks of $\BMphi$ and $m$ is the largest integer such that $2^{m-1}L\le h$. The main idea is to learn a small portion of $\BMphi$ when there is little data; as more data is collected, we learn a progressively larger portion.  

At each epoch $i$, the following  greedy action is repeatedly taken for $2^{i+1} L$  times (\textit{doubling trick}, see Fig.~\ref{doubling}~(a)):
\begin{equation} \label{greedy:action}
	{a_t \in \argmax\nolimits_{a\in [K]} \AngBra{\x_{t,a},\hat \BMtheta_{{i-1}}}, ~~~~~~i\ge 1},
\end{equation}
where $\hat \BMtheta_{{i-1}}$ is the estimate of $\BMtheta$ at the $(i-1)$th epoch ($\hat \BMtheta_{0}=0$).
If there are more than one greedy actions, the agent uniformly randomly picks one. Then, we collect $2^{i+1} L$ data points generated by \eqref{cic:equation}. However, we only use half of them to learn $\hat \BMtheta_{i}$. Specifically, dividing the data into four $2^{i-1}L$-dimensional chucks, we use the second and the fourth chucks (see Fig.~\ref{doubling} (a)). From \eqref{cic:equation}, the rewards in these two chucks are respectively generated by
\begin{equation}\label{data:half}
	{\br'{[i]}=\BMXi'{[i]}\BMphi +\bEps'{[i]},\hspace{10pt}
		\br''{[i]}=\BMXi''{[i]}\BMphi +\bEps''{[i]}},
\end{equation}
where $\br'{[i]},\br''{[i]} \in \R^{2^{i-1}L},\BMXi'[i],\BMXi''[i]\in \R^{2^{i-1}L \times dh}$, and $\bEps'[i],\bEps''[i] \in \R^{2^{i-1}L}$ are the corresponding reward vectors, context matrices, and noise vectors.

\begin{figure}[t]
	\centering
	\includegraphics[scale=1.5]{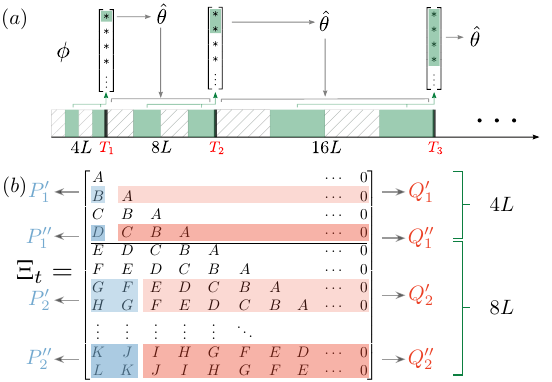}
	\caption{Illustration of Doubling Lasso. (a) In each epoch $i$, we play the greedy action in \eqref{greedy:action} for $2^{i+1}L$ rounds ($T_i$'s are the epochs' ends). Then, we use the second and fourth quarters of the collected data (green areas) to learn the first $2^{i-1}L$ block of $\BMphi$ and estimate $\BMtheta$ subsequently (see Approach~\ref{idea:1}). The learned $\hat\BMtheta$ is used for decision-making in the next epoch that has double length. (b) Illustration of how matrices in \eqref{matrices} are defined. Here, the matrix $\BMXi_t$ defined in \eqref{cic:equation} is represented in a form of $(L\times Ld)$-dimensional block matrices.
	}  
	\label{doubling}
\end{figure}

Rewrite $\BMphi=[\BMphi_{\CS_i} ^\top,  \BMphi_{\bar\CS_i}^\top]^\top$, where $\BMphi_{\CS_i}\in \R^{2^{i-1}Ld}$ is what we want to learn. Then, one can rewrite \eqref{data:half} into 
\begin{equation}\label{matrices}
	{\begin{bmatrix}
			\br'[i]\\
			\br''[i]
		\end{bmatrix}=\begin{bmatrix}
			\bP'_i& \bQ'_i\\
			\bP''_i& \bQ''_i
		\end{bmatrix} \begin{bmatrix}
			\BMphi_{\CS_i}\\
			\BMphi_{\bar \CS_i}
		\end{bmatrix} +\begin{bmatrix}
			\bEps'{[i]}\\
			\bEps''{[i]}
	\end{bmatrix}},
\end{equation}
where $\br'[i]\in\R^{2^{i-1}L}$, and $\BMXi'[i]=[\bP'_i, \bQ'_i]$ and $\BMXi''[i]=[\bP''_i, \bQ''_i]$ (see Fig.~\ref{doubling} (b) for an illustration). To learn $\BMphi_{\CS_i}$, let $\bar \br[i]=\br''[i]-\br'[i]$, $\bar \bP_i=\bP''_i-\bP'_i$, $\bar \bQ_i=\bQ''_i-\bQ'_i$, and $\bar \bEps[i]=\bEps''[i]-\bEps'[i]$, and then we have
\begin{equation}\label{parti_measure}
	{\bar \br{[i]}= \bar\bP_i \BMphi_{\CS_i} +\bar\bQ_i \BMphi_{\bar \CS_i}+\bar\bEps[i]:=\bar\bP_i \BMphi_{\CS_i} +\bNeps[i]},
\end{equation}
where the $\bar\bQ_i \BMphi_{\bar \CS_i}+\bar\bEps[i]$ is taken as the new noise $\bNeps[i]$.

Then, $\BMphi_{\CS_i}$ (which is at most $s$-block-sparse since $\BMphi$ is) is estimated by solving the block-sparsity-recovery Lasso:
\begin{equation}\label{Lasso:initial}
	{\hat \BMphi_{\CS_i}= \argmin _{\tilde \BMphi\in \R^{2^{i-1}L d}}\Big(\frac{1}{2^i L} \left\|\bar \bP_i \tilde \BMphi - \bar \br[i] \right\|^2_2 + \lambda_i\|\tilde \BMphi\|_{2,1}^{(d)} \Big)},
\end{equation}
where the regularization parameter is selected as
\begin{equation}
	{\lambda_i = cd\sqrt{\frac{2\log(2^i d L/\gamma)}{2^{i-1} L}}}
\end{equation}
for some $c>0$.
Subsequently, we use Approach~\ref{idea:1} to estimate $\BMtheta$. 
The algorithm is presented in Algorithm~\ref{alg:initial}. The following theorem provides a regret upper bound for it. 

\begin{theorem}\label{Th:data_poor}
	Consider the stochastic contextual linear bandit model with long-horizon rewards described in \eqref{model:CB}. In Algorithm~\ref{alg:initial}, choose
	$
	L= c sd \log^2 (sd) \log^2(hd),
	$
	where $c>0$ is a constant. When $T< h$, the regret satisfies
	\begin{equation}\label{upp:bound:poor}
		{R_T =  O \Big( d\sqrt{sT \log(dT)}+\min\{ q(\w), T\} \Big)},
	\end{equation}
	where $q(\w)$ is a function of the weight vector $\w$ that describes how the weights in $\w$ are distributed.  Specifically, $q(\w):=h^{\alpha (\mu)}$, where $\mu\in(0,1)$ and $\alpha(\mu)= \inf_{\alpha\in[0,1] }\|\w_{q(\w)}\|_2\ge \mu $ with $\w_{q(\w)}:=\{w_1,w_2,\dots,w_{\lceil q(\w) \rceil}\}$ and ${1}/{\mu}=\Theta(1)$.
\end{theorem}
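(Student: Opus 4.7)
\vspace{1ex}
\noindent\textbf{Proof proposal.} The plan is to decompose the total regret by epoch, bound the per-epoch Lasso error in \eqref{Lasso:initial} using Theorem~\ref{RIP:informal}, translate this error into an angular error for $\hat\bt_i$ via a rank-one perturbation argument, and finally sum the resulting greedy regrets with a telescoping/geometric calculation.

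First I would analyze a single epoch $i$. The effective regression \eqref{parti_measure} has ``signal'' design $\bar\bP_i$ and noise $\bNeps[i]=\bar\bQ_i\BMphi_{\bar\CS_i}+\bar\bEps[i]$. To invoke Theorem~\ref{RIP:informal} for $\bar\bP_i$, I would verify that each row is formed from a chosen context $\BMxi_t=\x_{t,a_t}$ satisfying the Hanson--Wright inequality (guaranteed by Assumption~\ref{ass:cv_concentr} together with Remark~\ref{mention:weaker}), and that (by the data-splitting shown in Fig.~\ref{doubling}(a)) the rows of $\bar\bP_i$ are independent of the estimator $\hat\bt_{i-1}$ used for action selection within the epoch. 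With the choice $L=csd\log^2(sd)\log^2(hd)$, Theorem~\ref{RIP:informal} then yields block-RIP of $\bar\bP_i/\sqrt{2^{i-1}L}$ on $s$-block-sparse vectors with high probability, and a union bound gives this uniformly across the $O(\log T)$ epochs. The ``noise'' $\bar\bEps[i]$ is zero-mean sub-Gaussian, while the bias $\bar\bQ_i\BMphi_{\bar\CS_i}$ has operator-norm-controlled effect of order $\|\BMphi_{\bar\CS_i}\|_2=\|\w_{\bar\CS_i}\|_2$ on the Lasso gradient. A standard block-sparse Lasso oracle inequality then gives
\[
\|\hat\BMphi_{\CS_i}-\BMphi_{\CS_i}\|_2 \;=\; O\!\left(d\sqrt{\tfrac{sd\log(dT)}{2^{i-1}L}}+\|\w_{\bar\CS_i}\|_2\right).
\]

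Next I would convert this into an angular bound on $\hat\bt_i$. Reshaping, $\BMPhi_{\CS_i}=\bt\,\w_{[2^{i-1}L]}^{\top}$ is rank one with nonzero singular value $\|\w_{[2^{i-1}L]}\|_2$, so Wedin's $\sin\Theta$ theorem gives
\[
\sin\angle(\hat\bt_i,\bt) \;\lesssim\; \frac{\|\hat\BMphi_{\CS_i}-\BMphi_{\CS_i}\|_2}{\|\w_{[2^{i-1}L]}\|_2}.
\]
This is exactly where $q(\w)$ enters: once $2^{i-1}L\geq q(\w)$, the denominator exceeds $\mu=\Theta(1)$ by the definition of $q(\w)$, and the bias term $\|\w_{\bar\CS_i}\|_2$ in the numerator is likewise absorbed. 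Before that threshold the angular bound is vacuous and we pay $O(1)$ per-round regret, totalling $O(\min\{q(\w),T\})$.

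Finally I would sum over epochs. For each round $t$ in epoch $i$ (beyond the $q(\w)$ threshold), a standard contextual-bandit calculation--using that the $K$ i.i.d.\ contexts $\{\x_{t,a}\}_a$ are sub-Gaussian with $\|\x_{t,a}\|_\infty\leq 1$, that $\|\w\|_1\leq 1$, and that $a_t$ is greedy against $\hat\bt_{i-1}$--gives an expected instantaneous regret of order $\sin\angle(\hat\bt_{i-1},\bt)\cdot d$, yielding
\[
\mathbb{E}[\text{regret of epoch } i]\;=\;O\!\left(2^{i+1}L\cdot d\sqrt{\tfrac{sd\log(dT)}{2^{i-2}L}}\right)\;=\;O\!\left(d\sqrt{s\cdot 2^{i+1}L\log(dT)}\right).
\]
Summing the geometric series up to the last epoch (which has length $\Theta(T)$) yields $O(d\sqrt{sT\log(dT)})$. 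Adding the pre-threshold cost $O(\min\{q(\w),T\})$ gives \eqref{upp:bound:poor}.

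The main obstacle I expect is the first step: rigorously establishing block-RIP of $\bar\bP_i$. The chosen contexts populating this block-Toeplitz/circulant matrix are adaptively selected, so independence of rows is not automatic; the argument has to carefully exploit (i) the data-splitting that makes $\bar\bP_i$ independent of the estimator driving action selection inside the epoch, and (ii) Remark~\ref{mention:weaker}, which preserves the Hanson--Wright inequality for the chosen context even though $a_t$ depends on $\{\x_{t,a}\}_a$. Once these two points are secured, Theorem~\ref{RIP:informal} plugs in directly and the remainder of the argument is a fairly standard sparse-bandit aggregation.
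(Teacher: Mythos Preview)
Your overall architecture---Lasso error per epoch, then Wedin's $\sin\Theta$ theorem to control $\hat\bt_i$, then a geometric sum over epochs---is exactly the paper's route. The gap is in your treatment of the ``bias'' $\bar\bQ_i\BMphi_{\bar\CS_i}$ inside $\bNeps[i]$.

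You write that the bias contributes $\|\w_{\bar\CS_i}\|_2$ additively to the Lasso error, and that this is ``likewise absorbed'' once $2^{i-1}L\ge q(\w)$. It is not absorbed: past the threshold you only know $\|\w_{\CS_i}\|_2\ge\mu$, so $\|\w_{\bar\CS_i}\|_2$ can still be $\Theta(1)$, the resulting $\sin\angle(\hat\bt_i,\bt)$ stays $\Theta(1)$, and summing over the remaining $\Theta(T)$ rounds gives linear regret. The paper avoids this entirely by a point you do not mention: the rows of $\bar\bP_i$ are \emph{differences} $\BMxi_t-\BMxi_{t'}$ of chosen contexts from the fourth and second quarters (not chosen contexts themselves), hence zero-mean and independent of the entries of $\bar\bQ_i\BMphi_{\bar\CS_i}$. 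Conditioning on the latter (each coordinate $\nu_j$ is deterministically bounded by $\sqrt{d}$ since $\|\w\|_1\le 1$), the cross term $\tfrac{1}{m_i}\bar\bP_i^\top(\bar\bQ_i\BMphi_{\bar\CS_i})$ is a normalized sum of $m_i$ independent zero-mean bounded vectors and concentrates at rate $O(d\sqrt{\log(\cdot)/m_i})$---the \emph{same} rate as the genuine noise $\bar\bEps[i]$. Thus $\lambda_i$ need not carry any $\|\w_{\bar\CS_i}\|_2$ term, and the Lasso error in \eqref{err:phi_partial} has no additive bias.

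A second, related slip: you propose to apply Theorem~\ref{RIP:informal} to $\bar\bP_i$ by ``verifying that each row is formed from a chosen context $\BMxi_t$.'' The rows are differences of two independent chosen contexts, so you first need that $\BMxi-\BMxi'$ again satisfies the Hanson--Wright inequality (the paper proves this separately) before Theorem~\ref{RIP:informal} applies; the differencing is precisely what delivers both the zero mean needed above and the isotropy hypothesis of Theorem~\ref{RIP:informal}. Once you incorporate these two points, the rest of your sketch (Wedin, the $q(\w)$ threshold, and the geometric summation) goes through as you wrote it.
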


\begin{remark}
	Notice that the following two facts in our algorithm are crucial for our analysis: 1) we use the difference between $\bP''_i$ and $\bP'_i$ (i.e., $\bar \bP_i$ in \eqref{parti_measure}) as the measurement matrix to learn $\BMphi_{\CS_i}$, ensuring that $\bar \bP_i$ has zero mean, and 2) the doubling trick and the choice of data to use ensure that $\bP''_i$ and $\bP'_i$ are \textit{non-overlapping} and \textit{independent}, and $\bar P_i$'s in different epochs are also \textit{non-overlapping} and \textit{independent} (see Fig.~\ref{doubling}~(b)).  Our analysis uses Theorem~\ref{RIP:informal} to show each $\bar \bP_i$ in \eqref{Lasso:initial} satisfies the restrictive eigenvalue condition for block-sparse vectors (see Theorem~\ref{Theorem:RE} in the Appendix). Then, we derive Theorem~\ref{bound:Lasso} that generalizes Theorem 7.13 in \citet{wainwright2019high} to complete the proof. \TiaQED
\end{remark} 

\begin{remark}
	The value of $\alpha$ describes a ``mass-like" distribution  of the weights in  $s$-sparse vector $\w$. A small $\alpha$ means that non-zero entries appear at early positions of $\w$, making it easier to learn useful information of $\BMtheta$ at an early stage than the case of a large $\alpha$. For instance, if $\alpha\le \frac{1}{2}\log_h T$ (i.e., half of the ``mass" is located at the first $\sqrt{T}$ positions of $\w$),  $R_h=\tilde  O( d\sqrt{sT})$. If $\alpha = 1$, i.e., $\w=[0,0,\dots,1]^\top$, then $R_h=  O(T)$, which is intuitive since no information can be gathered to help decision-making until the last moment of the horizon. The upper bound \eqref{upp:bound:poor} indicates that our algorithm is adaptive to different instances.  We conjecture that the dependence on $q$ is optimal; for instance, for delayed bandits, $q$ becomes the delay, which is unavoidable.
\end{remark}

\begin{figure}[t]
	\centering
	\includegraphics[scale=0.28]{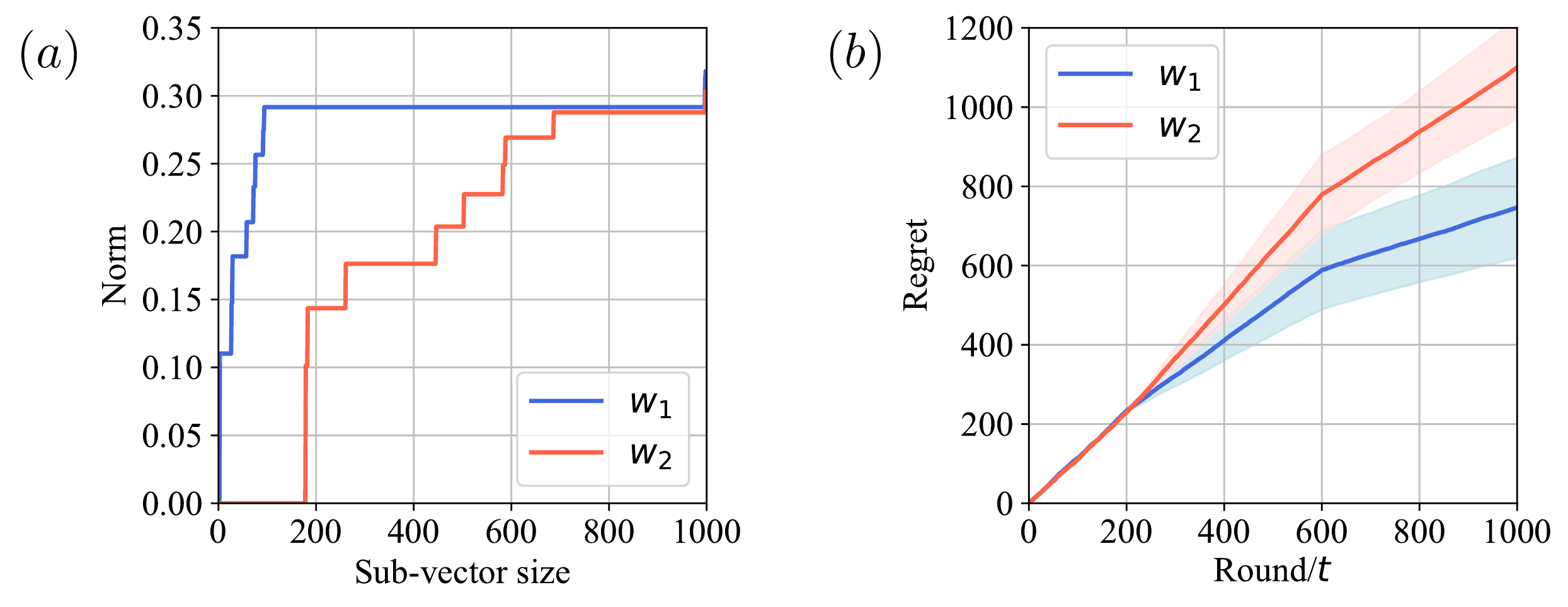}
	\caption{Regret comparison of Algorithm~\ref{alg:initial} for different instances of $\w$. (a) The $\ell_2$-norms of the sub-vectors formed by the first $k$ entries of $\w_1$ and $\w_2$, respectively ($k$ is the $x$-axis). Notice that the ``mass'' of $\w_1$ is distributed at earlier positions than $\w_2$. (b) Experiments show that Algorithm~\ref{alg:initial} results in a smaller regret for $\w_1$ than for $\w_2$, as predicted by Theorem~\ref{Th:data_poor}.  (Shaded regions show standard error in $10$ trials. Parameters: $h=1000,T=999$, and $s=10$. )}
	\label{data_poor_diff_w}
\end{figure}
\noindent\textbf{Experiments.} In Fig.~\ref{data_poor_diff_w}, we perform some experiments by considering two different $\w$'s, i.e., one with the ``mass" distributed at earlier positions and the other at later positions. As predicted by our theory, our algorithm indeed achieves a lower regret in the former case (see Fig.~\ref{data_poor_diff_w}~(b)).	

\begin{remark}\label{remark:optimal}
	Apart from the term $q(\w)=h^{\alpha}$, which is presumably \textit{unavoidable} since it measures the hardness of a problem instance, the upper bound in \eqref{upp:bound:poor} has no polynomial dependence on $h$. This means that exploring the sparsity in the reward dependence pattern is indeed beneficial especially when $sd \ll h$. \citet{hao2020high} studied a sparse linear bandit problem in the data-poor regime  and obtained an optimal bound, instantiated in our setting, $\tilde \Theta((sd)^{\frac{2}{3}} T^{\frac{2}{3}} )$. We obtained a distinct bound since we consider a different setting rather than a sparse arm parameter.    \TiaQED
\end{remark} 


\subsection{Data-Rich Regime}\label{sec:rich}
Now, we consider the situation where there are more rounds than the dimension of the weight vector $\w$, i.e., $T \ge h$. 
In this data-rich regime, we introduce an algorithm outlined in Algorithm~\ref{alg:rich} (see also Fig.~\ref{illust:rich} for an illustration). 

\begin{figure}[t]
	\centering
	\includegraphics[scale=3]{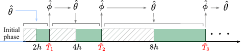}
	\caption{Illustration of AD-Lasso. For the initial phase of $h$ rounds, we use the Doubling Lasso in Algorithm~\ref{alg:initial}. For $t>h$, we also use the doubling trick, but slightly different from Algorithm~\ref{alg:initial}: 1) here $L=h$, and 2) we estimate the entire $\BMphi$ in each epoch.
	} 
	\label{illust:rich}
\end{figure}

There are two phases in this algorithm, making it adaptive: 1) in the initial $h$ rounds, we employ the Doubling Lasso (see Algorithm~\ref{alg:initial}); 2) from the $h+1$ round on, we propose another algorithm. In the second phase, we also use a doubling trick similar to Algorithm~\ref{alg:initial}. The only differences are: 1) the length of epoch $i$ is $2^{i}h$ instead of $2^{i+1}L$, 2) in each epoch, we estimate the \textit{entire} $\BMphi$ instead of a portion of it, and 3) the later half of collected data is used. 


Same as in Algorithm~\ref{alg:initial}, we  collect $2^jh$ data points in each epoch. From \eqref{cic:equation}, the $2^{j-1} h$ rewards in the later half (See Fig.~\ref{illust:rich}) are generated by
\begin{equation*}
	{\tilde \br{[j]}= \tilde \BMXi{[j]}\BMphi +\tilde \bEps{[j]}},
\end{equation*}
where $\tilde\br{[j]} \in \R^{2^j h},\tilde \BMXi[j]\in \R^{2^{j}h \times dh}$, and $\tilde \bEps[j] \in \R^{2^{j}h}$ are the corresponding reward vector, context matrix, and noise vector in the later half of the epoch $j$, respectively.

To learn $\BMphi$, we calculate the following Lasso program:
\begin{equation}\label{Lasso:rich}
	{\hat \BMphi[j]= \argmin _{ \BMphi\in \R^{h d}}\Big(\frac{1}{2^j h} \left\|\tilde \BMXi{[j]}  \BMphi - \br[j] \right\|^2_2 + \lambda_j\| \BMphi\|_{2,1}^{(d)} \Big)},
\end{equation}
where the regularization parameter is 
\begin{equation}
	{\lambda_j = 2\sqrt{\frac{2 d\log(2^{j} h/\gamma)}{2^{j-1} h}}}.
\end{equation}


\begin{algorithm}[t]
	\caption{Adaptive Doubling Lasso (AD-Lasso)}
	\label{alg:rich}
	\begin{algorithmic}[1]
		\State \textbf{Input:}  $L$ for the initial phase, the doubling sequence $\{\tilde T_j\}$ with $\tilde T_j=(2^{j+1}-1)h$ [see Fig.~\ref{illust:rich}]
		\For{$t=1: h$}
		\State Implement Algorithm~\ref{alg:initial} with parameter $L$.
		\EndFor
		\State Reset $\hat \BMtheta_0$ to the latest $\hat \BMtheta$. 
		\For{$t=h+1: T$}
		\State {Observe contexts vectors $\{\x_{t,a}:a\in[K]\}$}.
		\State Take the greedy action $a_t\in \sup_{a\in[K]}\AngBra{\x_{t,a},\hat \BMtheta_{j-1}}$, and receive a reward $r_t$.
		\If{$t = \tilde T_j$} \\ \hfill {\color{mygray} \footnotesize \# \textit{end of the $j$th epoch, estimate a new $\hat\BMtheta$}}
		\State Calculate $\hat \BMphi[j]$ according to the Lasso \eqref{Lasso:rich}.
		\State Let $\hat \BMtheta_j$ be the singular vector of $\hat \Phi$ associated with the largest singular value. 
		\EndIf 
		\EndFor
	\end{algorithmic}
\end{algorithm}

\begin{theorem}\label{Th:data_rich}
	Consider the stochastic contextual linear bandit model with long-horizon rewards described in \eqref{model:CB}. In Algoritm~\ref{alg:rich}, let $L$ be the same as in Thoerem~\ref {Th:data_poor}.
	When $T\ge h$, the regret  has the following upper bound:
	\begin{equation}\label{bound:rich}
		R_T =  {O \Big( d\sqrt{sh \log(dh)}+ \min\{q(\w),h\} +\sqrt{sdT \log(dT)} \Big)},
	\end{equation}
	where $q(\w)$ is defined in Theorem~\ref{Th:data_poor}.     
\end{theorem}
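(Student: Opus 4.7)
The plan is to split the regret at round $h$ between the two phases of Algorithm~\ref{alg:rich}. In the first $h$ rounds the algorithm executes the Doubling Lasso verbatim, so Theorem~\ref{Th:data_poor} applied with horizon $h$ immediately contributes $O(d\sqrt{sh\log(dh)}+\min\{q(\w),h\})$, matching the first two terms of~\eqref{bound:rich}. It therefore remains to show that the second-phase regret (rounds $h+1$ through $T$) is $O(\sqrt{sdT\log(dT)})$.

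The second phase is analyzed epoch-by-epoch with a conditioning argument. Fix epoch $j$ and condition on the estimate $\hat\BMtheta_{j-1}$ computed at the end of epoch $j-1$. Given $\hat\BMtheta_{j-1}$, the greedy rule $a_t\in\argmax_{a\in[K]}\AngBra{\x_{t,a},\hat\BMtheta_{j-1}}$ is a deterministic function of the round-$t$ context set, so the chosen contexts $\BMxi_t=\x_{t,a_t}$ are independent across $t$ within the epoch; by the remark following Assumption~\ref{ass:cv_concentr}, each $\BMxi_t$ also obeys the Hanson--Wright inequality~\eqref{ineq:HW}. Because the later half of epoch $j$ begins at offset $2^{j-1}h\ge h$ from the epoch's start, every $h$-shift window $[\BMxi_t^\top,\dots,\BMxi_{t-h+1}^\top]$ forming a row of $\tilde\BMXi[j]$ is built entirely from contexts inside epoch $j$, so no cross-epoch dependencies leak into the design matrix. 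Applying Theorem~\ref{RIP:informal} (after mean-centering the rows, analogous to the construction of $\bar\bP_i$ in~\eqref{parti_measure}) then yields the block restricted-eigenvalue property for $s$-block-sparse vectors in $\R^{hd}$ as soon as $2^{j-1}h\gtrsim sd\log^2(sd)\log^2(hd)$, which is satisfied from the first second-phase epoch onward since $h\ge L$ by the choice made in Theorem~\ref{Th:data_poor}.

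Given the block-RE condition, the block-sparse extension of Theorem~7.13 of \citet{wainwright2019high} (the same Lasso tail bound invoked in Theorem~\ref{Th:data_poor}) gives
\[
\|\hat\BMphi[j]-\BMphi\|_2 \;=\; O\!\left(\sqrt{\tfrac{sd\log(dT)}{2^{j-1}h}}\right).
\]
Writing $\tilde\bt:=\bt/\|\bt\|_2$, reshaping $\hat\BMphi[j]$ as $\hat\BMPhi[j]\in\R^{d\times h}$, and using that $\BMPhi=\bt\w^\top$ has rank~$1$ with lone singular value $\|\bt\|_2\|\w\|_2$, Wedin's $\sin\Theta$ theorem gives $\|\hat\BMtheta_j-\tilde\bt\|_2 = O(\|\hat\BMphi[j]-\BMphi\|_2 / (\|\bt\|_2\|\w\|_2))$. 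For isotropic sub-Gaussian contexts, the expected per-round greedy regret is of order $\|\hat\BMtheta_j-\tilde\bt\|_2$ (no extra $\sqrt{d}$ factor, since for any $v\in\R^d$ the scalar $\AngBra{\x_{t,a},v}$ concentrates at its sub-Gaussian standard deviation $\|v\|_2$ rather than at $\|\x_{t,a}\|_2\|v\|_2$). Multiplying by the $2^{j+1}h$ rounds of epoch $j+1$ produces a per-epoch bound of $O(\sqrt{2^j h\cdot sd\log(dT)})$, and summing this geometric series---dominated by its final term at the terminal epoch of length $\Theta(T)$---yields $O(\sqrt{sdT\log(dT)})$. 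A union bound over the $O(\log(T/h))$ epochs and the associated high-probability RIP events only contributes logarithmic factors that are absorbed in $\log(dT)$.

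The crux of the argument is the restricted-eigenvalue verification for $\tilde\BMXi[j]$: its rows have the block-circulant structure that Lemma~\ref{norip} identifies as hostile to low-rank recovery, and they are built from contexts chosen adaptively using past estimates. Both difficulties are resolved by the conditioning argument above combined with Theorem~\ref{RIP:informal}, which was developed precisely to handle block-circulant designs with dependent sub-Gaussian rows. The remaining ingredients---the block-sparse Lasso rate, the Wedin perturbation bound, and the geometric summation---are standard.
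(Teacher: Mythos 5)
Your proposal is correct and follows essentially the same route as the paper: phase~1 is charged to Theorem~\ref{Th:data_poor}, and phase~2 is handled epoch-by-epoch via the block restricted-eigenvalue property (Theorem~\ref{RIP:informal} plus the conditioning on $\hat\BMtheta_{j-1}$), the block-sparse Lasso rate, Wedin's $\sin\Theta$ bound, and a geometric summation dominated by the final epoch. The only minor imprecision is your parenthetical about mean-centering the rows: the data-rich Lasso \eqref{Lasso:rich} uses the raw design $\tilde\BMXi[j]$, and the paper instead establishes RE for the non-centered matrix directly (Theorem~\ref{Theorem:RE}, whose Step~2 absorbs the $\onebb\bmu^\top$ term), but this does not affect the validity of your argument.
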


\begin{remark}
	The first two terms in \eqref{bound:rich} result from the initial phase ($t\le h$) when data is poor. Note that they are $T$-independent even if they are $h$-dependent; they play a role in the upper bound only when $T$ has the same order of $h$, i.e., $T=\Theta(h)$. In this case, the upper bound becomes
	\begin{equation*}
		{R_T = O \Big( d\sqrt{sT \log(dT)}+\min\{ q(\w), T\}\Big)}. 
	\end{equation*}
	By contrast, if $T$ is large, specifically, $T\ge \max\{dh, h^{2\alpha(\mu)}/(s d\log(T))\}$, the first two terms are dominated by the last one in \eqref{bound:rich}, and the upper bound reduces to
	\begin{equation*}
		{R_T = O \Big(  \sqrt{sd T \log(dT)} \Big)}. 
	\end{equation*}
	Then, our upper bound is \textit{optimal} in $d$ and $T$ (up to logarithmic factors), which follows from the lower bound $\sqrt{dT}$ shown in \citet{chu2011contextual} for linear contextual bandits.
	
	\textit{Discussion on lower bound:} \citet{ren2020dynamic} obtained a lower bound $\Omega(\sqrt{sT})$  for $s$-sparse contextual linear bandits. Taking into account the low-rank and sparse nature of our problem, one can show a lower bound of $\Omega(\sqrt{(s+d)T})$ in our case by adapting their proof. Thus, the gap between the our bound in Theorem~\ref{Th:data_rich} and this lower bound is at most a factor of $\log(dT)\min\{\sqrt{s},\sqrt{d}\}$. However, we believe that the actual gap is much smaller. We presume that a tighter lower bound can be constructed since we find that the sampling complexity of low-rank estimation using circulant measurements does not simply depends on the rank (see Sec.~\ref{challenges} for the discussion). \TiaQED 
\end{remark}




\begin{figure*}[t]
	\centering
	\includegraphics[scale=0.3]{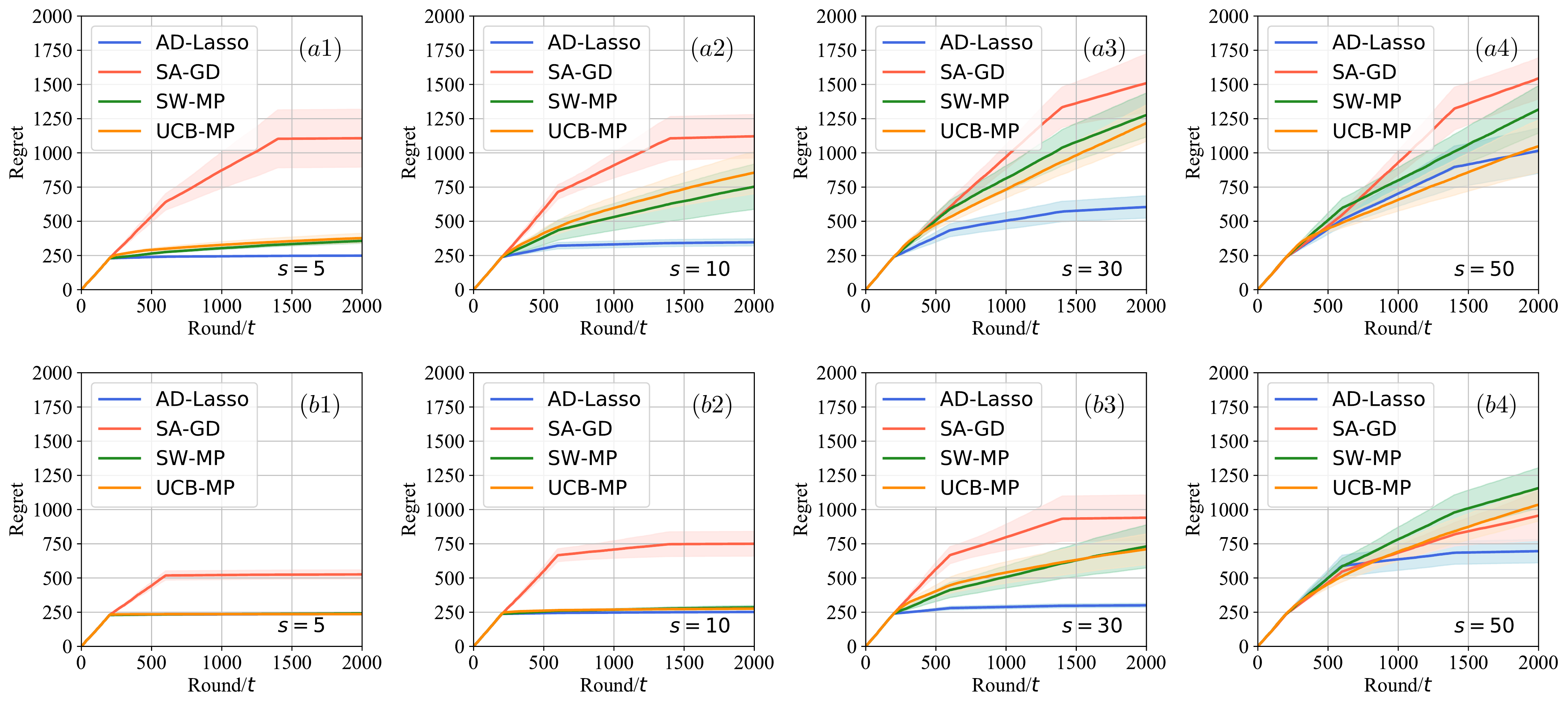}
	\caption{Performance comparison of different algorithms. (a) Flat $\w$: the non-zero entries in  $\w$ are equally spread. (b) Spiking $\w$:  the majority of the weights concentrates at only $20\%$ of the non-zeros positions in $\w$. Different sparsity of $\w$ is also considered for both cases. (Universal parameters: $T=2000,h=100,d=5$, and $\|\w\|_1=1$.)}
	\label{data_rich_diff_spars}
\end{figure*}

\newpage
\noindent\textbf{Experiments.} We perform some experiments to compare our algorithm AD-Lasso with the following three:
\begin{enumerate}
	\item \textit{Sparse-Alternating Gradient Descent} (SA-GD). The core of SA-GD is rank-1 and then sparse matrix estimation.   Based on  \eqref{model:low_rank}, SA-GD alternatively reconstructs $\BMtheta$ and $\w$ by gradient descent, and projects $\w$ to the $s$-sparse space. 
	\item \textit{Single-Weight Matching Pursuit} (SW-MP). The core of SW-MP is to locate the \textit{largest} weight in $\w$ by testing the correlation between the reward vector and the columns of the context matrix. Then, with this location information, $\BMtheta$ is estimated simply by the least-squares regression, ignoring other weights in $\w$. 
	\item \textit{UCB with Matching Pursuit} (UCB-MP). This algorithm is similar to SW-MP; the difference is that in each epoch we use UCB to update $\hat \BMtheta$ and  make decisions. 
\end{enumerate}

To facilitate fair comparison, we use the same  doubling scheme with identical epoch lengths for all the algorithms. The only difference is the method we use to estimate $\bt$ (see Appendix~\ref{experi} for more details of these algorithms). Different sparsity and reward dependence structure are considered in the experiments (see the caption in Fig.~\ref{data_rich_diff_spars}).

Our algorithm outperforms SA-GD significantly when $\w$ is highly sparse (see (a1), (a2), (b1), and (b2)). Since SA-GD is primarily reliant on rank-1 factorization, this indicates that, relative to low-rankness, sparsity plays a more dominant role in the estimation quality in line with our theory. Surprisingly, as $\w$ becomes less sparse, our algorithm can still outperform SA-GD, even in the regime $sd>d+h$. This supports the difficulty of low-rank matrix estimation with circulant measurements, which is consistent with our discussion in Sec.~\ref{challenges}. Yet, stronger theoretical analysis is desirable to formalize these findings beyond our Lemma \ref{norip}. 

AD-Lasso performs as well as SW-MP and UCB-MP, even when the weights of $\w$ are highly concentrated over few entries. When the weights are more spread out, AD-Lasso works much better, indicating that simply exploring and exploiting the largest weight becomes suboptimal.







\section{Concluding Remarks}
In this paper, we introduce a novel variation of the stochastic contextual bandits problem, where the reward depends on $s$ prior contexts, up to a time horizon of $h$. Leveraging the sparsity in the reward dependence pattern, we propose two  algorithms that account for both the data-poor and data-rich regimes. We also derive horizon-independent (up to $\log(h)$ terms) regret upper bounds for both algorithms, establishing that their sample efficiency is theoretically guaranteed. 

Our work opens up many future potential directions. For instance, the reward can depend on the prior contexts in a nonlinear fashion or sparsity pattern can vary in a data-dependent fashion. In either scenarios learning the reward dependence pattern will be more challenging. Also, beyond bandit problems, it is of interest to explore RL and control scenarios with long-term non-markovian structures where new strategies will be required. 

\section{Acknowledgement}
This work was supported in part by the Air Force grant AFOSR-FA9550-20-1-0140, NSF CCF-2046816, NSF TRIPODS II-DMS 2023166, NSF CCF-2007036, NSF CCF-2212261, NSF DMS-1839371, and Army Research Office grants ARO-78259-NS-MUR and W911NF2110312. 

  \bibliographystyle{plainnat}
\bibliography{refs_full}

\appendix
\onecolumn
\begin{center}
	\Huge \bf{Appendix}
\end{center}
\renewcommand{\theequation}{A\arabic{equation}}
\setcounter{equation}{0}

The appendix is organized into two parts: 

\begin{enumerate}
	\item In Part I, we are dedicated to deriving some general results, which are useful for our later analysis and also of independent interest. Specifically, we show that measurement matrices formed by subsampling $\tilde \Omega(s\log^2(s))$ rows from the following \textit{block-circulant} matrix satisfy the  Restricted Isometry Property  (RIP) for sparse vectors: 
	\begin{align*}
		\mathbf{C}=\begin{bmatrix}
			\BMxi_{n}^\top&\BMxi_{n-1}^\top&\cdots&\BMxi_{1}^\top\\
			\BMxi_{1}^\top&\BMxi_{n}^\top&\cdots&\BMxi_{2}^\top\\
			\vdots&\vdots&\ddots&\vdots\\
			\BMxi_{n-1}^\top&\BMxi_{n-2}^\top&\cdots&\BMxi_{n}^\top
		\end{bmatrix},
	\end{align*}
	where $\BMxi_1,\BMxi_2, \dots, \BMxi_n \in \R^d$ are independent and isotropic random vectors. As a generalization to the results in \citet{krahmer2014suprema}, we allow  $\BMxi_i$ to have \textit{dependent} entries. The result is summarized in Theorem~\ref{RIP:informal} of the main text. 
	
	\item In Part II, we provide supporting materials for the main problem of this paper: linear contextual bandits with long-horizon rewards.  The results we obtained in Part I will be used to prove Theorems~\ref{Th:data_poor} and \ref{Th:data_rich}. Some further experimental results will also be presented. 
\end{enumerate}

\section*{Preliminary}

\textbf{Further notations.} Given a vector $\x=[x_1,x_2,\dots,x_{nd}]^\top$, denote $\|\x\|_0 :=\sum_{i=1}^{nd}\mathbb{I}(|x_i| \neq 0)$ as its $\ell_0$-norm. We further define an $\ell_{2,1}$-norm of it, which is similar to the matrix version. Specifically, $\|\x\|^{(d)}_{2,1}=\sum_{j=1}^n\|\x_j\|_2$, where each $\x_j$ is obtained by partitioning $\x$ into $n$ blocks with equal size $d$ (i.e., $\x=[\x_1^\top,\x_2^\top,\dots, \x_n^\top]^\top$ with $\x_i\in\R^d$). Similarly, denote  $\|\x\|^{(d)}_{2,\infty}=\max_{j\in\{1,\dots,n\}}\|\x_j\|_2$  and $\|\x\|^{(d)}_{2,0}=\sum_{i=1}^{n}\mathbb{I}(\|\x_j\|_2\neq 0)$.
Given a matrix $A\in \R^{m\times n}$, $\|A\|_F$ denotes its Frobenius norm. We use $\lesssim$ and $\gtrsim$ to denote inequalities that hold up to constants/logarithmic factors.

\vspace{10pt}
A vector $\va=[a_1,a_2,\dots,a_d]^\top\in \R^d$ is called $s$-sparse if $\|\va\|_0\le s$. A vector $\vb=[\vb_1^\top,\vb_2^\top,\dots,\vb_h^\top]^\top \in \R^{hd}$, with each block $\vb_i\in\R^d$, is called $s$-block sparse if at most $s$ of $\vb_i$'s are non-zero (i.e., if $\|\vb\|_{2,0}^{(d)}\le  s$). 

\vspace{10pt}
A random variable $x\in \R$ is said to be sub-Gaussian with variance proxy $\sigma^2$ (in short $\sigma^2$-sub-Gaussian) if $\BE [x]=0$ and its moment generating function satisfies 
\begin{align*}
	\BE[\exp(tX)] \le \exp(\frac{\sigma^2 t^2}{2}), \quad\quad\forall t \in \R.
\end{align*}
A random vector $\x\in\R^d$ is said to be $\sigma^2$-sub-Gaussian if for any $u\in\R^d$ such that $\|\mathbf{u}\|\le 1$, the random variable $\mathbf{u}^\top \x$ is $\sigma^2$-sub-Gaussian. 
For a random vector $\x$ that is not zero-mean, we abuse the notation by saying it is $\sigma^2$-sub-Gaussian if $\x-\BE \x$ is  $\sigma^2$-sub-Gaussian.

\newpage

\begin{center}
	\LARGE \textbf{Part I: General Results on RIP \\of Block Circulant Measurements}
\end{center}
\section{Main Results}\label{prelim}
We first introduce a   definition that will be used later. Then, we present the main  result in this part.

\begin{definition}
	A matrix $\BMXi \in \R^{m \times d}$ is said to have the restricted isometry property of order $s$ and  level $\delta$ (in short $(s,\delta)$-RIP) if there exists $\delta \in(0,1)$ such that
	\begin{align*}
		(1-\delta)\|\va\|_2^2 \le \|\BMXi \va\|_2^2 \le (1+\delta)\|\va\|_2^2
	\end{align*}
	for all  $s$-sparse vectors $\va\in \R^d$. The smallest $\delta$ such that this inequality holds. denoted as $\delta_s$, is called the \textit{restricted isometry constant}. 
\end{definition}

\vspace{20pt}

In Part I, we are interested in the following block-circulant matrix:
\begin{align}\label{cir:original}
	\bC=\begin{bmatrix}
		\BMxi_{n}^\top&\BMxi_{n-1}^\top&\cdots&\BMxi_{1}^\top\\
		\BMxi_{1}^\top&\BMxi_{n}^\top&\cdots&\BMxi_{2}^\top\\
		\vdots&\vdots&\ddots&\vdots\\
		\BMxi_{n-1}^\top&\BMxi_{n-2}^\top&\cdots&\BMxi_{n}^\top
	\end{bmatrix} \in \R^{n \times nd},
\end{align}
where $\BMxi_1,\BMxi_2, \dots, \BMxi_n \in \R^d$ are independent random vectors, and for each $i$ it holds that $\BE \BMxi_i=\boldsymbol{0}$ and $\BE \BMxi_i \BMxi_i^\top=\boldsymbol{I}$. Let $\BMXi$ be a matrix formed by sub-sampling any $m$ ($m< n$) rows of $\bC$, i.e.,
\begin{equation}\label{subsam:xi}
	\BMXi = \frac{1}{\sqrt{m}}\bR_{\Omega} \bC,
\end{equation}
where  $\Omega \subset [n]$ with $|\Omega|=m$  and $\bR_{\Omega}\in \R^{m\times n}$ selects the $m$ rows of $\bC$ that are indexed by $\Omega$ (note that we include the term $\frac{1}{\sqrt{m}}$ here to normalize the sub-sampled matrix). It can be observed that $\bR_{\Omega}$ is the matrix formed by removing the rows indexed by the set $[n]\backslash \Omega$ from the identity matrix $\boldsymbol{I}_n$.
Now, consider the following measurement
\begin{align}\label{circ:meas}
	\y = \BMXi \va.
\end{align}
We assume that the unknown vector $\va\in\R^{nd}$ is  $s_0$-sparse, i.e., $\|\va\|_0\le s_0$. 
Our main result in this part establishes RIP of sub-sampled block circulant matrices for sparse vectors.  

\begin{theorem}\label{RIP:general}
	Consider the design matrix $\BMXi \in \R^{m \times nd}$ given by \eqref{subsam:xi}, where $\Omega$ is any subset of $[n]$ with cardinality $m$. Assume that each random vector $\BMxi_i$ satisfies the Hanson-Wright inequality
	\begin{align}\label{H-W}
		\Pr \left[|\BMxi_i^\top \bP \BMxi_i -\BE (\BMxi_i^\top  \bP \BMxi_i)|\ge t \right] 
		\le 2 \exp \left( -\frac{1}{c} \min \left\{ \frac{t^2}{k^4 \|\bP\|_F^2},\frac{t}{k^2 \|\bP\|_\op}\right\}\right)
	\end{align}
	for any positive semi-definite matrix $\bP\in \R^{d\times d}$, where $k$ is a constant and $c$ is an absolute constant.
	Then, for any $\delta,\eta\in(0,1)$, there exists a constant $c_1>0$ such that, if 
	\begin{align}\label{sp_size:cir}
		m \ge c_1 \frac{1}{\delta^2} s_0 \max \left \{\log^2 (s_0) \log^2(nd), \log \big(\frac{1}{\eta} \big) \right \},
	\end{align}
	then the restricted isometry constant of the design matrix $\BMXi$ for all $s_0$-sparse vectors $\va\in \R^{nd}$, denoted as $\delta_{s_0}$, satisfies $\delta_{s_0} \le \delta$ with probability at least $1-\eta$. 
\end{theorem}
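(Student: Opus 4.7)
The strategy is to reduce the RIP statement to a uniform concentration (``suprema of chaos process'') inequality in the style of Krahmer--Mendelson--Rauhut (KMR), driven entirely by the Hanson--Wright property \eqref{H-W}. Let $\BMxi = [\BMxi_1^\top, \ldots, \BMxi_n^\top]^\top \in \R^{nd}$. The block-circulant structure of $\bC$ lets one rewrite, for any $\va = [\va_1^\top, \ldots, \va_n^\top]^\top \in \R^{nd}$, $\BMXi \va = \tfrac{1}{\sqrt{m}}\, \bB(\va)\, \BMxi$, where $\bB(\va) \in \R^{m \times nd}$ depends linearly on $\va$ via the (subsampled) circulant reindexing. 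Hence $\|\BMXi \va\|_2^2 = \BMxi^\top \bM(\va)\, \BMxi$ with $\bM(\va) := \tfrac{1}{m} \bB(\va)^\top \bB(\va)$, and since the $\BMxi_i$ are independent and isotropic, $\BE[\BMxi^\top \bM(\va) \BMxi] = \Tr(\bM(\va)) = \|\va\|_2^2$. Thus RIP reduces to the uniform statement $\sup_{\va \in D_{s_0}} |\BMxi^\top \bM(\va) \BMxi - \|\va\|_2^2| \le \delta$, where $D_{s_0}$ denotes unit-norm $s_0$-sparse vectors in $\R^{nd}$.

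The next step is to upgrade the per-block HW in \eqref{H-W} to a joint HW for $\BMxi$. Decomposing
$$\BMxi^\top \bQ\, \BMxi - \BE[\BMxi^\top \bQ\, \BMxi] = \sum_{i} \bigl( \BMxi_i^\top \bQ_{ii} \BMxi_i - \BE \BMxi_i^\top \bQ_{ii} \BMxi_i \bigr) + \sum_{i \neq j} \BMxi_i^\top \bQ_{ij} \BMxi_j,$$
the diagonal sum concentrates by Bernstein (per-block HW applied to each $\bQ_{ii}$, with parameters controlled by $\|\bQ\|_F^2$ and $\|\bQ\|_\op$), while the off-diagonal sum is handled by decoupling to an independent copy $\BMxi'$ and then applying per-block HW conditionally, as in the convex-concentration-property route (cf.~Adamczak). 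This yields Hanson--Wright for $\BMxi$ of the same form as \eqref{H-W} with constant $k' = O(k)$. I then apply the KMR suprema-of-chaos inequality---whose proof uses only HW for the underlying random vector together with generic chaining, and therefore does not require independence of entries---to the class $\CA := \{ \tfrac{1}{\sqrt{m}} \bB(\va) : \va \in D_{s_0} \}$. The resulting tail is controlled by $d_F(\CA) := \sup_{\bA \in \CA} \|\bA\|_F$, $d_{2\to 2}(\CA) := \sup_{\bA \in \CA} \|\bA\|_\op$, and the Talagrand functional $\gamma_2(\CA, \|\cdot\|_\op)$.

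The sparsity $\|\va\|_0 \le s_0$ together with the $\tfrac{1}{\sqrt{m}}$ normalization yields $d_{2\to 2}(\CA) \lesssim 1/\sqrt{m}$ and $d_F(\CA) \lesssim \sqrt{s_0/m}$, while a multiscale chaining estimate following KMR---based on $\ell_1$-width bounds for $s_0$-sparse unit vectors and covering-number bounds for partial circulant operators---gives $\gamma_2(\CA, \|\cdot\|_\op) \lesssim \sqrt{s_0/m}\, \log(s_0) \log(nd)$. Substituting the three estimates into the chaos inequality and taking $t = \delta/2$, the conditions $E \lesssim \gamma_2(\gamma_2 + d_F) + d_F\, d_{2\to 2} \le \delta/2$ and $\min(t^2/V^2, t/U) \gtrsim \log(1/\eta)$ translate into precisely $m \gtrsim \delta^{-2} s_0 \max\{\log^2(s_0) \log^2(nd),\ \log(1/\eta)\}$, giving $\delta_{s_0} \le \delta$ with probability at least $1 - \eta$.

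The main technical obstacle is the $\gamma_2$ estimate: KMR's chaining is tailored to scalar-entry partial circulant matrices, and care is required so that the block dimension $d$ enters the chaining only through the HW constant $k$ and remains logarithmic (absorbed into $\log(nd)$) rather than polynomial. The natural way to achieve this is to parameterize the chaining by $\va$ itself and use operator-norm covering estimates for the linear map $\va \mapsto \bB(\va)$, leveraging that $\bB(\va)$ inherits the sparse structure of $\va$ block-by-block. A secondary, more routine difficulty is the decoupling step in the joint HW proof, which follows from standard symmetrization and contributes only to absolute constants.
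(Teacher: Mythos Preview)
Your high-level architecture---rewrite $\BMXi\va$ as $\frac{1}{\sqrt{m}}\bB(\va)\BMxi$, reduce RIP to a uniform chaos bound over the class $\CA$, and control the KMR quantities $d_F,\ d_{2\to 2},\ \gamma_2$---matches the paper. But two concrete points are off.

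First, your estimates for $d_F(\CA)$ and $d_{2\to 2}(\CA)$ are swapped. Each row of $\bB(\va)$ is a cyclic permutation of $\va$, so every row has $\ell_2$-norm $\|\va\|_2=1$; after the $1/\sqrt{m}$ normalization this gives $d_F(\CA)=1$, not $\sqrt{s_0/m}$. Conversely, for an $s_0$-sparse unit vector $\va$ with consecutive support, the rows of $\bB(\va)$ overlap and the operator norm can be as large as $\sqrt{s_0}$, so $d_{2\to 2}(\CA)\le\sqrt{s_0/m}$, not $1/\sqrt{m}$. With your values the expectation term $E$ scales like $s_0/(m\delta)$ rather than $s_0/(m\delta^2)$, and the tail parameters $V,U$ are too small, so the sample-size condition you derive does not actually follow from your bounds.

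Second, the claim that KMR's chaos inequality ``uses only HW for the underlying random vector'' is not accurate, and the paper does \emph{not} take the route of first establishing a joint HW for $\BMxi\in\R^{nd}$ and then invoking KMR as a black box. The KMR proof's off-diagonal step relies on a decoupling inequality, which in the original requires entrywise independence; the paper's main new technical ingredient here is a \emph{block} decoupling inequality (their Theorem~\ref{decoupling}) valid when only the blocks $\BMxi_1,\dots,\BMxi_n$ are independent. The paper then splits $C_\CA\le O_\CA+D_\CA$ at the level of the supremum: the off-diagonal $O_\CA$ is bounded via the new block decoupling plus the KMR chaining lemmas (which need only sub-Gaussianity of $\BMxi$, automatically inherited from block independence), while the diagonal $D_\CA$ is handled by per-block HW, Bernstein across blocks for block-diagonal matrices, and a covering argument. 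Your detour through a full joint HW is unnecessary and still leaves the block decoupling as the genuine ingredient you would have to prove inside the KMR argument---which you call ``routine'' but is exactly the new lemma the paper supplies.
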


\begin{remark}
	This theorem is similar to Theorem~{4.1} in \citet{krahmer2014suprema}. The key differences are: 1) We subsample a block-circulant matrix instead of a circulant matrix (Each row in a circulant matrix rotates one element to the right relative to the previous row; as for the block-circulant matrix in our case, each row rotates one block of size $d$ to the right); 2) We allow the first row of the design matrix $\BMXi$ to have \textit{dependent} entries (i.e., entries of each $\BMxi_i$ can be dependent), while \citet{krahmer2014suprema} assumes  them to be independent; 3) We make an additional assumption on the vectors $\BMxi_i$ in \eqref{H-W}. It worth mentioning that this assumption is a mild one. It is even milder than the classic one for Hanson-Wright inequality \cite{rudelson2013hanson}. This is because we just require $\bP$ to be positive semi-definite matrices instead of any $d \times d$ matrices. Yet, whether this assumption can be further relaxed or even removed remains an open problem. \TiaQED
\end{remark}

Here, we have derived a theorem that holds for \textit{general sparse vectors}. As it turns out later, in our bandit problem in Section~\ref{main_results}, our unknown vector $\BMphi=\w \otimes \bt$ is $s$ block-sparse with block length of $d$. We will derive a restricted eigenvalue condition for block-sparse vectors, based on the next theorem that follows from Theorem~\ref{RIP:general} straightforwardly, to facilitate our analysis. It is worth mentioning that our result in Theorem~\ref{RIP:general} can be applied to more general problems.


\begin{theorem}\label{RIP:general:FORbandits}
	Consider the design matrix $\BMXi \in \R^{m \times hd}$ is given by the Toeplitz matrix
	\begin{align*}
		\BMXi=\frac{1}{\sqrt{m}} \begin{bmatrix}
			\BMxi_{h}^\top&\BMxi_{h-1}^\top&\cdots&\BMxi_{1}^\top\\
			\BMxi_{h+1}^\top&\BMxi_{h}^\top&\cdots&\BMxi_{2}^\top\\
			\vdots&\vdots&\ddots&\vdots\\
			\BMxi_{m+h-1}^\top&\BMxi_{m+h-2}^\top&\cdots&\BMxi_{m}^\top
		\end{bmatrix}.
	\end{align*}
	Assume that the vector $\BMxi$ satisfies the assumptions in Theorem~\ref{RIP:general}.
	Then, for any $\delta,\eta\in(0,1)$, there exists a constant $c>0$ such that, if 
	\begin{align}\label{sp_size:Toep}
		m \ge c \frac{1}{\delta^2} s_0 \max \left \{\log^2 (s_0) \log^2(hd), \log \big(\frac{1}{\eta} \big) \right \},
	\end{align}
	then the restricted isometry constant of the design matrix $\BMXi$ for all $s_0$-sparse vectors $\va\in \R^{hd}$, denoted as $\delta_{s_0}$, satisfies $\delta_{s_0} \le \delta$ with probability at least $1-\eta$. 
\end{theorem}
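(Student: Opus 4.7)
The plan is to reduce Theorem~\ref{RIP:general:FORbandits} to Theorem~\ref{RIP:general} via a standard circulant embedding: the Toeplitz design $\BMXi$ will be realized as a submatrix of a slightly larger sub-sampled block-circulant matrix, so the RIP guarantee of Theorem~\ref{RIP:general} transfers automatically.

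First I would introduce one auxiliary random vector $\BMxi_{m+h}$, drawn independently of and identically distributed to $\BMxi_1$, and set $n:=m+h$. Using $\BMxi_1,\ldots,\BMxi_n$, assemble the $n\times nd$ block-circulant matrix $\bC$ as in \eqref{cir:original}. A direct unpacking of that definition shows that whenever $i\ge h+1$, the $i$-th row of $\bC$ restricted to its first $h$ block-columns contains no circular wrap-around and reads $[\BMxi_{i-1}^\top,\BMxi_{i-2}^\top,\ldots,\BMxi_{i-h}^\top]$. In particular, for each $k\in[m]$, row $h+k$ of $\bC$ restricted to its first $h$ block-columns equals $[\BMxi_{h+k-1}^\top,\BMxi_{h+k-2}^\top,\ldots,\BMxi_k^\top]$, i.e.\ the $k$-th row of $\sqrt{m}\,\BMXi$. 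Therefore, setting $\Omega:=\{h+1,\ldots,h+m\}\subset[n]$, one obtains for every $\va\in\R^{hd}$
\[
\BMXi\,\va \;=\; \frac{1}{\sqrt{m}}\,\bR_\Omega\,\bC\,\tilde{\va},
\]
where $\tilde{\va}\in\R^{nd}$ is the zero-extension of $\va$ into the first $hd$ coordinates of $\R^{nd}$.

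Next I would apply Theorem~\ref{RIP:general} directly to the sub-sampled block-circulant matrix $\BMXi':=\tfrac{1}{\sqrt m}\bR_\Omega\bC\in\R^{m\times nd}$. Its hypotheses transfer verbatim, since $\BMxi_1,\ldots,\BMxi_n$ are independent, isotropic, and each satisfies the Hanson-Wright inequality \eqref{H-W} by assumption (the freshly drawn auxiliary vector inherits the property by construction). Under the sample-size condition $m\ge c_1\delta^{-2}s_0\max\{\log^2(s_0)\log^2(nd),\log(1/\eta)\}$, Theorem~\ref{RIP:general} yields $\delta_{s_0}(\BMXi')\le\delta$ with probability at least $1-\eta$. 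The RIP then transfers from $\BMXi'$ to $\BMXi$ for free: for any $s_0$-sparse $\va\in\R^{hd}$, its zero-extension $\tilde{\va}$ is again $s_0$-sparse and satisfies $\|\tilde\va\|_2=\|\va\|_2$ together with $\|\BMXi\va\|_2=\|\BMXi'\tilde\va\|_2$, so $\delta_{s_0}(\BMXi)\le\delta_{s_0}(\BMXi')\le\delta$ on the same event.

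To finish, I would verify that the Toeplitz-side hypothesis \eqref{sp_size:Toep}, stated with $\log(hd)$, implies the circulant-side hypothesis, which requires $\log(nd)=\log((m+h)d)$. Under \eqref{sp_size:Toep}, $m$ is polynomial in $s_0,h,d,\delta^{-1},\log(1/\eta)$, so $\log(nd)\lesssim\log(hd)$ up to a universal constant that can be absorbed into the prefactor $c$ in \eqref{sp_size:Toep}. The one mildly obstinate point will be the indexing bookkeeping in the embedding step, namely confirming that the $m$ consecutive rows $h+1,\ldots,h+m$ of the block-circulant of size $n=m+h$ recover exactly the Toeplitz block pattern without any wrap-around; but this is a routine combinatorial check, and all the probabilistic heavy lifting has already been done inside Theorem~\ref{RIP:general}.
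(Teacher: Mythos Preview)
Your embedding is correct and is the same idea the paper uses (the paper phrases it as taking the first $m$ rows and the last $hd$ columns of a size-$(m{+}h{-}1)$ block circulant, which avoids your auxiliary vector, but the two realizations are equivalent).

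The gap is in your last paragraph. You write that ``under \eqref{sp_size:Toep}, $m$ is polynomial in $s_0,h,d,\delta^{-1},\log(1/\eta)$,'' but \eqref{sp_size:Toep} is a \emph{lower} bound on $m$, not an upper bound: the theorem must cover every $m$ above the threshold, so $n=m+h$ can be arbitrarily large and $\log(nd)$ is not controlled by $\log(hd)$. A black-box application of Theorem~\ref{RIP:general} therefore only delivers the sample condition with $\log^2((m+h)d)$ in place of $\log^2(hd)$, which is not the stated bound. The paper is explicit about this: its remark after the theorem says the sharper $\log(hd)$ dependence is obtained ``following similar steps as those in the proof of Theorem~\ref{RIP:general},'' i.e.\ one re-runs that argument with the sparse vectors confined to the first $hd$ coordinates, so that the covering-number and $\gamma_2$ estimates produce $\log(hd)$ directly rather than $\log(nd)$.

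Your black-box route can be rescued, but it needs an honest bootstrap rather than the sentence you wrote: the circulant threshold grows only polylogarithmically in $m$, so the self-referential inequality $m\ge c_1\delta^{-2}s_0\log^2 s_0\,\log^2((m+h)d)$ holds for all $m$ once $m$ exceeds roughly $c_1\delta^{-2}s_0\log^2 s_0\,\log^2(\delta^{-2}s_0 hd)$, and the resulting extra $\log^2(\delta^{-1})$ factor can be absorbed into the $\delta$-dependent constant $c$ that the theorem statement permits. Either way, the argument as currently written does not establish the claim.
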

\begin{remark}
	Here, the Toeplitz matrix $\BMXi$ can be regarded as a matrix formed by sub-sampling the \textit{first} $m$ rows and last $hd$ columns of the circulant matrix in \eqref{cir:original}. Note that, the right hand side of \eqref{sp_size:Toep} has a polylogarithmic dependence on $hd$ instead of $nd$, which is a slightly \textit{better} than \eqref{sp_size:cir}. One can show this dependence following similar steps as those in the proof of Theorem~\ref{RIP:general}.
\end{remark}

\section{Proof of Theorem~\ref{RIP:general}}\label{ciclulant:RIP}

Next, we provide the proof of Theorem~\ref{RIP:general}.
We first make some preparation by rewriting \eqref{circ:meas}  into another form, with the aim to construct the proof from a different angle. 
We partition the vector $\va\in \R^{nd}$ in \eqref{circ:meas} into $n$ blocks and rewrite it as 
\[\va=[\va_1^\top,\va_2^\top, \dots, \va_n^\top]^\top, \hspace{20pt}\text{where } \va_i\in \R^d.\]
Note that here the design matrix $\BMXi$ is formed by subsampling the first $m$ rows of $\bC$ in \eqref{circ:meas}. Then,  one can rewrite \eqref{circ:meas} into
\begin{align}\label{circ:meas:2}
	\y = \frac{1}{\sqrt{m}} \bR_{\Omega}\underbrace{\begin{bmatrix}
			\va_n^\top &\va_{n-1}^\top & \cdots&  \va_{2}^\top &\va_{1}^\top \\
			\va_1^\top &\va_{n}^\top & \cdots&  \va_{3}^\top &\va_{2}^\top \\
			\vdots&\ddots & \vdots&\vdots&\vdots\\
			\va_{n-1}^\top &\va_{n-2}^\top & \cdots&  \va_{1}^\top &\va_{n}^\top \\
	\end{bmatrix}}_{\bV\in \R^{n \times nd}} \begin{bmatrix}\BMxi_1 \\ \BMxi_2 \\ \vdots\\ \BMxi_{n} \end{bmatrix} := \bA \BMxi.
\end{align}
Recall that $\Omega \subset [n]$ with $|\Omega|=m$ and $\bR_{\Omega}\in \R^{m\times n}$ selects the $m$ rows of $\bV$ that are indexed by $\Omega$. 

Recall that we aim to show that the random matrix $\BMXi$ satisfies the restricted isometry condition for the $s_0$-sparse vector $\va$.  Equivalently, it suffices to bound the following quantity 
\begin{align}\label{RIP_const}
	C_{\CA} (\BMxi) := \sup_{\bA\in\CA} \Big| \|\bA\BMxi\|_2^2 -\BE \|\bA\BMxi\|_2^2 \Big|,
\end{align}
where  $\CA$ is the set of matrices $\bA$ generated by $\va \in \CD_{s_0}$ with 
\begin{align}\label{set:D}
	\CD_{s_0}:= \left\{\va\in \R^{nd}: \|\va\|_2\le 1, \|\va\|_0\le s_0 \right\}.
\end{align}

For a matrix $\bA$, we call the sub-matrix consisting of the $((i-1)d+1)$-th to the $id$-th columns the $i$-th \textit{block column} of $\bA$ and denote it as $\bA^i$. As for our $\bA$ in \eqref{circ:meas:2}, Fig.~\ref{fig:block_column} illustrates how block columns are defined. 

\begin{figure}[ht]
	\centering
	\includegraphics[scale=1.25]{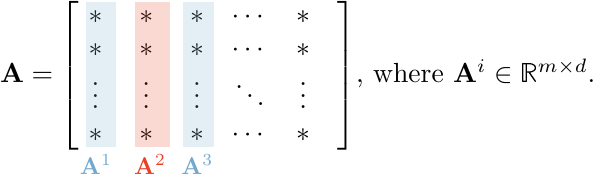}
	\caption{Illustration of the definition of block columns.}
	\label{fig:block_column}
\end{figure}

We then  define the following quantities
\begin{align*}
	&N_{\CA}(\BMxi):=\sup_{\bA \in \CA} \|\bA\BMxi\|_2, \hspace{12pt}  O_{\CA}(\BMxi):= \sup_{\bA\in\CA} \left| \sum_{i,j=1,i\neq j}^n\BMxi_i^\top (\bA^i)^\top \bA^j  \BMxi_j\right|,\\
	&D_{\CA} (\BMxi):= \sup_{\bA \in \CA} \left| \sum_{i=1}^n \left( \BMxi_i^\top (\bA^i)^\top \bA^i \BMxi_i -  \AngBra{\bA^i,\bA^i} \right) \right|,
\end{align*}
where $\AngBra{\bA^i,\bA^i}=\Tr\big((\bA^i)^\top \bA^i \big)$. 
Observe that 
\begin{align}\label{key}
	C_{\CA} (\BMxi)= \sup_{\bA \in \CA} \left| \sum_{i,j=1,i\neq j}^n\BMxi_i^\top (\bA^i)^\top \bA^j  \BMxi_j + \sum_{i=1}^n \BMxi_i^\top (\bA^i)^\top \bA^i \BMxi_i -  \AngBra{\bA^i,\bA^i} \right|\le O_{\CA}(\BMxi) + D_{\CA} (\BMxi).
\end{align}

Notice that $O_{\CA}(\BMxi)$ and $D_{\CA} (\BMxi)$ quantify the contributions of the off-block-diagonals and block-diagonals of the matrix $\bA^\top \bA$ to $C_{\CA}$, respectively. It remains to derive the bounds of the off-block-diagonal and block-diagonal terms. The bounds of these terms can be found in Lemma~\ref{bound_off_diagonal} and Theorem~\ref{bound:uniform} in Subsections~\ref{subsec:off_dia} and \ref{subsec:dia}, respectively. With the help of them, we are ready to construct the proof.

\begin{pfof}{Theorem~\ref{RIP:general}}
	We construct the proof using Lemma~\ref{bound_off_diagonal} and Theorem~\ref{bound:uniform} to bound $O_\CA(\xi)$ and $D_\CA(\xi)$, respectively. 
	
	\textbf{Step 1:} Since $\va$ is $s_0$-sparse, following similar steps as those in \citet{krahmer2014suprema}, one can derive that the quantities defined in Lemma~\ref{bound_off_diagonal} satisfy 
	\begin{align*}
		d_F(\CA) \le 1, \hspace{15pt} d_{2\to 2} (\CA) \le \sqrt{\frac{s_0}{m}},  \hspace{15pt} \text{and } \gamma_2 (\CA,\|\cdot\|_{\op}) \lesssim \sqrt{\frac{s_0}{m}} \log(s_0) \log(nd).
	\end{align*}
	Since $m \ge m_1=c_1 \frac{1}{\delta^2} \big( s_0\log^2 (s_0) \log^2(nd)+\log(\frac{1}{\eta_1}) \big)$, one can derive  from \eqref{inequ:chaos} that for any $\delta$ and $\eta_1\in(0,1)$, there exists $c_1>0$ such that 
	\begin{align}\label{off}
		\Pr \left[ O_{\CA}(\BMxi) \ge \frac{\delta}{2} \right] \le \eta_1. 
	\end{align}
	
	\textbf{Step 2:} To bound the block-diagonal term using Theorem~\ref{bound:uniform}, it remains to derive the covering number of the set $\CA$ with respect to the metric $\|\cdot\|_F$.  
	For any $\bA_1,\bA_2\in \CA$ that formed by $\va_1,\va_2 \in \CD_{s_0}$, respectively, it follows from \eqref{circ:meas:2} that  
	\begin{align*}
		\| \bA_1 - \bA_2 \|_F = \frac{1}{\sqrt{m}} \sqrt{m\|\va_1-\va_2\|^2_2} \le \| \va_1-\va_2 \|_2. 
	\end{align*}
	Then, it holds that $N(\CA,\|\cdot\|_F,u)\le N (\CD_{s_0}, \|\cdot\|_2,u)$.  From Proposition C.3 in \citet{foucart2013mathematical}, it holds that $N (\CD_{s_0}, \|\cdot\|_2,u) \le (1+\frac{2}{u})^{s_0}$. Substitute this $N(\CA,\|\cdot\|_F,u)$ into the following inequality (obtained in Theorem~\ref{bound:uniform})
	\begin{align*}
		\Pr \left[\sup_{A\in \CA } D_{\CA} (\BMxi) \ge t\right] \le 2 N\Big(\CA,\|\cdot\|_F,\frac{t}{6nd}\Big) \exp \left( -\frac{1}{ck^2}\min (m t^2 , m t ) \right),
	\end{align*}
	one can show that for any $\delta,\eta_2\in(0,1)$, there exists $c_2>0$  such that, for $m \ge m_2=c_2 \frac{1}{\delta^2} \big( s_0 \log(nd) +\log(\frac{1}{\eta_2})\big)$, it holds that 
	\begin{align}\label{diagonal}
		\Pr \left[ D_{\CA}(\BMxi) \ge \frac{\delta}{2} \right] \le \eta_2. 
	\end{align}
	Combing \eqref{off} and \eqref{diagonal} and choose $m\ge \max \{m_1,m_2\}$, one can conclude that $\Pr [C_\CA(\BMxi)\ge \delta] \le \eta$ with $\eta=\eta_1+\eta_2$, which completes the proof. 
\end{pfof}



\subsection{Bound of the off-block-diagonal term}\label{subsec:off_dia}
To show the bound of the off-block-diagonal term, the analysis is essentially the same as to that in \citet{krahmer2014suprema}. The main difference is that we derive a new decoupling inequality in Theorem~\ref{decoupling} (see there for a discussion of our contribution). We provide a complete proof here to be self-contained.
\begin{lemma}\label{bound_off_diagonal}
	The off-block-diagonal term  $O_{\CA}(\BMxi)$ satisfies 
	\begin{align}\label{inequ:chaos}
		\Pr\left[ O_{\CA} (\BMxi) \ge c_1 E +t \right] \le 2 \exp\left( -c_2\min \left\{ \frac{t^2}{V^2},\frac{t}{U}\right\} \right),
	\end{align}
	where 
	\begin{align}
		&E = \gamma_2 (\CA,\|\cdot\|_{\op}) \Big( \gamma_2 (\CA,\|\cdot\|_{\op}) +d_F(\CA)+ \sqrt{p} d_{2 \to 2} (\CA) \Big), \label{E}\\
		&V= d_{2\to 2}  (\CA) \Big( d_F(\CA)+ \gamma_2 (\CA,\|\cdot\|_{\op})  \Big), \label{V}\\
		& U=d_{2\to 2}^2 (\CA).\label{U}
	\end{align}
	Here, the quantities $d_F(\CA)$, $d_{2\to 2}(\CA)$, and $\gamma_2 (\CA,\|\cdot\|_{\op})$ satisfy
	\begin{align*}
		&d_F(\CA)=\sup_{A\in\CA}\|\bA\|_F, \\
		&d_{2\to 2} (\CA)=\sup_{\bA \in \CA} \|\bA\|_{\op}, \\
		&\gamma_2 (\CA,\|\cdot\|_{\op}) \le c \int_0^{d_{2\to 2 (\CA)}} \sqrt{ \log N(\CA,\|\cdot\|_\op,u)du},
	\end{align*} 
	where $N(\CA,\|\cdot\|_\op,u)$ is the covering number of $\CA$ with respect to the metric $(\|\cdot\|_\op,u)$.
\end{lemma}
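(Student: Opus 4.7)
\vspace{1ex}
The plan is to follow the chaos-process machinery of \citet{krahmer2014suprema}, with modifications needed because each $\BMxi_i$ has possibly dependent coordinates and is controlled only through the Hanson-Wright assumption \eqref{H-W}. I will view $O_\CA(\BMxi)$ as a homogeneous chaos of order two, pass by decoupling to a bilinear functional in two independent copies of $\BMxi$, control the supremum of that bilinear functional by generic chaining in the metric $\|\cdot\|_\op$, and finally obtain the deviation bound by conditional concentration.

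First, set $\bB_\bA = \bA^\top \bA$ with block-diagonal entries zeroed out. Then
\begin{equation*}
\sum_{i\ne j}\BMxi_i^\top (\bA^i)^\top \bA^j \BMxi_j = \BMxi^\top \bB_\bA \BMxi,
\end{equation*}
which is an off-diagonal quadratic chaos in the independent (but internally dependent) blocks $\BMxi_1,\dots,\BMxi_n$. I will invoke the new decoupling inequality (Theorem~\ref{decoupling}) to replace this chaos by its decoupled counterpart: for every convex nondecreasing $F$,
\begin{equation*}
\BE\, F\Bigl(\sup_{\bA\in\CA}\bigl|\BMxi^\top \bB_\bA \BMxi\bigr|\Bigr) \;\le\; C\, \BE\, F\Bigl(\sup_{\bA\in\CA}\bigl|(\bA\BMxi)^\top (\bA\BMxi')\bigr|\Bigr),
\end{equation*}
where $\BMxi'$ is an independent copy of $\BMxi$. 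The inequality \eqref{inequ:chaos} is then reduced to a tail bound on the decoupled supremum $\tilde O_\CA(\BMxi,\BMxi'):=\sup_{\bA\in\CA}|(\bA\BMxi)^\top (\bA\BMxi')|$.

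Next, conditioning on $\BMxi'$, the map $\bA\mapsto \BMxi^\top \bA^\top(\bA\BMxi')$ is a linear form in $\BMxi$, hence sub-Gaussian with proxy $\|\bA(\bA\BMxi')\|_2\le \|\bA\|_\op\|\bA\BMxi'\|_2$. Talagrand's generic chaining bound yields, conditionally on $\BMxi'$,
\begin{equation*}
\Pr\bigl[\tilde O_\CA \ge c\,\gamma_2(\CA,\|\cdot\|_\op)\,d_{2\to2}(\CA)\|\BMxi'\|_2 + u \bigm|\BMxi'\bigr] \le 2\exp\!\Bigl(-c\tfrac{u^2}{d_{2\to2}^2(\CA)\|\bA\BMxi'\|_2^2}\Bigr),
\end{equation*}
after which I take expectation in $\BMxi'$; the Hanson-Wright assumption \eqref{H-W} controls $\|\bA\BMxi'\|_2^2$ around $\|\bA\|_F^2\le d_F(\CA)^2$ with tail parameters $k^2 d_F(\CA)d_{2\to2}(\CA)$ and $k^2 d_{2\to2}^2(\CA)$. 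Matching Bernstein terms produces the stated $(V,U)$ in \eqref{V}--\eqref{U}, while the Dudley-type bound $\BE\tilde O_\CA \lesssim \gamma_2(\CA,\|\cdot\|_\op)\bigl(\gamma_2(\CA,\|\cdot\|_\op)+d_F(\CA)\bigr)+\sqrt{p}\,d_{2\to2}(\CA)\gamma_2(\CA,\|\cdot\|_\op)$ yields the expectation term $E$ in \eqref{E}.

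The main obstacle, and the place where our setting departs from \citet{krahmer2014suprema}, is the decoupling step: the classical de la Pe\~na--Gin\'e argument uses conditional independence of the coordinates within each block to guarantee the moment comparison, whereas our $\BMxi_i$ have only Hanson-Wright-type control \eqref{H-W}. I handle this by using Theorem~\ref{decoupling}, which is proved by a randomized selector argument combined with \eqref{H-W} applied to the conditional quadratic form, paying only absolute-constant factors (absorbed into $c_1,c_2$). Once decoupling is in hand, the remainder of the argument is a direct adaptation of the sub-Gaussian chaining inequality, and collecting the bounds gives \eqref{inequ:chaos}.
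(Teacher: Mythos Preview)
Your approach is essentially the same as the paper's: both follow the decoupling-plus-chaining machinery of \citet{krahmer2014suprema}, invoking Theorem~\ref{decoupling} for the decoupling step and generic chaining for the supremum. The paper packages the argument via $L_p$ moment bounds (implicitly for all $p\ge 1$, then converting to a tail bound through Lemma~\ref{exp_to_prob}), whereas you sketch a direct tail route by conditioning on $\BMxi'$ and integrating out; these are equivalent variants of the same argument.

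Two small points of looseness in your writeup: first, Theorem~\ref{decoupling} as stated gives only an $L_1$ comparison, not one for every convex nondecreasing $F$; you need the stronger form to transfer tail bounds, and the paper likewise needs it for the $L_p$ moments feeding Lemma~\ref{exp_to_prob} --- the extension is standard (same Bernoulli-selector proof with Jensen), but it should be stated. Second, in your conditional chaining step the relevant quantity is the \emph{increment} norm $\|(\bA_1^\top\bA_1-\bA_2^\top\bA_2)\BMxi'\|_2$, not the single-point proxy $\|\bA\|_\op\|\bA\BMxi'\|_2$; to chain in $\|\cdot\|_\op$ you must bound this increment by $\|\bA_1-\bA_2\|_\op\cdot N_\CA(\BMxi')$ (up to constants), and then control $N_\CA(\BMxi')$ separately --- exactly the route the paper takes through $\|N_\CA(\BMxi)\|_{L_1}$.
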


To construct the proof of the Lemma~\ref{bound_off_diagonal}, we need the following two lemmas from \citet{krahmer2014suprema}. 
\begin{lemma}
	Let $\CA$ be a set of matrices, let $\BMxi$ be a sub-Gaussian random vector, and let $\BMxi'$ be an independent copy of $\BMxi$. Then for any $\bA\in \R^{m \times n}$, it holds that
	\begin{align*}
		\left\| \sup_{\bA\in \CA} \AngBra{\bA\BMxi,\bA\BMxi'} \right\|_{L_1} \lesssim \gamma_2 (\CA,\|\cdot\|_{\op})\cdot \|N_\CA(\BMxi)\|_{L_1} + \sup_{\bA\in \CA} \|\AngBra{\bA\BMxi, \bA\BMxi'}\|_{L_1},
	\end{align*}
	where $\|X\|_{L_1}:= \BE |X|$.
\end{lemma}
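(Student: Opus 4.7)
The plan is to prove the inequality via conditioning on $\BMxi$ followed by generic chaining in the independent copy $\BMxi'$. First, fix any anchor matrix $\bA_0\in\CA$ and split
\[
\sup_{\bA\in\CA}\AngBra{\bA\BMxi,\bA\BMxi'} \;=\; \AngBra{\bA_0\BMxi,\bA_0\BMxi'} \;+\; \sup_{\bA\in\CA}\Big[\AngBra{\bA\BMxi,\bA\BMxi'} - \AngBra{\bA_0\BMxi,\bA_0\BMxi'}\Big].
\]
After taking absolute values and $L_1$-norms, the anchor term contributes at most $\sup_{\bA\in\CA}\|\AngBra{\bA\BMxi,\bA\BMxi'}\|_{L_1}$, which is exactly the second term on the right-hand side of the claim. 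All the real work then focuses on the centered supremum.

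Next, condition on $\BMxi$ and treat it as fixed. The map $\bA\mapsto \AngBra{\bA\BMxi,\bA\BMxi'}=\AngBra{\bA^\top \bA\BMxi,\BMxi'}$ is linear in the sub-Gaussian vector $\BMxi'$, so the conditional process is sub-Gaussian with $\psi_2$-increment of order $\|(\bA^\top \bA-\bA'^\top \bA')\BMxi\|_2$. Expanding via the telescoping identity
\[
\AngBra{\bA\BMxi,\bA\BMxi'}-\AngBra{\bA'\BMxi,\bA'\BMxi'} \;=\; \AngBra{\bA\BMxi,(\bA-\bA')\BMxi'} + \AngBra{(\bA-\bA')\BMxi,\bA'\BMxi'},
\]
I would bound this conditional $\psi_2$-increment by $\|\bA-\bA'\|_\op$ times a $\BMxi$-dependent quantity controlled by $N_{\CA}(\BMxi)=\sup_{\bA\in\CA}\|\bA\BMxi\|_2$. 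Talagrand's majorizing-measure / generic-chaining theorem would then yield
\[
\BE_{\BMxi'}\Big[\sup_{\bA\in\CA}\big(\AngBra{\bA\BMxi,\bA\BMxi'}-\AngBra{\bA_0\BMxi,\bA_0\BMxi'}\big)\Big] \;\lesssim\; N_{\CA}(\BMxi)\cdot \gamma_2(\CA,\|\cdot\|_\op).
\]
Taking expectation over $\BMxi$ and using Fubini produces the leading term $\gamma_2(\CA,\|\cdot\|_\op)\cdot\|N_{\CA}(\BMxi)\|_{L_1}$, which together with the anchor estimate closes the proof.

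\textbf{Main obstacle.} The delicate step is making the conditional $\psi_2$-increment bound rigorous in the form $\lesssim\|\bA-\bA'\|_\op\cdot N_{\CA}(\BMxi)$. The first piece of the telescoping decomposition is benign: its conditional $\psi_2$-norm is at most $\|(\bA-\bA')^\top \bA\BMxi\|_2\le \|\bA-\bA'\|_\op\cdot\|\bA\BMxi\|_2\le \|\bA-\bA'\|_\op\cdot N_{\CA}(\BMxi)$. The second piece, however, gives $\|\bA'^\top(\bA-\bA')\BMxi\|_2\le\|\bA'\|_\op\|(\bA-\bA')\BMxi\|_2$, and the factor $\|(\bA-\bA')\BMxi\|_2$ does not directly reduce to $\|\bA-\bA'\|_\op\cdot N_{\CA}(\BMxi)$ without extra work, since bounding it by $\|\bA-\bA'\|_\op\|\BMxi\|_2$ introduces an unwanted $\|\BMxi\|_2$ while bounding it by $2N_{\CA}(\BMxi)$ loses the $\|\bA-\bA'\|_\op$ factor. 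The standard remedy is a two-sided chaining argument that alternates the roles of $\BMxi$ and $\BMxi'$ so that both factors of the bilinear form are handled symmetrically, or equivalently paying a $d_{2\to 2}(\CA)$ factor that can be absorbed into the chaining level at the top of the tree. Once this metric comparison is fixed, the remainder is a direct invocation of Talagrand's theorem together with Jensen/Fubini.
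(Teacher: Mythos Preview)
The paper does not prove this lemma itself; it quotes it directly from Krahmer--Mendelson--Rauhut (2014). Your plan---anchor at some $\bA_0$, condition on $\BMxi$, and run generic chaining on the linear process in $\BMxi'$---is exactly their argument, and you have correctly isolated the only nontrivial step: one half of the telescoping increment produces $\|(\bA-\bA')\BMxi\|_2$, which is not dominated by $\|\bA-\bA'\|_\op\,N_\CA(\BMxi)$.

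Your first proposed remedy is the right one and is precisely how the cited reference closes the gap: split every chaining link as $\langle\bA\BMxi,(\bA-\bA')\BMxi'\rangle+\langle(\bA-\bA')\BMxi,\bA'\BMxi'\rangle$, condition on $\BMxi$ for the first piece and on $\BMxi'$ for the second. In each piece the factor $(\bA-\bA')^\top$ then hits a vector of the form $\bA\BMxi$ or $\bA'\BMxi'$, giving the clean increment bound $\|\bA-\bA'\|_\op\,N_\CA(\cdot)$; chaining each half separately and using $\BE N_\CA(\BMxi)=\BE N_\CA(\BMxi')=\|N_\CA(\BMxi)\|_{L_1}$ yields the stated estimate. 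Your alternative remedy of paying a $d_{2\to 2}(\CA)$ factor does \emph{not} recover the lemma as written: it forces you to bound $\|(\bA-\bA')\BMxi\|_2\le\|\bA-\bA'\|_\op\|\BMxi\|_2$, replacing $N_\CA(\BMxi)$ by $\|\BMxi\|_2$, which is strictly weaker (though it would still be enough for the downstream RIP application in this paper).
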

\begin{lemma}\label{copy}
	If $\BMxi'$ is an independent copy of $\BMxi$, then
	\begin{align*}
		\sup_{\bA \in \CA} \|\AngBra{\bA\BMxi,\bA\BMxi'}\|_{L_1} \lesssim \sqrt{p} d_F(\CA) d_{2 \to 2 } (\CA) + pd_{2\to 2}^2 (\CA). 
	\end{align*}
\end{lemma}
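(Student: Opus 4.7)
The plan is to establish, for each fixed $\bA \in \CA$, the moment bound
\[
\bigl\|\langle \bA\BMxi, \bA\BMxi'\rangle\bigr\|_{L_p} \lesssim \sqrt{p}\,d_F(\CA)\,d_{2\to 2}(\CA) + p\,d_{2\to 2}^2(\CA),
\]
and then take the supremum over $\bA\in\CA$. I read the ``$L_1$'' in the statement as shorthand for the general $L_p$-moment bound with $p$ the moment order, since the canonical Hanson--Wright/decoupled-chaos form of the right-hand side exhibits the characteristic $\sqrt{p}$ and $p$ scalings (and the $L_1$ specialization alone collapses, by Cauchy--Schwarz, to the single term $d_F(\CA)\,d_{2\to 2}(\CA)$, which does not match the displayed bound). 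Writing $\bM:=\bA^\top\bA$ so that $\langle \bA\BMxi,\bA\BMxi'\rangle=\BMxi^\top \bM \BMxi'$, it suffices to prove
\[
\|\BMxi^\top \bM \BMxi'\|_{L_p} \lesssim \sqrt{p}\,\|\bM\|_F + p\,\|\bM\|_{\op},
\]
after which the submultiplicative estimates $\|\bM\|_F\le\|\bA\|_{\op}\|\bA\|_F$ and $\|\bM\|_{\op}=\|\bA\|_{\op}^2$ convert this to the desired form upon taking the supremum.

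The first step is to decouple by conditioning on $\BMxi'$. Since $\BMxi=(\BMxi_1^\top,\ldots,\BMxi_n^\top)^\top$ is a concatenation of independent isotropic sub-Gaussian blocks, for any deterministic $v\in\R^{nd}$ the linear form $\BMxi^\top v=\sum_i \BMxi_i^\top v_i$ is sub-Gaussian with $\psi_2$-norm $\lesssim \|v\|_2$. Applying the standard sub-Gaussian moment bound $\|X\|_{L_p}\lesssim \sqrt{p}\,\|X\|_{\psi_2}$ with $v=\bM\BMxi'$ conditionally on $\BMxi'$, then integrating in $\BMxi'$ via Fubini, yields
\[
\|\BMxi^\top \bM \BMxi'\|_{L_p} \lesssim \sqrt{p}\,\bigl\|\|\bM\BMxi'\|_2\bigr\|_{L_p}.
\]
The second step controls $\bigl\|\|\bM\BMxi'\|_2\bigr\|_{L_p}$ by a mean-plus-deviation split. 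Isotropy gives $\BE\|\bM\BMxi'\|_2^2=\Tr(\bM^\top \bM)=\|\bM\|_F^2$, so $\BE\|\bM\BMxi'\|_2\le\|\bM\|_F$. For the deviation, $\|\bM\BMxi'\|_2^2=(\BMxi')^\top(\bM^\top \bM)\BMxi'$ is a quadratic form against the PSD matrix $\bM^\top \bM$; the Hanson--Wright inequality applied to the concatenated $\BMxi'$ gives an $L_p$-deviation of order $\sqrt{p}\,\|\bM^\top\bM\|_F+p\,\|\bM^\top\bM\|_{\op}\le \sqrt{p}\,\|\bM\|_{\op}\|\bM\|_F+p\,\|\bM\|_{\op}^2$. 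Taking square roots yields $\bigl\|\|\bM\BMxi'\|_2\bigr\|_{L_p}\lesssim \|\bM\|_F+\sqrt{p}\,\|\bM\|_{\op}$, and combining with the conditional bound produces the desired estimate.

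The main obstacle is lifting the per-block Hanson--Wright assumption \eqref{H-W} to a Hanson--Wright inequality for the concatenated vector $\BMxi'$ against the global PSD matrix $\bM^\top \bM$. One partitions $(\BMxi')^\top(\bM^\top\bM)\BMxi'$ into the block-diagonal sum $\sum_i(\BMxi_i')^\top(\bM^\top \bM)_{ii}\BMxi_i'$ and the off-block-diagonal cross-terms $\sum_{i\ne j}(\BMxi_i')^\top(\bM^\top \bM)_{ij}\BMxi_j'$. Each diagonal block $(\bM^\top \bM)_{ii}$ is PSD as a principal sub-matrix of a PSD matrix, so \eqref{H-W} applies directly to each independent $\BMxi_i'$; summing the independent centered deviations gives the diagonal contribution at the required $\sqrt{p}\,\|\cdot\|_F+p\,\|\cdot\|_{\op}$ scale. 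The off-block-diagonal piece is itself a decoupled sub-Gaussian chaos in independent pairs $\BMxi_i',\BMxi_j'$, handled by the same conditional-sub-Gaussian reasoning as in Step~1 (or, equivalently, a short decoupling argument \`a la Adamczak), combining into the concatenated Hanson--Wright estimate used above.
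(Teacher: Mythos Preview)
Your reading of the ``$L_1$'' as $L_p$ is correct and in fact necessary: the paper's statement carries a typo, since the subsequent proof of Lemma~\ref{bound_off_diagonal} invokes Lemma~\ref{exp_to_prob}, which requires moment bounds for all $p\ge p_0$, and the original Krahmer--Mendelson--Rauhut result is indeed the $L_p$ version. The paper itself does not prove this lemma; it imports it verbatim from \citet{krahmer2014suprema}.

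Your two-step strategy---condition on $\BMxi'$ to reduce to $\sqrt{p}\,\bigl\|\|\bM\BMxi'\|_2\bigr\|_{L_p}$, then control the norm of $\bM\BMxi'$---is the same opening move as in KMR and matches what the paper itself does in the closely related Theorem~\ref{diff_of_HW}. The difference lies in Step~2. You route through a full Hanson--Wright inequality for the concatenated vector $\BMxi'$ against the PSD matrix $\bM^\top\bM$, which you rightly flag as nontrivial under the per-block PSD-only assumption~\eqref{H-W}, and then sketch a diagonal/off-diagonal split to recover it. This works, but it is more than is needed. KMR (and the paper, in its proof of Theorem~\ref{diff_of_HW}) instead bound $\bigl\|\|\bM\BMxi'\|_2\bigr\|_{L_p}$ directly from sub-Gaussianity alone via the Talagrand--Mendelson comparison recorded here as Lemma~\ref{ineq:invl:gaussian}: taking $\BT=\{\bM^\top u:\|u\|_2\le 1\}$, that lemma gives
\[
\bigl\|\|\bM\BMxi'\|_2\bigr\|_{L_p}\;\lesssim\;\BE\|\bM\bg\|_2+\sqrt{p}\,\|\bM\|_{\op}\;\le\;\|\bM\|_F+\sqrt{p}\,\|\bM\|_{\op},
\]
which bypasses any need to lift \eqref{H-W} to the global vector. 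Your Hanson--Wright-based route buys nothing extra here and costs an auxiliary decomposition; the comparison lemma is the cleaner tool for this step.
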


\begin{lemma}\label{exp_to_prob}
	Suppose $z$ is a random variable satisfying 
	\begin{align*}
		(\BE|z|^p)^{1/p}\le \alpha+\beta \sqrt{p}+\gamma p, \text{for all } p\ge p_0
	\end{align*}
	for some $\alpha,\beta,\gamma,p_0 >0$. Then, for $u\ge p_0$,
	\begin{align*}
		\Pr \big(|z|\ge e(\alpha+\beta\sqrt{u}+\gamma y)\big)\le e^{-u}.
	\end{align*}
\end{lemma}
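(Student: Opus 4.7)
The plan is to convert the moment bound into a tail bound via Markov's inequality applied to the $p$-th power of $|z|$, with the free parameter $p$ optimized (in a very simple way) in terms of the target deviation level $u$. This is the standard moment-method route to sub-exponential/sub-Gaussian tail inequalities, and the factor of $e$ in the statement is precisely what allows the trivial choice $p = u$ to work.

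More concretely, I would first apply Markov's inequality in the form
\begin{equation*}
\Pr\bigl(|z| \ge t\bigr) \;=\; \Pr\bigl(|z|^p \ge t^p\bigr) \;\le\; \frac{\BE |z|^p}{t^p} \;\le\; \left(\frac{\alpha + \beta\sqrt{p} + \gamma p}{t}\right)^{p},
\end{equation*}
valid for every $p \ge p_0$, where the last step uses the hypothesis $(\BE|z|^p)^{1/p} \le \alpha + \beta\sqrt{p} + \gamma p$. Next, given $u \ge p_0$, I would set $p = u$ (legitimate since $u \ge p_0$) and $t = e\bigl(\alpha + \beta\sqrt{u} + \gamma u\bigr)$. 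With this choice the ratio inside the parentheses collapses to exactly $1/e$, so the bound becomes $(1/e)^{u} = e^{-u}$, which is what the lemma claims.

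There is essentially no analytical obstacle; the only thing to watch is the bookkeeping that $u \ge p_0$ is needed precisely so that the hypothesis can be invoked at $p = u$, and that the factor $e$ in the conclusion is what absorbs the $p^p$-type inflation one typically has to worry about when optimizing $p$. If one wanted to produce a sharper constant one could instead minimize $(\alpha + \beta\sqrt{p} + \gamma p)/t$ over $p$ in terms of $t$ (which gives the expected mixture of Gaussian and exponential tails), but since the statement only asks for the clean form with $e$ in front, the direct substitution $p = u$ is enough and keeps the argument to a couple of lines.
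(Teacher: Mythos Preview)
Your proposal is correct and is exactly the standard moment-to-tail argument: Markov on $|z|^p$, then the substitution $p=u$ makes the ratio $1/e$ and yields $e^{-u}$. The paper itself states this lemma without proof (it is quoted as a known auxiliary result, essentially Proposition~7.15 in Foucart--Rauhut), so there is nothing further to compare; your write-up would serve as the missing justification.
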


\begin{pfof}{Lemma~\ref{bound_off_diagonal}}
	Applying the decoupling inequality in Theorem~\ref{decoupling} to $O_{\CA}(\BMxi)$, we have 
	\begin{align}\label{O_xi:1}
		\|O_{\CA}(\BMxi)\|_{L_1}&\le 4 \BE \sup_{\bA\in\CA} \left| \sum_{i,j=1}^n\BMxi_i^\top (\bA^i)^\top \bA^j  \BMxi'_j\right| = 4 \left\| \sup_{\bA\in \CA} \AngBra{\bA\BMxi,\bA\BMxi'} \right\|_{L_1} \nonumber\\
		&\lesssim \gamma_2 (\CA,\|\cdot\|_{2\to 2})\cdot \|N_\CA(\BMxi)\|_{L_1} + \sup_{\bA\in \CA} \|\AngBra{\bA\BMxi, \bA\BMxi'}\|_{L_1}. 
	\end{align}
	
	Following similar steps as in \citet{krahmer2014suprema}, one can show that 
	\begin{align*}
		\|N_\CA(\BMxi)\|_{L_1} \lesssim \gamma_2 (\CA,\|\cdot\|_{\op}) +d_F (\CA) + d_{2 \to 2} (\CA), 
	\end{align*}
	and from Lemma~\ref{copy} we have
	\begin{align*}
		\sup_{\bA\in \CA} \|\AngBra{\bA\BMxi, \bA\BMxi'}\|_{L_1} \lesssim d_F(\CA) d_{2\to 2} (\CA) + d^2_{2 \to 2} (\CA). 
	\end{align*}
	Susbstitutting them into \eqref{O_xi:1}, we have $\|O_{\CA}(\BMxi)\|_{L_1} \lesssim E+U+V$. Applying Lemma~\ref{exp_to_prob} completes the proof. 
\end{pfof}

\begin{theorem}[Decoupling Inequality]\label{decoupling}
	Let $\x_1,\x_2,\dots,\x_n \in \R^d$ be independent random vectors satisfying $\BE \x_i=\boldsymbol{0}$. Let $\x=(\x_1^\top, \x_2^\top,\dots,\x_n^\top)^\top$. Define 
	$
	\CB_o := \left\{ \bB\in \R^{dn \times d n} \right\}
	$
	with 
	\begin{align*}
		&\bB=\begin{bmatrix}
			\bB_{11} & \bB_{12} &\dots &\bB_{1n}\\
			\bB_{21} & \bB_{22} &\dots &\bB_{2n}\\
			\vdots & \vdots &\ddots &\vdots\\
			\bB_{n1} & \bB_{n2} &\dots &\bB_{nn}
		\end{bmatrix}, \quad\text{and}\quad \bB_{ij} \in \R^{d \times d} \text{ for all } i, j.
	\end{align*}
	Let $\x'$ is an independent copy of $\x$. Then, it holds that
	\begin{align*}
		\BE_\x \sup_{\bB \in \CB_o} \left |\sum_{i,j=1,i\neq j}^{n} \x_i^\top \bB_{ij} \x_j \right | \le \BE_{\x,\x'} \sup_{\bB \in \CB_o}  \left |4 \sum_{i,j=1}^{n} \x_i^\top \bB_{ij} \x'_j \right|.
	\end{align*}
\end{theorem}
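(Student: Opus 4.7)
My plan is to adapt the classical de la Pe\~na--Gin\'e Bernoulli-selector decoupling argument to this block/vector-valued setting. Since the scalar-case proof uses only independence across indices and zero-mean of the $\x_i$, both available here, I expect the extension to preserve the factor of~$4$ with only notational changes.

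I would first introduce i.i.d.\ Bernoulli$(1/2)$ selectors $\delta_1, \ldots, \delta_n$, independent of $(\x, \x')$, and exploit $\delta_i(1-\delta_i)=0$ together with $\BE[\delta_i(1-\delta_j)] = 1/4$ for $i \neq j$ to write the off-diagonal chaos as
\begin{equation*}
\sum_{i \neq j} \x_i^\top \bB_{ij} \x_j \;=\; 4\,\BE_\delta \sum_{i,j=1}^n \delta_i(1-\delta_j)\, \x_i^\top \bB_{ij} \x_j.
\end{equation*}
Taking $\sup_{\bB\in\CB_o}|\cdot|$ and pushing both operations inside $\BE_\delta$ via Jensen and $\sup\BE \le \BE\sup$ reduces matters to bounding, for a generic realization of $\delta$ with $I := \{i : \delta_i = 1\}$, the quantity $\BE_\x \sup_{\bB}\big|\sum_{i \in I,\, j \in I^c} \x_i^\top \bB_{ij} \x_j\big|$, where the two index sets are now disjoint.

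The next step would be the actual decoupling and the extension to the full index range. Since $(\x_i)_{i \in I}$ and $(\x_j)_{j \in I^c}$ are independent and $\x'$ is an independent copy of $\x$, I can replace each $\x_j$ for $j \in I^c$ by $\x'_j$ without changing the law of the expression. To enlarge the constrained sum on $I \times I^c$ to the full sum $\sum_{i,j=1}^n \x_i^\top \bB_{ij} \x'_j$, I would condition on $\sigma\big((\x_i)_{i \in I}, (\x'_j)_{j \in I^c}\big)$ and observe that the three missing block contributions (for $(i,j)$ ranging over $I \times I$, $I^c \times I^c$, and $I^c \times I$) each have conditional mean zero, because $\BE\x_i = \mathbf{0}$ when $i \in I^c$ and $\BE\x'_j = \mathbf{0}$ when $j \in I$. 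Two applications of Jensen (first to move $|\cdot|$, then $\sup_{\bB}$ inside the conditional expectation) followed by full expectation yield a bound by the full decoupled sum, which is $\delta$-free; averaging out the Bernoullis completes the proof.

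The main obstacle is bookkeeping rather than idea: at the conditional-mean step I must carefully enumerate the four $(i,j)$ pair-types and track that the deterministic $\bB_{ij}$ passes through linearity of expectation so that $\BE\x_i$ or $\BE\x'_j$ can be pulled out as a zero \emph{vector}, not merely a zero scalar. Beyond that, the vector/matrix structure is inert: every step requires only independence across $i$ and the zero-mean hypothesis, both of which are given, so the proof goes through with the same constant~$4$ as in the scalar case and for any choice of supremum class $\CB_o$.
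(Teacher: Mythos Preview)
Your proposal is correct and follows essentially the same Bernoulli-selector decoupling argument as the paper, including the identity $\sum_{i\neq j}=4\,\BE_\delta\sum_{i,j}\delta_i(1-\delta_j)$, the passage to disjoint index sets, and the use of $\BE\x_i=\boldsymbol{0}$ to reinstate the missing blocks. The only cosmetic difference is that the paper fixes a single realization $\delta_0$ achieving the average bound before adding back the zero-mean terms, whereas you keep the expectation over $\delta$ throughout and average it out at the end; both routes are standard and yield the same constant~$4$.
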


\begin{remark}
	In \citet{rauhut2010compressive}, a similar decoupling inequality was presented, where $x_1,x_2,\dots, x_n$ are $1$-dimensional independent random variables. Here, we show that it can be generalized to multi-variate independent random vectors, without requiring the entries in each vector to be also independent.  The core of the proof  of this theorem follows from \citet{rauhut2010compressive}. 
\end{remark}

\begin{proof}
	Consider a sequence of independent random variable $\delta_1, \delta_2,\dots,\delta_n$ with each $\delta_i$ taking the values $0$ and $1$ with probability $\frac{1}{2}$. For $j\neq k$, $\BE \delta_j(1-\delta_k)=\frac{1}{4}$. Let $\delta=(\delta_1,\cdots,\delta_n)$.
	
	Since for any $\bB\in\CB$, $\bB_{ii}=\mathbf{0}$ for all $i$, we have 
	\begin{align*}
		\BE \sup_{\bB \in \CB_o} \left |\sum_{i,j=1,i\neq j}^{n} \x_i^\top \bB_{ij} \x_j \right|\le &4 \cdot  \BE_{\x} \sup_{\bB \in \CB_o} \left| \sum_{i,j=1,i\neq j}^{n} \BE[\delta_i (1-\delta_j)]  \x_i^\top \bB_{ij} \x_j \right| \\
		&\le 4 \cdot \BE_{\x,\delta} \sup_{\bB \in \CB_o} \left| \sum_{i,j=1,i\neq j}^{n} \delta_i (1-\delta_j)  \x_i^\top \bB_{ij} \x_j \right|.
	\end{align*}
	
	Next, we define the set $\CI(\delta):= \{k\in[n]: \delta_k=1\}$. Then, it follows from the Fubini’s theorem that
	\begin{align*}
		\BE \sup_{\bB \in \CB_o} \left |\sum_{i,j=1,i\neq j}^{n} \x_i^\top \bB_{ij} \x_j \right| \le  4  \cdot \BE_\delta \BE_\x \sup_{\bB \in \CB_o} \left| \sum_{i\in \CI(\delta)} \sum_{j \notin \CI(\delta)}  \x_i^\top \bB_{ij} \x_j\right|.
	\end{align*}
	For a fixed $\delta$, one can see that $\{\x_i\}_{i\in \CI(\delta)}$ and $\{\x_j\}_{j\notin \CI(\delta)}$ are independent. Therefore, we arrive at 
	\begin{align*}
		\BE \sup_{\bB \in \CB_o} \left |\sum_{i,j=1,i\neq j}^{n} \x_i^\top \bB_{ij} \x_j \right| \le  4  \cdot \BE_\delta \BE_\x \BE_{\x'} \sup_{\bB \in \CB_o} \left| \sum_{i\in \CI(\delta)} \sum_{j \notin \CI(\delta)}  \x_i^\top \bB_{ij} \x'_j\right|.
	\end{align*}
	Then, there exists $\delta_0$ such that 
	\begin{align*}
		\BE \sup_{\bB \in \CB_o} \left |\sum_{i,j=1,i\neq j}^{n} \x_i^\top \bB_{ij} \x_j \right| \le  4  \cdot \BE_\x \BE_{\x'} \sup_{\bB \in \CB_o} \left| \sum_{i\in \CI(\delta_0)} \sum_{j \notin \CI(\delta_0)}  \x_i^\top \bB_{ij} \x'_j\right|.
	\end{align*}
	Since $\BE[\x_i]=\BE[\x'_j]=\boldsymbol{0}$, we observe that 
	\begin{align*}
		&\BE_\x \BE_{\x'} \sup_{\bB \in \CB_o} \left| \sum_{i\in \CI(\delta_0)} \sum_{j \notin \CI(\delta_0)}  \x_i^\top \bB_{ij} \x'_j\right| \\
		&= \BE_\x\BE_{\x'} \sup_{\bB \in \CB_o} \left|  \sum_{i\in \CI(\delta_0)} \left( \sum_{j \notin \CI(\delta_0)}  \x_i^\top \bB_{ij} \x'_j + \sum_{j \in \CI(\delta_0)}  \x_i^\top \bB_{ij} \BE \x'_j\right) +\sum_{i\notin \CI(\delta_0)} \sum_{j=1}^{n}  \BE \x_i^\top \bB_{ij} \x'_j\right |.
	\end{align*}
	Applying the Fubini’s theorem, one has
	\begin{align*}
		\BE_\x \sup_{\bB \in \CB_o} \left |\sum_{i,j=1,i\neq j}^{n} \x_i^\top \bB_{ij} \x_j \right| \le  4  \cdot \BE_{\x,\x'} \sup_{\bB \in \CB_o} \left| \sum_{i=1}^n \sum_{j=1}^n  \x_i^\top \bB_{ij} \x'_j\right|,
	\end{align*}
	which completes the proof.
\end{proof}

\subsection{Bound of the block-diagonal term}\label{subsec:dia}

Now, we turn to bound the block-diagonal term $D_{\CA} (\BMxi)$. Define $\CB:=\{\bB=\bA^\top \bA: \bA\in \CA \in \R^{m \times nd} \}$. Recall that $\CA$ is defined after \eqref{RIP_const}. Given any $\bA\in \CA$, we have 
\begin{align*}
	\bB=\bA^\top \bA=\begin{bmatrix}
		\bB_{11}&\bB_{12}&\cdots&\bB_{1n}\\
		\bB_{21}&\bB_{22}&\cdots&\bB_{2n}\\
		\vdots&\vdots&\ddots&\vdots\\
		\bB_{n1}&\bB_{n2}&\cdots&\bB_{nn}
	\end{bmatrix}.
\end{align*}
One can see that $\bB_{ij}=\bB_{ji}$. Since $\bB$ is determined by $\bA$, we denote  $\BD(\bA)=\Blkdiag(\bB_{11},\bB_{22},\dots,\bB_{nn})$ be the block-diagonal matrix associated with $\bB=\bA^\top \bA$.  Then, $D_{\CA} (\BMxi)$ can be rewritten into 
\begin{align*}
	D_{\CA} (\BMxi)= \sup_{\bA \in \CA} \left| \sum_{i=1}^n \left( \BMxi_i^\top (\bA^i)^\top \bA^i \BMxi_i -  \AngBra{\bA^i,\bA^i} \right) \right|= \sup_{\bA \in \CA} \big| \BMxi^\top \BD(\bA) \BMxi - \Tr(\BD(\bA)) \big|.
\end{align*}

We have the following result for $D_{\CA} (\BMxi)$.



\begin{theorem}\label{bound:uniform}
	Let $\BMxi_1,\BMxi_2,\dots,\BMxi_n$ be independent random vector in $\R^d$.  Let $\BMxi=[\BMxi_1^\top, \BMxi_2^\top, \dots,\BMxi_n^\top]^\top$. Assume that $\BMxi$ satisfies the Hanson-Wright inequality \eqref{H-W}. Then, there exists a constant $k$ such that 
	\begin{align}\label{stacked:HW}
		\Pr \left[\sup_{A\in \CA } D_{\CA} (\BMxi) \ge t\right] \le 2 N\Big(\CA,\|\cdot\|_F,\frac{t}{6nd}\Big) \exp \left( -\frac{1}{ck^2}\min (m t^2 , m t ) \right),
	\end{align}
	where $N(\CA,\|\cdot\|_F,u)$ is the covering number of $u$-cover of $\CA$ with respect to the metric $\|\cdot\|_F$. 
\end{theorem}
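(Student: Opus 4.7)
The plan is to prove Theorem~\ref{bound:uniform} by combining a pointwise Hanson-Wright (HW) tail bound for each fixed $\bA\in\CA$ with a covering argument in the Frobenius metric and a perturbation estimate. This follows the standard uniform-HW template, adapted to the block-diagonal structure of $\BD(\bA)$ and leveraging the sub-sampled-circulant form of $\bA$ to extract the factor $m$ in the exponent.

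\textbf{Pointwise bound.} For fixed $\bA\in\CA$, $\BD(\bA)=\Blkdiag(\bB_{11},\ldots,\bB_{nn})$ with each $\bB_{ii}=(\bA^i)^\top \bA^i$ positive semidefinite. Hence $D(\bA)=\sum_{i=1}^n [\BMxi_i^\top \bB_{ii}\BMxi_i - \Tr(\bB_{ii})]$ is a sum of $n$ independent centered random variables. Applying the per-block assumption~\eqref{H-W} makes each summand sub-exponential with parameters $(k^2\|\bB_{ii}\|_F,\,k^2\|\bB_{ii}\|_\op)$; aggregation across independent terms à la Bernstein then yields
\[
\Pr[|D(\bA)|\ge t] \;\le\; 2\exp\!\Big(-\tfrac{c}{k^2}\min\!\big(t^2/\|\BD(\bA)\|_F^2,\,t/\|\BD(\bA)\|_\op\big)\Big).
\]
The two norms are then bounded using the structure $\bA=\tfrac{1}{\sqrt m}\bR_\Omega \bV$: each row of $\bV$ is a permutation of the blocks of $\va\in\CD_{s_0}$, and the cyclic structure forces each block-column to satisfy $\|\bV^i\|_F^2\le\|\va\|_2^2\le 1$. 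This yields $\|\bA^i\|_F^2\le 1/m$, whence $\|\BD(\bA)\|_\op\le 1/m$ and (via $\|\bB_{ii}\|_F\le\|\bA^i\|_F^2$) $\|\BD(\bA)\|_F^2\le 1/m$, producing the exponent $\min(mt^2,mt)/(ck^2)$ in the stated form.

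\textbf{Covering and perturbation.} Let $\CN_u$ be a minimal $u$-net of $\CA$ in $\|\cdot\|_F$ with $u:=t/(6nd)$. The identity $(\bA^i)^\top\bA^i-(\bA'^i)^\top\bA'^i=(\bA^i-\bA'^i)^\top \bA^i+(\bA'^i)^\top(\bA^i-\bA'^i)$ yields $\|\BD(\bA)-\BD(\bA')\|_\op\le\|\BD(\bA)-\BD(\bA')\|_F\le 2 d_{2\to 2}(\CA)\,\|\bA-\bA'\|_F\le 2u$. Combined with the deterministic inequality $|D(\bA)-D(\bA')|\le(\|\BMxi\|^2+nd)\,\|\BD(\bA)-\BD(\bA')\|_\op$ and the high-probability event $\|\BMxi\|^2\le 2nd$ (itself obtained by applying per-block HW with $I_d$ and a union bound over $i$), one gets $|D(\bA)-D(\bA')|\lesssim nd\cdot u \asymp t$. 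A union bound over $\CN_u$ of the pointwise tail, at a suitably rescaled threshold, then delivers the claim with prefactor $N(\CA,\|\cdot\|_F,u)$.

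\textbf{Main obstacle.} The main technical subtlety is matching the \emph{single}-exponential form of the stated bound: the deterministic Lipschitz step relies on the auxiliary event $\|\BMxi\|^2\lesssim nd$, whose failure probability must be folded into the main tail without spoiling the stated form. This is benign in the RIP regime, since the per-block HW bound for $\|\BMxi_i\|^2-d$ decays exponentially in $d$ and dominates. A cleaner probabilistic alternative decomposes $\BD(\bA)-\BD(\bA')=M_+-M_-$ into its PSD plus/minus parts with $\|M_\pm\|_F\le\|M\|_F$ and $\|M_\pm\|_\op\le\|M\|_\op$, and applies per-block HW to each piece, bypassing any $\|\BMxi\|$ bound at the price of an additional union bound over pairs in $\CN_u$. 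Getting the constants (and in particular the factor $6nd$ in the covering resolution) to match exactly is where the main care is needed.
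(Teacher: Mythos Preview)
Your proposal is correct and follows essentially the same two-part strategy as the paper: a pointwise Hanson--Wright bound with the key norm estimates $\|\BD(\bA)\|_F^2,\|\BD(\bA)\|_\op\le 1/m$ extracted from the sub-sampled circulant structure, followed by a Frobenius-net covering argument with a perturbation step controlled on the event $\|\BMxi\|_2^2\lesssim nd$. Your route to the norm bounds (via $\|\bA^i\|_F^2\le 1/m$, which needs the extra observation $\sum_i\|\bA^i\|_F^2=\|\bA\|_F^2\le1$ to conclude $\|\BD(\bA)\|_F^2\le 1/m$) and your Lipschitz estimate (via $d_{2\to2}(\CA)\le1$) differ cosmetically from the paper's direct expansion and use of $\|\bA_u\|_F\le1$, but the architecture is identical.
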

\begin{proof}
	We construct the proof in two parts:
	\begin{enumerate}
		\item In part I, we prove 
		\begin{align}\label{HW:forward}
			\Pr \left[\big| \BMxi^\top \BD(\bA) \BMxi-\Tr(\BD(\bA)) \big| \ge t\right] \le 2 \exp \left( -\frac{1}{c k_1^2}\min (m t^2 , m t )\right).
		\end{align}
		Since it follows from the Hanson-Wright inequality \eqref{H-W} that there is $k_1>0$ such that
		\begin{align}\label{stacked:HW:norm}
			\Pr \left[\big| \BMxi^\top \BD(\bA) \BMxi- \Tr(\BD(\bA)) \big| \ge t\right] \le 2 \exp \left( -\frac{1}{c k_1^2}\min \left( \frac{t^2}{\|\BD(\bA)\|_F^2 \| },\frac{t}{\|\BD(\bA)\|_{\op}}\right) \right),
		\end{align}
		we will just show
		\begin{align*}
			&\sup_{\bA\in \CA} \|\BD(\bA)\|_F^2\le \frac{1}{m}, &\sup_{\bA\in \CA} \|\BD(\bA)\|_{\op}\le \frac{1}{m}.
		\end{align*}
		\item In part II, we complete the proof following a similar covering argument as in the proof of \citet[Theorem 9.8]{foucart2013mathematical}. 
	\end{enumerate}

	\textbf{Part I-\textit{Step 1}:} In this step, we bound $\sup_{\bA\in \CA}\|\BD(\bA)\|_F^2$. 
	For every $\bA\in \CA$, it holds that $\|\BD(\bA)\|_F^2=\sum_{i=1}^n \|B_{ii}\|_F^2$. Since $\bB=\bA^\top \bA$ and $\bA=\frac{1}{\sqrt{m}}\bR_{\Omega}V$ with $\bV$ given in  \eqref{circ:meas:2}, we have 
	\begin{align}
		\bB_{ii}= \frac{1}{m} (\bV^i)^\top \bR_{\Omega}^\top  \bR_{\Omega} \bV^i,
	\end{align}
	where $\bV^i$ is the $i$th block column of $\bV$. Recall the definition of $\bR_{\Omega}$, one can derive that 
	\begin{align}\label{diag:blocks}
		\bB_{ii}= \frac{1}{m}\sum_{j\in \Omega} (\bV^i_j)^\top \bV^i_j,
	\end{align}
	where $\bV^i_j$ is the $j$th row of $\bV^i$. 
	Due to the block-diagonal structure of $\bB$, it holds that $\|\BD(\bA)\|_F^2=\sum_{i=1}^n \|\bB_{ii}\|_F^2$. 
	Observe that
	\begin{align*}
		\sum_{i=1}^n \|\bB_{ii}\|_F^2 = \frac{1}{m^2} \sum_{i=1}^n \left( \Big\|\sum_{j\in \Omega} (\bV^i_j)^\top \bV^i_j \Big\|_F^2 \right).
	\end{align*}
	Using the expression of $\bV$ defined in \eqref{circ:meas:2}, one can derive that 
	\begin{align*}
		\sum_{i=1}^n \left( \Big\|\sum_{j\in \Omega} (\bV^i_j)^\top \bV^i_j \Big\|_F^2 \right) &\le m \left(\sum_{i=1}^n \|\va_i\va_i^\top\|_F^2 \right)+ 2 m \sum_{i\neq j} \| \va_i\va_i^\top\|_F \| \va_j\va_j^\top\|_F \\& =m (\|\va_1\|^2+\dots+\|\va_n\|^2)^2,
	\end{align*}
	where the last inequality has used the fact that $\|\va_i \va_i\|_F=\|\va_i\|^2$.
	Since $\|\va_1\|^2+\dots+\|\va_n\|^2\le 1$, it follows that
	\begin{align*}
		\sum_{i=1}^n \|\bB_{ii}\|_F^2 \le \frac{1}{m^2} \cdot m =\frac{1}{m}, 
	\end{align*}
	which proves $\sup_{\bA\in \CA} \|\BD(\bA)\|_F^2\le \frac{1}{m}$.

	\vspace{5pt}
	\textbf{Part I-\textit{Step 2}:}  In this step, we bound $\sup_{A\in \CA}\|D(\bA)\|_{\op}$. Since $\BD(\bA)$ is symmetric for the $\BD(A)$ associated with any $\bA \in \CA$, by the definition of the operator norm, we know $\|\BD(\bA)\|_{\op}=\lambda_{\max}(\BD(\bA))$. 
	Since $\BD(\bA)$ has the diagonal form $\Blkdiag(\bB_{11},\bB_{22},\dots,\bB_{nn})$, one can derive that $\lambda_{\max}(\BD(\bA))=\max\{\lambda_{\max}(\bB_{ii}):i=1,2,\dots,p\}$. Given the expressions of $\bB_{ii}$ shown in \eqref{diag:blocks}, it follows from Lemma~\ref{eig:max} that $\lambda_{\max}(\bB_{ii})\le \frac{1}{m}$ for all $i=1,2,\dots,n$, which implies that $\sup_{\bA\in \CA} \|\BD(\bA)\|_{\op}=\lambda_{\max}(\BD(\bA))\le \frac{1}{m}$.

	\vspace{15pt}
	
	\textbf{Part II:} Let $\CA_u$ be the smallest $u$-cover of the set $\CA$ with respect to the metric $\|\cdot\|_F$, and its cardinality is $N(\CA, \|\cdot\|_F, u)$. It holds that $\min_{A_u\in \CA_u} \|\bA- \bA_u\|_F \le u$ for all $\bA \in \CA$.  From the inequality \eqref{HW:forward}, one can derive that 
	\begin{align*}
		\Pr \left[ \exists \bA_u \in \mathcal \bA_u: \left | \BMxi^\top \BD(\bA_u) \BMxi- \Tr(\BD(\bA_u)) \right|> \eta \right] &\le \sum_{\bA_u \in \mathcal \bA_u} \Pr \left[ \left | \BMxi^\top \BD(\bA_u) \BMxi- \Tr(\BD(\bA_u)) \right|> \eta \right] \\
		&\le  2 N(\CA, \|\cdot\|_F, u) \exp \left( -\frac{1}{ck_1^2}\min (m \eta^2 , m \eta )\right),
	\end{align*}
	where the second inequality follows from Lemma~\ref{HW:individual_to_pack}. 
	Then, one can derive that
	\begin{align*}
		\Pr \left[ \sup_{\bA_u\in \mathcal \bA_u} \left | \BMxi^\top \BD(\bA_u) \BMxi- \Tr(\BD(\bA_u)) \right|\le  \eta \right] \ge 1- 2 N(\CA, \|\cdot\|_F, u) \exp \left( -\frac{1}{ck_1^2}\min (m \eta^2 , m \eta )\right).
	\end{align*}
	Now, observe that for any $\bA \in \CA$, it holds that
	\begin{align*}
		&\left |\BMxi^\top \BD(\bA) \BMxi-\Tr(\BD(\bA)) \right| \\
		&= \left | \BMxi^\top \BD(\bA_u) \BMxi- \Tr(\BD(\bA_u)) + \BMxi^\top \big(\BD(\bA) - \BD(\bA_u) \big) \BMxi +\Tr(\BD(\bA_u))- \Tr(\BD(\bA)) \right| \\
		&\le   \left | \BMxi^\top \BD(\bA_u) \BMxi-\Tr(\BD(\bA_u)) \right| + \left| \BMxi^\top \big(\BD(\bA) - \BD(\bA_u) \big) \BMxi  \right| +\left| \Tr \big(\BD(\bA_u)-\Tr(\BD(\bA))\big) \right|. 
	\end{align*}
	Assume that $\left | \BMxi^\top \BD(\bA_u) \BMxi -\Tr(\BD(\bA)) \right|\le \eta$, and we arrive at 
	\begin{align*}
		\left |\BMxi^\top \BD(\bA) \BMxi- \Tr(\BD(\bA)) \right| \le \eta + \|\BD(\bA) - \BD(\bA_u)\|_2 \|\BMxi\|_2^2 + nd \|\BD(\bA) - \BD(\bA_u)\|_2,
	\end{align*}
	where $\bA_u$ is chosen such that $\|\bA-\bA_u\|_F \le u$.
	Now, observe that 
	\begin{align*}
		\|\BD(\bA) - \BD(\bA_u)\|_2 \le\|\BD(\bA) - \BD(\bA_u)\|_F \le   \|\bB- \bB_u\|_F,
	\end{align*}
	and 
	\begin{align*}
		\bB- \bB_u =(\bA-\bA_u) ^\top (\bA-\bA_u) + \bA_u^\top(\bA-\bA_u) + (\bA-\bA_u)^\top \bA_u.
	\end{align*}
	Using the fact that $\|\bA_u\|_F\le 1$, we have 
	\begin{align*}
		\|\BD(\bA) - \BD(\bA_u)\|_2\le   \|\bB- \bB_u\|_F \le u^2 +2 u
	\end{align*}
	Since $\BMxi$ satisfies the Hanson-Wright inequality \eqref{H-W}, $\|\BMxi\|_2^2\le nd $ with probability at least $1-2\exp(-nd)$. Therefore, $\left |\BMxi^\top \BD(\bA) \BMxi- \Tr(\BD(\bA)) \right| \le \eta + 2 nd (u^2+2u)$. Choose $\eta=\frac{t}{2}$ and $u=\frac{t}{6nd}$, we deduce, using the fact $u^2+2u<3u$ for $u<1$, that
	\begin{align*}
		\Pr \left[ \sup_{\bA\in \mathcal A} \left | \BMxi^\top \BD(\bA) \BMxi- \Tr(\BD(\bA)) \right|\ge  t \right] \le 2 N \Big(\CA,\|\cdot\|_F, \frac{t}{6nd}\Big) \exp \left( -\frac{1}{4ck_1^2}\min (m t^2 , m t )\right),
	\end{align*}
	which completes the proof.
\end{proof}

\subsection{Useful Lemmas and Theorems}

\begin{lemma}\label{eig:max}
	Let $\va=[\va_1^\top,\va_2^\top,\dots,\va_n^\top]^\top$, where $\va_i\in \R^d$, be a vector that satisfies $\|\va\| \le 1$. Consider the matrix $\bP=\sum_{i=1}^n k_i \va_i \va_i^\top$, where $k_i\in \{0,1\}$, $i=1,2,\dots,n$. Then, the largest eigenvalue of $\bP$ satisfies $\lambda_{\max}(\bP)\le 1$ for any possible combinations of $k_i$'s. 
\end{lemma}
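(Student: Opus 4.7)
The plan is to use the variational characterization of the largest eigenvalue together with the fact that $\sum_{i=1}^n \|\va_i\|_2^2 = \|\va\|_2^2 \le 1$. Because $\bP$ is symmetric and positive semi-definite, one has
\[
\lambda_{\max}(\bP) \;=\; \sup_{\|\vct{u}\|_2=1} \vct{u}^\top \bP \vct{u} \;=\; \sup_{\|\vct{u}\|_2=1} \sum_{i=1}^n k_i (\vct{u}^\top \va_i)^2.
\]
So the task reduces to bounding $\sum_i k_i (\vct{u}^\top \va_i)^2$ uniformly in $\vct{u}$ with $\|\vct{u}\|_2=1$ and in the binary choice of the $k_i$'s.

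The key step is a term-by-term Cauchy--Schwarz bound: for every unit vector $\vct{u}$ and every $i$, $(\vct{u}^\top \va_i)^2 \le \|\vct{u}\|_2^2 \|\va_i\|_2^2 = \|\va_i\|_2^2$. Since each $k_i \in \{0,1\}$, we have
\[
\sum_{i=1}^n k_i (\vct{u}^\top \va_i)^2 \;\le\; \sum_{i=1}^n k_i \|\va_i\|_2^2 \;\le\; \sum_{i=1}^n \|\va_i\|_2^2 \;=\; \|\va\|_2^2 \;\le\; 1,
\]
where the penultimate equality uses the block partition of $\va$. Taking the supremum over $\vct{u}$ then yields $\lambda_{\max}(\bP) \le 1$, uniformly over the combinatorial choice of $k_i$'s.

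There is essentially no technical obstacle: the assumption $\|\va\|_2 \le 1$ plus the block structure of $\va$ directly controls the sum of squared row norms of the matrix $\bA := [\sqrt{k_1}\va_1,\dots,\sqrt{k_n}\va_n]^\top$, whose Gram matrix $\bA^\top \bA$ equals $\bP$. Equivalently, $\lambda_{\max}(\bP) \le \|\bA\|_F^2 \le \|\va\|_2^2 \le 1$, which is an alternative one-line formulation of the same argument and could be used as a cross-check. The independence of the bound from the particular pattern $(k_1,\dots,k_n)\in\{0,1\}^n$ is automatic because dropping terms only decreases the sum.
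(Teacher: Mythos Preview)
Your proof is correct and follows essentially the same route as the paper's: both arguments bound $\lambda_{\max}(\bP)$ by $\sum_{i=1}^n k_i\|\va_i\|_2^2 \le \sum_{i=1}^n \|\va_i\|_2^2 = \|\va\|_2^2 \le 1$. The only cosmetic difference is that you reach this via the Rayleigh quotient and Cauchy--Schwarz, whereas the paper uses subadditivity of the operator norm together with $\|\va_i\va_i^\top\|_\op = \|\va_i\|_2^2$; these are two phrasings of the same trace bound on a sum of rank-one PSD matrices.
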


\begin{proof}
	It can be seen that the matrix $\bP$ satisfies $\bP^\top=\bP$. Therefore, $\lambda_{\max}(\bP)=\|\bP\|$. It then follows that 
	\begin{align*}
		\lambda_{\max}(\bP)=\|\bP\| \le \sum_{i=1}^n k_i\|\va_i \va_i^\top\|\le \sum_{i=1}^n k_i\|\va_i \va_i^\top\|_F \le \sum_{i=1}^n \|\va_i \va_i^\top\|_F.
	\end{align*}
	Observe that $\|\va_i \va_i^\top\|_F= \|\va_i\|^2$. As a consequence, we have
	\begin{align*}
		\lambda_{\max}(\bP)\le \sum_{i=1}^n \|\va_i \va_i^\top\|_F= \sum_{i=1}^n \|\va_i\|^2  =\|\va\|^2 \le 1,
	\end{align*}
	which completes the proof. 
\end{proof}

\begin{lemma}\label{HW:individual_to_pack}
	Assume $\BMxi_1,\BMxi_2,\dots,\BMxi_n \in \R^d$ are independent isotropic sub-Gaussian random vectors, and each $\BMxi_i$ satisfies
	\begin{align*}
		\Pr \left[|\BMxi_i^\top \bP \BMxi_i -\BE (\BMxi_i^\top \bP \BMxi_i)|\ge \eta \right] 
		\le 2 \exp \left( \frac{1}{c} \min \left\{ \frac{\eta^2}{k^4 \|\bP\|_F^2},\frac{\eta}{k^2 \|\bP\|_\op}\right\}\right) 
	\end{align*}
	for any positive semi-definite matrix $\bP\in\R^{d\times d}$. Consider positive semi-definite matrices $\bP_1,\bP_2,\dots,\bP_n \in\R^{d\times d}$. Let $\BD(P):=\Blkdiag(\bP_1,\bP_2,\dots,\bP_n)$ and $\BMxi=[\BMxi_1^\top,\BMxi_2^\top,\dots,\BMxi_n^\top]^\top$. Then, there exists $k_1>0$ such that
	\begin{align}
		\Pr \left[|\BMxi^\top \BD(\bP) \BMxi -\BE (\BMxi^\top \BD(\bP) \BMxi)|\ge \eta \right] 
		\le 2 \exp \left( \frac{1}{c} \min \left\{ \frac{\eta^2}{k^4 \|\BD(\bP)\|_F^2},\frac{\eta}{k^2 \|\BD(\bP)\|_\op}\right\}\right).
	\end{align}
\end{lemma}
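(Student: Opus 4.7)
\textbf{Proof plan for Lemma~\ref{HW:individual_to_pack}.}

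The plan starts from the structural observation that the block-diagonal form of $\BD(\bP)$ decouples the quadratic form: namely,
\begin{equation*}
\BMxi^\top \BD(\bP) \BMxi \;=\; \sum_{i=1}^n \BMxi_i^\top \bP_i \BMxi_i,
\end{equation*}
so that centering yields a sum $\sum_{i=1}^n X_i$ of \emph{independent} mean-zero random variables $X_i := \BMxi_i^\top \bP_i \BMxi_i - \BE[\BMxi_i^\top \bP_i \BMxi_i]$. Two norm identities (both immediate, using that each $\bP_i$ is PSD and hence $\BD(\bP)$ is PSD) will tie the block quantities to the aggregate ones:
\begin{equation*}
\|\BD(\bP)\|_F^2 \;=\; \sum_{i=1}^n \|\bP_i\|_F^2,
\qquad
\|\BD(\bP)\|_{\op} \;=\; \max_{1\le i\le n}\|\bP_i\|_{\op}.
\end{equation*}

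Next I would pass through moment generating functions. The hypothesis is a Bernstein-type tail bound on each $X_i$ with ``variance'' parameter $K_i^2 = k^4\|\bP_i\|_F^2$ and ``scale'' parameter $A_i = k^2\|\bP_i\|_{\op}$. By the standard equivalence between Bernstein tails and sub-exponential MGF bounds (e.g.\ Proposition~2.7.1 in Vershynin, 2018), this is equivalent, up to absolute constants, to
\begin{equation*}
\BE\bigl[\exp(\lambda X_i)\bigr] \;\le\; \exp\bigl(c' k^4 \|\bP_i\|_F^2\, \lambda^2\bigr)
\qquad\text{for all }|\lambda|\le \frac{1}{c' k^2 \|\bP_i\|_{\op}}.
\end{equation*}
Since the $X_i$ are independent, for every $\lambda$ satisfying the uniform constraint $|\lambda|\le 1/(c' k^2 \|\BD(\bP)\|_{\op}) = \min_i 1/(c' k^2 \|\bP_i\|_{\op})$, the MGFs factor:
\begin{equation*}
\BE\Bigl[\exp\!\Bigl(\lambda\sum_{i=1}^n X_i\Bigr)\Bigr]
\;\le\; \exp\!\Bigl(c'k^4 \lambda^2 \sum_{i=1}^n \|\bP_i\|_F^2\Bigr)
\;=\; \exp\bigl(c'k^4 \lambda^2 \|\BD(\bP)\|_F^2\bigr).
\end{equation*}

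Finally, I would apply Chernoff's bound $\Pr[\sum X_i \ge \eta]\le \exp(-\lambda\eta)\,\BE[\exp(\lambda\sum X_i)]$ and optimize over the admissible range of $\lambda$. For $\eta\le k^2 \|\BD(\bP)\|_F^2/\|\BD(\bP)\|_{\op}$, the unconstrained minimizer $\lambda = \eta/(2c' k^4 \|\BD(\bP)\|_F^2)$ is admissible and yields the $\eta^2/\|\BD(\bP)\|_F^2$ branch; for larger $\eta$, take $\lambda$ at the boundary $1/(2c'k^2\|\BD(\bP)\|_{\op})$ to obtain the $\eta/\|\BD(\bP)\|_{\op}$ branch. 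A symmetric argument for $-\sum X_i$ and a factor of $2$ then recover the two-sided tail bound of the stated form, with a possibly larger absolute constant $k_1$ replacing $k$.

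The only genuine subtlety is the bookkeeping in the equivalence between the Bernstein tail and the sub-exponential MGF bound: the constants do not pass through unchanged, which is precisely why the statement permits a new constant $k_1$ rather than reusing $k$. No new probabilistic ideas are needed beyond independence plus the Chernoff--Bernstein template, so once the equivalence is cited and the norm identities noted, the proof reduces to a short computation.
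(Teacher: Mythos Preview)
Your proposal is correct and follows essentially the same route as the paper: decompose the block-diagonal quadratic form into a sum of independent centered terms $X_i=\BMxi_i^\top\bP_i\BMxi_i-\BE[\BMxi_i^\top\bP_i\BMxi_i]$, observe each is sub-exponential with parameters $(k^2\|\bP_i\|_F,\,k^2\|\bP_i\|_{\op})$, and apply Bernstein's inequality together with the identities $\|\BD(\bP)\|_F^2=\sum_i\|\bP_i\|_F^2$ and $\|\BD(\bP)\|_{\op}=\max_i\|\bP_i\|_{\op}$. The paper simply cites Bernstein's inequality from \citet[Chap.~2]{vershynin2018high} as a black box, whereas you unpack the tail-to-MGF equivalence and the Chernoff optimization explicitly; both arrive at the same bound with a possibly inflated constant $k_1$.
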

\begin{proof}
	Due to the special block diagonal structure of $\BD(\bP)$, it can be seen that $\BMxi^\top \BD(\bP) \BMxi-\Tr(\BD(\bP))=\sum_{i=1}^n \big(\BMxi_i ^\top \bP_{i} \BMxi_i -\Tr(\bP_{i})\big)$. Each  $\BMxi_i ^\top \bP_{i} \BMxi_i -\Tr(\bP_{i})$ is a mean zero sub-exponential random variable. 
	Applying the Bernstein's inequality \cite[Chap. 2]{vershynin2018high}, one can derive that there exists $k_1>0$ such that  
	\begin{align*}
		\Pr \left[\left| \sum_{i=1}^n (\BMxi_i^\top \bP_{i} \BMxi_i -\Tr(\bP_{i})) \right| \ge t\right] \le 2 \exp \left( -\frac{1}{ck_1^2}\min \left( \frac{t^2}{\sum_{i=1}^n \|\bP_{i}\|_F^2 },\frac{t}{\sup_{i} \|\bP_{i}\|_{\op}}\right) \right).
	\end{align*}
	Using the fact that $\sum_{i=1}^n \|\bP_{i}\|_F^2 = \|\BD(\bP)\|_F^2$ and $\sup_{i} \|\bP_{i}\|_{\op} =\|\BD(\bP)\|_\op$ completes the proof. 
\end{proof}

\begin{theorem}\label{diff_of_HW}
	Let $\BMxi\in\R^d$ be an isotropic and $1$-Sub-Gaussian random vector satisfying the Hanson-Wright inequality
	\begin{align}\label{ineq:HW1}
		\Pr \left[|\BMxi^\top  \bA \BMxi -\BE (\BMxi^\top  \bA \BMxi)|\ge \eta \right]	&\le 2 \exp \big(- \frac{1}{c} \min \big\{ \frac{\eta^2}{k^4 \|\bA\|_F^2},\frac{\eta}{k^2 \|\bA\|_\op}\big\}\big), ~~~~~\forall \eta>0,
	\end{align}
	for any positive semi-definite matrix $A$. 
	Let $\BMxi'$ be an independent copy of $\BMxi$. Then, $\BMxi-\BMxi'$ also satisfies the Hanson-Wright inequality.
\end{theorem}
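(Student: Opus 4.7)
The plan is to expand the quadratic form $(\BMxi-\BMxi')^\top A (\BMxi-\BMxi')$, isolate a ``new'' bilinear cross term, and reduce the remaining pieces to the given Hanson-Wright inequality~\eqref{ineq:HW1}. Using $\BE\BMxi=\BE\BMxi'=0$ and $\BE\BMxi\BMxi^\top=\BE\BMxi'(\BMxi')^\top=I$, so that $\BE[(\BMxi-\BMxi')^\top A(\BMxi-\BMxi')] = 2\,\Tr(A)$, write
\begin{equation*}
(\BMxi-\BMxi')^\top A (\BMxi-\BMxi') - 2\Tr(A) \;=\; \underbrace{\bigl[\BMxi^\top A\BMxi-\Tr(A)\bigr]}_{T_1} + \underbrace{\bigl[(\BMxi')^\top A\BMxi'-\Tr(A)\bigr]}_{T_2} - \underbrace{2\BMxi^\top A\BMxi'}_{T_3}.
\end{equation*}
A union bound splits the deviation $\eta$ into three pieces $\eta/3$. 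Since $A$ is PSD, $T_1$ and $T_2$ are immediately controlled by the assumption~\eqref{ineq:HW1} applied to $\BMxi$ and to the independent copy $\BMxi'$, each yielding the desired sub-Gaussian/sub-exponential tail in terms of $\|A\|_F$ and $\|A\|_\op$.

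The genuine work is the cross term $T_3=2\BMxi^\top A\BMxi'$, which is \emph{bilinear} and therefore does not fall under the assumed quadratic HWI for PSD matrices. I would handle it by a conditioning (``peeling'') argument: conditional on $\BMxi'$, the quantity $\BMxi^\top A\BMxi' = \langle\BMxi, A\BMxi'\rangle$ is a mean-zero linear form in the $1$-sub-Gaussian vector $\BMxi$, so
\begin{equation*}
\Pr\bigl[\,|\BMxi^\top A\BMxi'|\ge u \,\big|\, \BMxi'\bigr] \;\le\; 2\exp\!\Bigl(-\tfrac{c\, u^2}{\|A\BMxi'\|_2^{\,2}}\Bigr).
\end{equation*}
Now $\|A\BMxi'\|_2^2 = (\BMxi')^\top A^2 \BMxi'$ with $A^2$ again PSD, so applying~\eqref{ineq:HW1} to $\BMxi'$ with matrix $A^2$, and using $\Tr(A^2)=\|A\|_F^2$, $\|A^2\|_F\le\|A\|_\op\|A\|_F$, $\|A^2\|_\op=\|A\|_\op^2$, gives a sub-exponential control of $\|A\BMxi'\|_2^2$ around $\|A\|_F^2$. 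Choosing the threshold $M = C\|A\|_F^2 + C'k^2\|A\|_\op\cdot u/k'^2$ for suitable constants and splitting
\begin{equation*}
\Pr[|T_3|\ge u] \;\le\; \Pr\bigl[|T_3|\ge u \,\big|\, \|A\BMxi'\|_2^2\le M\bigr] + \Pr\bigl[\|A\BMxi'\|_2^2> M\bigr],
\end{equation*}
produces the mixed sub-Gaussian/sub-exponential tail $2\exp\!\bigl(-c'\min\{u^2/(k'^4\|A\|_F^2),\ u/(k'^2\|A\|_\op)\}\bigr)$, with constants $c',k'$ depending only on $c,k$.

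Combining the bounds on $T_1,T_2,T_3$ (with $\eta/3$ in each and a final rescaling of constants) yields the Hanson-Wright inequality for $\BMxi-\BMxi'$ with matrix $A$. The step I expect to be the main obstacle is the cross term: the decoupling via conditioning is natural, but the two-scale tail (quadratic in $u$ for small $u$, linear in $u$ for large $u$) only comes out cleanly if the peeling threshold $M$ is tuned so that the sub-Gaussian contribution from the conditional linear form and the sub-exponential contribution from the HWI on $A^2$ balance in each regime. One should also verify at the end that the full proof never uses $A\ge 0$ in a way stronger than what~\eqref{ineq:HW1} provides (it does not: $A^2\succeq 0$ whenever $A$ is symmetric PSD, so the HWI hypothesis is applied only to PSD matrices throughout).
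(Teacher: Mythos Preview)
Your decomposition and reduction to the bilinear cross term $\BMxi^\top A\BMxi'$ match the paper's strategy exactly; the paper also notes that $T_1,T_2$ are handled by the assumed HWI and that the only real work is \eqref{xi:xi'} (their label for the cross-term tail). The divergence is in how the cross term is controlled. The paper argues via moments: it bounds $(\BE|\langle\BMxi,A\BMxi'\rangle|^p)^{1/p}$ by first conditioning on $\BMxi'$ to extract a $\sqrt{p}\,\|A\BMxi'\|_2$ factor, then invokes a lemma of Krahmer--Mendelson--Rauhut (their Lemma~\ref{ineq:invl:gaussian}) comparing the sub-Gaussian supremum to a Gaussian one, obtaining $\sqrt{p}\,\|A\|_F + p\,\|A\|_\op$, and finally converts moments to tails via Lemma~\ref{exp_to_prob}. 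Your route is instead purely at the tail level: a conditional sub-Gaussian bound on $\langle\BMxi,A\BMxi'\rangle$ given $\BMxi'$, followed by the HWI applied to the PSD matrix $A^2$ to control $\|A\BMxi'\|_2^2$, and a peeling threshold $M$ tuned to each regime. This is correct and more self-contained---it avoids the external Gaussian-comparison lemma---though it requires the two-regime case split you flag, whereas the paper's moment calculation handles both regimes in one line at the price of importing that lemma. Your remark that the hypothesis~\eqref{ineq:HW1} is only ever applied to PSD matrices (here $A$ and $A^2$) is a point the paper does not make explicit and is worth keeping.
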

\begin{pfof}{Theorem~\ref{diff_of_HW}} 
	To construct the proof, we will use the following lemma from \cite{krahmer2014suprema}. 
	\begin{lemma}\label{ineq:invl:gaussian}
		Let $\x_1,\x_2,\dots,\x_d\in \BC ^m$ and $\BT\in \BC ^m$. If $\BMxi=[\xi_1,\dots,\xi_d]^\top$ is an isotropic, $1$-Sub-Gaussian random vector and $\y=\sum_{i=1}^{d} \xi_i\x_i $, then for every $p\ge 1$,
		\begin{align}\label{ineq:expectation}
			\left(\BE \sup_{\bh\in \BT}|\AngBra{\bh,\y}|^p\right)^{1/p}\le c\left(\BE \sup_{\bh\in \BT} |\AngBra{\bh,\bg}|+\sup_{\bh\in \BT} (\BE |\AngBra{\bh,\y}|^p)^{1/p}\right),
		\end{align}
		where $c$ is a constant and $\bg =\sum_{j=1}^{d} g_j\x_j$ for $g_1,g_2,\dots,g_d$ independent standard normal random variables. \TiaQED
	\end{lemma}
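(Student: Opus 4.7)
The plan is to prove the stated $L^p$ comparison inequality via a generic chaining argument combined with Talagrand's majorizing measure theorem, exploiting that the sub-Gaussian process $X_\bh := \AngBra{\bh, \y}$ and the Gaussian process $G_\bh := \AngBra{\bh, \bg}$ share the same natural metric on $\BT$. Specifically, for any $\bh, \bh' \in \BT$, the increment $X_\bh - X_{\bh'} = \sum_i \xi_i \AngBra{\bh - \bh', \x_i}$ is a sum of independent sub-Gaussian variables with variance proxy $d(\bh,\bh')^2 := \sum_i |\AngBra{\bh - \bh', \x_i}|^2$, so that $\|X_\bh - X_{\bh'}\|_{\psi_2} \lesssim d(\bh,\bh')$, and this is exactly the standard deviation of $G_\bh - G_{\bh'}$.

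First I would fix an arbitrary base point $\bh_0 \in \BT$ and split by the triangle inequality in $L^p$:
\begin{equation*}
\bigl(\BE \sup_\bh |X_\bh|^p\bigr)^{1/p} \le \bigl(\BE |X_{\bh_0}|^p\bigr)^{1/p} + \bigl(\BE \sup_\bh |X_\bh - X_{\bh_0}|^p\bigr)^{1/p}.
\end{equation*}
The first term is immediately controlled by $\sup_{\bh}(\BE |X_\bh|^p)^{1/p}$. For the second, I would invoke the standard $L^p$ tail bound from generic chaining: for a process with sub-Gaussian increments in the metric $d$,
\begin{equation*}
\bigl(\BE \sup_\bh |X_\bh - X_{\bh_0}|^p\bigr)^{1/p} \lesssim \gamma_2(\BT, d) + \sqrt{p}\,\mathrm{diam}(\BT, d).
\end{equation*}
This follows by chaining along an admissible sequence of partitions and summing the per-level tail bounds (the $\sqrt{p}$ term arises from the base level of the chain, where one can control a ball of diameter $\mathrm{diam}(\BT,d)$).

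Next I would convert $\gamma_2(\BT, d)$ into the Gaussian supremum. Since $d$ is the canonical $L^2$-metric of the Gaussian process $G_\bh$, Talagrand's majorizing measure theorem (together with Fernique's lower bound) gives $\gamma_2(\BT,d) \asymp \BE \sup_\bh |G_\bh - G_{\bh_0}| \lesssim \BE \sup_\bh |\AngBra{\bh, \bg}|$. For the diameter term, note that for each fixed $\bh$, $X_\bh$ is a sum of independent sub-Gaussian variables, so $(\BE|X_\bh|^p)^{1/p} \asymp \sqrt{p}\,\|X_\bh\|_{\psi_2} \asymp \sqrt{p}\,d(\bh,\bh_0)$ whenever $\bh_0$ is chosen so that $\|X_{\bh_0}\|_{\psi_2}$ is not larger than $\sup_\bh\|X_\bh\|_{\psi_2}$. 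Therefore $\sqrt{p}\,\mathrm{diam}(\BT,d) \lesssim \sup_\bh (\BE|X_\bh|^p)^{1/p}$. Combining the three estimates yields the claimed inequality.

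The main technical obstacle will be justifying the $L^p$ chaining bound cleanly in this setting: one needs the sub-Gaussian Bernstein-type tail $\Pr[|X_\bh - X_{\bh'}| \ge u\,d(\bh,\bh')] \le 2\exp(-cu^2)$ uniformly over pairs in the chain, which follows from the hypothesis that $\BMxi$ is isotropic and $1$-sub-Gaussian but must be applied uniformly over the $2^{2^n}$ points at chaining level $n$ and integrated in $p$. A secondary subtlety is the choice of $\bh_0$: picking $\bh_0$ to minimize $\|X_\bh\|_{\psi_2}$ (or any $\bh_0 \in \BT$ and absorbing the slack into the constant) ensures the diameter is dominated by $\sup_\bh\|X_\bh\|_{\psi_2}$ and hence by $\sup_\bh(\BE|X_\bh|^p)^{1/p}$ up to $\sqrt{p}$, so the two terms on the right have the structure advertised in \eqref{ineq:expectation}.
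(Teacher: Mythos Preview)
The paper does not prove this lemma; it is quoted from \citet{krahmer2014suprema} and used as a black box in the proof of Theorem~\ref{diff_of_HW}. Your overall framework (sub-Gaussian increments $\to$ generic chaining $\to$ $\gamma_2$ $\to$ majorizing measures $\to$ Gaussian supremum) is indeed the machinery behind the original argument, so the plan is sound.

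There is, however, a genuine gap in the execution. Your claim that $(\BE|X_\bh|^p)^{1/p}\asymp\sqrt{p}\,\|X_\bh\|_{\psi_2}$, and hence $\sqrt{p}\,\mathrm{diam}(\BT,d)\lesssim\sup_\bh(\BE|X_\bh|^p)^{1/p}$, is not valid. Sub-Gaussianity yields only the upper bound $(\BE|X_\bh|^p)^{1/p}\lesssim\sqrt{p}\,\|X_\bh\|_{\psi_2}$; the matching lower bound fails whenever $X_\bh$ is bounded. For a concrete obstruction, take $\BMxi$ uniform on $\sqrt{d}\,S^{d-1}$, $\x_i=e_i$, and $\BT$ a two-point set: then $\sup_\bh(\BE|X_\bh|^p)^{1/p}$ stays $O(1)$ as $p\to\infty$ while $\sqrt{p}\,\mathrm{diam}(\BT,d)\to\infty$, so the chaining output $\gamma_2+\sqrt{p}\,\mathrm{diam}$ cannot be absorbed into the stated right-hand side along your route. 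A smaller slip: you repeatedly call $X_\bh$ ``a sum of independent sub-Gaussian variables,'' but the hypothesis is only that $\BMxi$ is a sub-Gaussian \emph{vector}; its coordinates need not be independent. The increment bound is still correct, but for the right reason: $X_\bh-X_{\bh'}=\langle\BMxi,v\rangle$ is sub-Gaussian with parameter $\|v\|_2$ directly from the vector definition.

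The repair, which is essentially what Krahmer--Mendelson--Rauhut do, is to start the chain not at level $0$ but at level $n_0$ with $2^{n_0}\asymp p$. At that level the net $A_{n_0}$ has at most $2^{2^{n_0}}\le 2^{Cp}$ points, so the crude bound
\[
\Big(\BE\sup_\bh|X_{\pi_{n_0}(\bh)}|^p\Big)^{1/p}\le\Big(|A_{n_0}|\sup_\bh\BE|X_\bh|^p\Big)^{1/p}\le 2^{C}\sup_\bh(\BE|X_\bh|^p)^{1/p}
\]
produces the second term on the right of \eqref{ineq:expectation} directly, with no lower moment estimate needed. The remaining chain from level $n_0$ upward contributes $\sum_{n\ge n_0}2^{n/2}\Delta_n\le\gamma_2(\BT,d)$ in $L^p$, because for $n\ge n_0$ the $\sqrt{\log(\text{cardinality})}\asymp 2^{n/2}$ factor already dominates $\sqrt{p}$. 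Talagrand's majorizing measure theorem then converts $\gamma_2(\BT,d)$ into $\BE\sup_\bh|\AngBra{\bh,\bg}|$, exactly as you outlined.
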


	Observe that 
	$
	(\BMxi-\BMxi')^\top  \bA (\BMxi-\BMxi')= \BMxi ^\top \bA \BMxi +(\BMxi') ^\top \bA \BMxi'-(\BMxi')^\top \bA \BMxi -\BMxi^\top A \BMxi' 
	$. Since the first two terms satisfy the Handson-Wright inequality, it suffices to show
	\begin{align}\label{xi:xi'}
		\Pr \left[|\BMxi^\top  \bA \BMxi'|\ge \eta \right]	&\le 2 \exp \big(- \frac{1}{c_1} \min \big\{ \frac{\eta^2}{k_1^4 \|\bA\|_F^2},\frac{\eta}{k_1^2 \|\bA\|_\op}\big\}\big), ~~~~~\forall \eta>0.
	\end{align}
	Define set $\BS=\{\bA \x:\|\x\|\le 1,\x\in\R^d\}$. Since $\BMxi$ is $1$-Sub-Gaussian, the random variable $\AngBra{\BMxi,\bA\BMxi'}$ is also Sub-Gaussian conditionally on $\BMxi'$. Therefore,  
	\begin{align*}
		\left(\BE|\AngBra{\BMxi,\bA\BMxi'}|^p\right)^{1/p} &= \left( \big(\BE_{\BMxi'}  (\BE_{\BMxi}|\AngBra{\BMxi,\bA\BMxi'}|^p)^{1/p} \big)^{p} \right)^{1/p} \stackrel{(a)}{\lesssim} \left(\BE_{\BMxi'}(\sqrt{p})^{p}\|\bA\BMxi'\|^p_2\right)^{1/p}\\
		& =\sqrt{p} \left(\BE_{\BMxi'} \sup_{\bh \in \BS} |\AngBra{\bh,\BMxi'}|^p\right)^{1/p}\\
		&\stackrel{(b)}{\lesssim} c\sqrt{p} \left(\BE \sup_{\bh\in \BS} |\AngBra{\bh,\bg}|+\sup_{\bh\in \BS}(\BE |\AngBra{\bh,\BMxi'}|^p)^{1/p}\right),
	\end{align*}
	where the inequality (a) has used the inequality $\sup_{\theta \in \mathcal S^{d-1}}(\BE |\AngBra{X,\theta}|^p)^{1/p}\le L \sqrt{p}$ for an $L$-Sub-Gaussian random vector $X$ with $\mathcal S^{n-1}$ the unit sphere in $\R^d$ (see \cite{krahmer2014suprema}), and the inequality (b) has used Lemma~\ref{ineq:invl:gaussian}.
	Next, we bound the two terms on the right-hand side of the above inequality.  For the first term, one can derive that
	\begin{align*}
		\BE \sup_{\bh\in \BS} |\AngBra{\bh,\bg}|= \BE \|\bA\bg\|_2 \le (\BE \|A\bg\|_2^2)^{1/2}=\|\bA\|_F.
	\end{align*}
	For the second term, it holds that 
	\begin{align*}
		\sup_{\bh\in \BS}(\BE |\AngBra{\bh,\BMxi'}|^p)^{1/p} =\sup_{\bz,\|\bz\|\le 1} (\BE |\AngBra{A \bz,\BMxi'}|^p)^{1/p} \lesssim \sqrt{p}\sup_{\bz,\|\bz\|\le 1}\|A\bz\|_2 \le \sqrt{p}\|A\|_\op.
	\end{align*}
	Now, one can use Lemma~\ref{exp_to_prob} to show the inequality \eqref{xi:xi'}.
\end{pfof}

\section{Circulant Matrices Do Not Obey Matrix RIP} \label{Failure_Matrix_RIP}

Let $\BMxi\in\R^p$ be a vector with i.i.d.~complex normal distribution $\CN_\CC(0,1)$. Let $\bC_{\BMxi}$ be the circulant matrix induced by $\BMxi$, i.e.,
\begin{align*}
	\bC_{\BMxi}=\begin{bmatrix}
		\xi_{p}&\xi_{p-1}&\cdots&\xi_{1}\\
		\xi_{1}&\xi_{p}^\top&\cdots&\xi_{2}\\
		\vdots&\vdots&\ddots&\vdots\\
		\xi_{p-1}&\xi_{p-2}&\cdots&\xi_{p}
	\end{bmatrix} \in \R^{p \times p}.
\end{align*}
Note that $\bC_{\BMxi}=\frac{1}{\sqrt{p}}\bF_p^*\diag(\vd)\bF_p$ where $\bF_p$ is the Discrete Fourier Transform with $\bF_p[i,j]=\omega^{ij}$ for $0\leq i,j<p$ and $\omega=e^{-\iota 2\pi/p}$. Here diagonal $\vd$ relates to the singular values of $\bC_{\BMxi}$ and obeys $\vd=\frac{1}{\sqrt{p}}\bF_p\BMxi\distas\CN_\CC(0,1)$.

Suppose $p=p_1p_2$ for some positive integers $p_1,p_2$. Let $f_k$ denote the $k$th row of $\bF_p$. Observe that, for all $1\leq k\leq p$, $f_k$ corresponds to a rank $1$ matrix $\text{mtx}(f_k)=\va\vb^\top\in \R^{p_1\times p_2}$ since it can be written as the Kronecker product
\begin{align*}
	f_k=\va\otimes\vb \in\R^{p_1 p_2}, \quad\text{where}\quad \va=\begin{bmatrix}\omega^0\\\omega^{p_2k}\\\vdots\\\omega^{(p_1-1)p_2k}\end{bmatrix},\quad \vb=\begin{bmatrix}\omega^0\\\omega^{k}\\\vdots\\\omega^{(p_2-1)k}\end{bmatrix}.
\end{align*}

\begin{lemma}
	Consider the set of vectors $\CM=(f^\ast_k/\sqrt{p})_{k=1}^p$ that are equivalent to rank-1 matrices in $\R^{p_1\times p_2}$ with unit Frobenius norm. Let $\bar{\bC}=\frac{1}{\sqrt{n}}\text{Mask}_n(\bC_{\BMxi})$ be the subsampled normalized circulant matrix with $n$ rows. Let $M_p=\max_{1\leq k\leq p} ||d_k|^2-1|$. We have that
	\begin{align}
		M_p=\sup_{\vv\in\CM} \big|\|\bar{\bC}\vv\|_2^2-1 \big|.\label{mp bound}
	\end{align}
	Specifically, for all $\delta\leq 1$, there exists a constant $0<c<1$ such that with probability at least $1-c^p$, $M_p>\delta$ and $\bar{\bC}$ does not obey $\delta$-RIP over rank $1$ matrices.
\end{lemma}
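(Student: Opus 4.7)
The plan is to exploit the fact that the rows of the Discrete Fourier Transform diagonalize the circulant matrix $\bC_{\BMxi}$, which immediately turns the supremum over $\CM$ into the supremum over the eigenvalue-like quantities $|d_k|^2$.

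First I would verify identity \eqref{mp bound}. Using the factorization $\bC_{\BMxi} = \tfrac{1}{\sqrt p}\bF_p^\ast\diag(\vd)\bF_p$ together with $\bF_p\bF_p^\ast = p\,\bI$, a direct computation shows that each $f_k^\ast$ is an eigenvector of $\bC_{\BMxi}$ with eigenvalue $\sqrt{p}\,d_k$, so
\begin{equation*}
\bar{\bC}\,\frac{f_k^\ast}{\sqrt p}
= \frac{1}{\sqrt{n}}\bR_\Omega\,\bC_{\BMxi}\frac{f_k^\ast}{\sqrt p}
= \frac{d_k}{\sqrt n}\,\bR_\Omega f_k^\ast .
\end{equation*}
Since every entry of $f_k$ has unit modulus, $\|\bR_\Omega f_k^\ast\|_2^2 = n$ regardless of which $n$ row indices are retained. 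Hence $\|\bar{\bC}(f_k^\ast/\sqrt p)\|_2^2 = |d_k|^2$, and taking the supremum over $\CM$ gives \eqref{mp bound}.

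Next I would pin down the distribution of $\vd$. Because $\vd = \tfrac{1}{\sqrt p}\bF_p\BMxi$ and $\tfrac{1}{\sqrt p}\bF_p$ is unitary, the unitary invariance of the i.i.d.\ complex Gaussian distribution implies $\vd \distas \CN_{\BC}(0,1)$. Consequently the real random variables $|d_k|^2$ are i.i.d.\ $\mathrm{Exp}(1)$, and
\begin{equation*}
\Pr\bigl[\,||d_k|^2-1|\le \delta\,\bigr]
= e^{-(1-\delta)}-e^{-(1+\delta)}
= e^{-1}(e^{\delta}-e^{-\delta}) =: q_\delta < 1
\end{equation*}
for every $\delta\le 1$.

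Finally, by independence of the coordinates of $\vd$,
\begin{equation*}
\Pr[M_p\le\delta]
= \prod_{k=1}^{p}\Pr\bigl[\,||d_k|^2-1|\le\delta\,\bigr]
= q_\delta^{\,p},
\end{equation*}
so with $c := q_\delta < 1$ we obtain $M_p > \delta$ with probability at least $1-c^p$. Combined with \eqref{mp bound} and the fact that every element of $\CM$ corresponds, via the Kronecker decomposition $f_k = \va\otimes\vb$, to a rank-$1$ matrix in $\R^{p_1\times p_2}$ of unit Frobenius norm, this shows that on this high-probability event $\bar{\bC}$ fails $\delta$-RIP over rank-$1$ matrices. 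There is no serious obstacle here; the only point requiring care is book-keeping of the complex Fourier normalization so that both the eigenvalue computation and the unitary invariance argument line up consistently, and the elementary verification that the subsampled mask does not affect $\|\bR_\Omega f_k^\ast\|_2^2$ because of the unimodular nature of the DFT rows.
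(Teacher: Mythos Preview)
Your proof is correct and follows essentially the same route as the paper: both use the DFT diagonalization of $\bC_{\BMxi}$ to reduce $\|\bar{\bC}(f_k^\ast/\sqrt{p})\|_2^2$ to $|d_k|^2$ (the paper writes this via the subsampled DFT matrix $\bA=\tfrac{1}{\sqrt{n}}\text{Mask}_n(\bF_p^\ast)$, you via the eigenvector relation), and both invoke independence of the $d_k$'s for the probability bound. The only minor difference is in the second part: the paper fixes the threshold at $1$, setting $c=\Pr(||d_k|^2-1|\le 1)$ and noting that $M_p>1$ forces failure of $\delta$-RIP for every $\delta\le 1$ simultaneously, whereas you compute the $\delta$-dependent constant $q_\delta=e^{-1}(e^\delta-e^{-\delta})$ explicitly; both are valid readings of the $\forall\delta\,\exists c$ statement, with the paper's version giving a single constant uniform in $\delta$.
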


\begin{pfof}{Lemma~\ref{norip}}
	Let $\ve_k$ be the $k$th element of the standard basis. Let $\bA=\frac{1}{\sqrt{n}}\text{Mask}_n(\bF_p^\ast)$ be the subsampled normalized DFT matrix. Observe that each column $\va_k$ of $A$ has unit Euclidean norm as it contains $n$ entries with powers of $\omega$. Now, observe that 
	\[
	\frac{1}{\sqrt{n}}\text{Mask}_n(\bC_{\BMxi})\frac{f^\ast_k}{\sqrt{p}}=A\text{diag}(\vd)\frac{\bF_pf^\ast_k}{p}=A\text{diag}(\vd)\ve_k=d_k\va_k.
	\]
	Thus, $\|\bar{\bC}f^\ast_k\|_{\ell_2}=|d_k|$ which concludes the result \eqref{mp bound} when we take max/min over $1\leq k\leq p$.
	
	Now, set $c=\Pr(||d_k|^2-1|\leq 1)$. Clearly $c<1$ since $d_k\sim\CC\CN(0,1)$ is unbounded. Using independence of $d_k$'s, we find that $\Pr(M_p>1)\geq 1-c^p$. Finally, note tha $M_p>1$ implies lack of $\delta$-RIP for any $\delta\leq 1$ over rank-$1$ matrices.
\end{pfof}
	

\newpage
\begin{center}
	\LARGE \bf{Part II: Supporting Information for Bandits}
\end{center}

In Part I, we have derived some general results for RIP of block circulant measurements. In this part, we use the obtained results to analyze our bandit problem. This part is organized as follows. 
\begin{enumerate}
	\item In Section~\ref{HW:chosen}, we show that if the random context vectors at each round satisfies the Hanson-Wright inequality, any determinstically chosen context vector satisfies that, too. This property ensures that the selected context vectors in our algorithms always satisfies the Hanson-Wright inequality. 
	
	\item In Section~\ref{RE}, we use the RIP results in Part I-Section~\ref{ciclulant:RIP} to show that circulant matrices generated by non-isotropic random vectors satisfy the restricted eigenvalue condition (RE) for block-sparse vectors. 
	
	\item In Section \ref{analy:poor}, we use the results in Section \ref{RE} to analyze the regret upper bounds for our algorithms in Section~\ref{main_results} of the main text. 
	\item Finally, in Section~\ref{experi}, some details of our experiments in the main text are provided. Further, we also present some additional experiment results.
\end{enumerate}

\section{Deterministically-Chosen Context Vectors Satisfy the Hanson-Wright Inequality}\label{HW:chosen}
In our algorithms in data-poor and data-rich regime, we select the following greedy action at each round:
\begin{equation*}
	a_t \in \argmax\nolimits_{a\in [K]} \AngBra{\x_{t,a},\hat \BMtheta_{{i-1}}}, ~~~~~~i\ge 1.
\end{equation*}
The following theorem ensures that each selected context vector $\x_{t,a_t}$ also satisfies the Hanson-Wright inequality if each $x_{t,a},a\in [K]$, does. The property will play very important role in the analysis of our algorithms.  

\begin{theorem}\label{Th:HW:chosen}
	Suppose $\x_1,\x_2,\dots,\x_K\in \R^d$ are i.i.d. random vectors and $\bA\in\R^{d\times d}$.  Let $E_\x = \BE [\x_i^\top \bA \x_i]$. Assume that $x_i$ satisfies the Hanson-Wright inequality (stated below in equivalent forms)
	\begin{align}
		\Pr \Big( |\x_i^\top \bA \x_i -E_\x | \ge t\Big) \le 2 \exp\Big( -C \min \big\{\frac{t^2}{\|\bA\|_F^2},\frac{t}{\|\bA\|_\op} \big\}\Big) \label{equi_bound:1}\\
		\Longleftrightarrow \Pr \Big( |\x_i^\top \bA \x_i -E_\x|\ge \sqrt{\tau} \|\bA\|_F/\sqrt{C} +\tau \|\bA\| /C |\Big) \ge 2 e^{-\tau}. \label{equi_bound:2}
	\end{align}
	Let $\BMxi$ be a random vector deterministically chosen from the set $\{\x_a,a\in[K]\}$. The identical Hanson-Wright bound holds up to $\CO(\log(2K))$, that is,
	\begin{align*}
		\Pr \Big( |\BMxi^\top \bA \BMxi - \BE [\BMxi^\top \bA \BMxi]|\ge c\log (2K)t \Big) \le 2 \exp \Big( -C \min \big( \frac{t^2}{\|\bA\|_F^2},\frac{t}{\|\bA\|_\op} \big) \Big).
	\end{align*}
\end{theorem}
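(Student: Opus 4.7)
\begin{pfof}{Theorem~\ref{Th:HW:chosen} (proposal)}
The natural plan is a union bound over the $K$ i.i.d.~candidates, followed by a reparametrization that converts the resulting $K$ factor into an extra $\log(2K)$ in the magnitude of the deviation. Since $\BMxi$ is chosen deterministically from $\{\x_1,\dots,\x_K\}$ (possibly as a function of all of them), the event $\{|\BMxi^\top \bA \BMxi - E_\x|\ge u\}$ is contained in $\{\max_{a\in[K]}|\x_a^\top \bA \x_a - E_\x|\ge u\}$. Using the i.i.d.~assumption and \eqref{equi_bound:2}, the union bound gives, for every $\tau\ge 0$,
\begin{equation*}
\Pr\Bigl(\max_{a\in[K]}|\x_a^\top \bA \x_a - E_\x|\ge \sqrt{\tau}\,\|\bA\|_F/\sqrt{C}+\tau\,\|\bA\|_\op/C\Bigr)\le 2K e^{-\tau}.
\end{equation*}

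Next, I substitute $\tau=\tau'+\log K$ so that $2Ke^{-\tau}=2e^{-\tau'}$, and use $\sqrt{\tau'+\log K}\le \sqrt{\tau'}+\sqrt{\log K}$ together with $\tau'+\log K\le \log(2K)\,(1+\tau')$. These two inequalities let me pull a common factor of order $\log(2K)$ out of both the sub-Gaussian term and the sub-exponential term, yielding
\begin{equation*}
\Pr\Bigl(|\BMxi^\top\bA\BMxi-E_\x|\ge c\log(2K)\bigl(\sqrt{\tau'}\,\|\bA\|_F/\sqrt{C}+\tau'\,\|\bA\|_\op/C\bigr)\Bigr)\le 2e^{-\tau'}
\end{equation*}
for an absolute constant $c$. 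Translating this back to the form \eqref{equi_bound:1} via the standard change of variables $t=\sqrt{\tau'}\,\|\bA\|_F/\sqrt{C}+\tau'\,\|\bA\|_\op/C$ (and using that $\tau'\gtrsim \min\{t^2C/\|\bA\|_F^2,\,tC/\|\bA\|_\op\}$) gives the tail bound stated in the theorem, with $E_\x$ in place of $\BE[\BMxi^\top\bA\BMxi]$.

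The one subtlety to address is the gap between $E_\x$ and $\BE[\BMxi^\top\bA\BMxi]$: since $\BMxi$ is chosen adversarially, these two quantities need not coincide. To close this gap, I would integrate the just-proved tail bound to get
\begin{equation*}
\bigl|\BE[\BMxi^\top\bA\BMxi]-E_\x\bigr|\le \BE\bigl|\BMxi^\top\bA\BMxi-E_\x\bigr|\lesssim \log(2K)\bigl(\|\bA\|_F/\sqrt{C}+\|\bA\|_\op/C\bigr),
\end{equation*}
which is of strictly lower order than the $\log(2K)$-inflated deviation scale appearing in the tail. Absorbing this centering error via the triangle inequality (at the cost of enlarging the constant $c$) then gives the bound as stated, with $\BE[\BMxi^\top\bA\BMxi]$ on the left-hand side.

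I expect the main obstacle to be purely bookkeeping: balancing the sub-Gaussian and sub-exponential regimes cleanly when one inflates $\tau$ by $\log K$, so that a single factor $\log(2K)$ (rather than $\sqrt{\log K}$ in one regime and $\log K$ in the other) suffices uniformly. The argument sketched above handles this by bounding $\sqrt{\log K}\le \log(2K)$ and $1+\log K\le \log(2K)\cdot$const, which is loose but gives the claimed form. No new probabilistic ingredient beyond the hypothesized Hanson--Wright bound on a single $\x_i$ is needed.
\end{pfof}
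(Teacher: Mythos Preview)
Your proposal is correct and follows essentially the same route as the paper: union bound over the $K$ candidates, reparametrize $\tau\mapsto\tau+\log K$ to trade the factor $K$ for an additive $\log(2K)$ in the deviation scale, integrate the tail to control $|\BE[\BMxi^\top\bA\BMxi]-E_\x|$, and recenter via the triangle inequality. The paper carries out the tail integration explicitly (splitting the $\min$ into its two regimes at thresholds $T_1,T_2$) before doing the rescaling, whereas you do the rescaling first and then integrate; the substance is identical. The small-$\tau'$ bookkeeping issue you flag (the additive $\sqrt{\log K}$ and $\log K$ terms that prevent a clean multiplicative $\log(2K)$ factor at $\tau'=0$) is handled in the paper exactly as you anticipate, by folding those terms into the centering shift and writing the final bound as $c\log(2K)(\sqrt{\tau+const}\,\|\bA\|_F/\sqrt{C}+(\tau+const)\|\bA\|_\op/C)$ before converting back to the form~\eqref{equi_bound:1}.
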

\begin{proof}
	Through union bound, we have that
	\begin{align*}
		\Pr \Big( |\BMxi^\top \bA \BMxi -E_\x|\ge t \Big) \le Q(t):= \min \left( 1, 2K \exp \Big( -C \min \big\{\frac{t^2}{\|\bA\|_F^2},\frac{t}{\|\bA\|_\op} \big\} \Big) \right).
	\end{align*}
	Denote $R = |\x_i^\top A \x_i-E_\x|$. Observe that
	\begin{align*}
		\BE[R] &= -\int_0^{\infty} t d Q(t)\\
		& = -[tQ(t)]_0^\infty +\int_0^\infty Q(t) dt\\
		& = \int_0^\infty \min \left( 1,2K \exp \Big( -C \min \big\{\frac{t^2}{\|\bA\|_F^2},\frac{t}{\|\bA\|} \big\} \Big) \right) dt\\
		& \le \int_0^\infty \min \big( 1, 2Ke^{-C\frac{t^2}{\|\bA\|_F^2}} \big) dt + \int_0^\infty \min \big( 1, 2Ke^{-C\frac{t}{\|\bA\|_\op}} \big) dt. 
	\end{align*}
	Let $T_1 = \sqrt{\log (2K)/C}\|\bA\|_F$. We can derive
	\begin{align*}
		\int_0^\infty \min \big( 1, 2Ke^{-C\frac{t^2}{\|\bA\|_F^2}} \big) dt &=T_1 + \int_{T_1}^\infty 2K e^{-C\frac{t^2}{\|\bA\|_F^2}} dt \\
		& =  T_1 + \int_{0}^\infty 2K e^{-C\frac{(t+T_1)^2}{\|\bA\|_F^2}} dt \\
		& \le  T_1 +  \int_{0}^\infty e^{-C\frac{t^2}{\|\bA\|_F^2}} dt \\
		& = T_1 + \frac{\sqrt{\pi}}{2} \|\bA\|_F/\sqrt{C} \le T_1 + \|\bA\|_F/\sqrt{C}. 
	\end{align*}
	Likewise, let $T_2 = \log(2K) \|\bA\|/C$, and we have
	\begin{align*}
		\int_0^\infty \min \big( 1, 2Ke^{-C\frac{t}{\|\bA\|}} \big) dt &=T_2 + \int_{T_1}^\infty 2K e^{-C\frac{t}{\|\bA\|}} dt\\
		&=T_2 + \int_{0}^\infty 2K e^{-C\frac{t+T_1}{\|\bA\|}} dt \\
		&=T_2 + \int_{0}^\infty e^{-C\frac{t}{\|\bA\|}} dt\\
		&= T_2 + \|\bA\|/C. 
	\end{align*}
	Combing the bounds above and using $\log(2K)\ge 1$ ($K>1$ since it is the number of arms) we arrive at 
	\begin{align*}
		\BE [R] = \BE [|\x_i^\top A \x_i-E_\x|] &\le \big(\sqrt{\log(2K)}+1 \big)\|\bA\|_F/\sqrt{C} + (\log(2K)+1) \|\bA\|/C\\
		&\le 4 \log(2K) \big( \|\bA\|_F/\sqrt{C}+\|\bA\|/C \big).
	\end{align*}
	Now, applying the equivalent bound \eqref{equi_bound:2}, we have 
	\begin{align}
		&\Pr\big( |\BMxi^\top A \BMxi -E_\x| \ge \sqrt{\tau} \|A\|_F/ \sqrt{C} +\tau \|A\|/C \big) \le 2K e^{-\tau} \nonumber \\
		\Longleftrightarrow & \Pr \big( |\BMxi^\top A \BMxi -E_\x| \ge \sqrt{\tau} \sqrt{\log(K)} \|A\|_F/ \sqrt{C} +\tau\log(K) \|A\|/C \big) \nonumber\\
		\Longrightarrow &  \Pr \big( |\BMxi^\top A \BMxi -E_\x| \ge \sqrt{\tau} \sqrt{\log(2K)} \|A\|_F/ \sqrt{C} +\tau\log(2K) \|A\|/C \big). \label{xi_x:bound}
	\end{align}
	Let $E_{\BMxi} = \BE [\BMxi^\top \bA \BMxi]$, and we can derive that
	\begin{align*}
		|E_{\BMxi} -E_\x| \le |\x_i^\top A \x_i-E_\x|\le 4 \log(2K) \big( \|\bA\|_F/\sqrt{C}+\|\bA\|/C \big). 
	\end{align*}
	Combining this with \eqref{xi_x:bound}, we obtain that, with probability at $1-2e^{\tau}$,
	\begin{align*}
		|\BMxi^\top \bA \BMxi -E_{\BMxi}| &\le \underbrace{\sqrt{\tau} \log (2K) \|A\|_F /\sqrt{C} +\tau \log(2K) \|A\|/C}_{\text{Probability}} + \underbrace{4 \log(2K) \big( \|A\|_F / \sqrt{C} +\|A\|/C \big)}_{\text{Expectation}}\\
		&\le 2 \log (2K) \Big( \sqrt{\tau +16} \|A\|_F /\sqrt{C} + (\tau + 16) \|A\|/C \Big). 
	\end{align*}
	This implies that, with probability at least $1-2 e^{\tau}$, for an absolute constant $c>0$, we have
	\begin{align*}
		|\BMxi^\top \bA \BMxi -E_{\BMxi}|  \le c \log(2K) \Big( \sqrt{\tau } \|A\|_F /\sqrt{C} + \tau  \|A\|/C \Big).  
	\end{align*}
	Now, using the equivalence between \eqref{equi_bound:1} and \eqref{equi_bound:2}, we prove 
	\begin{align*}
		\Pr \big( |\BMxi^\top \bA \BMxi -E_{\BMxi}|\ge c\log(2K)t  \big) \le 2 \exp \Big( -C \min \big\{\frac{t^2}{\|\bA\|_F^2},\frac{t}{\|\bA\|_\op} \big\} \Big).
	\end{align*}
\end{proof}

\section{From RIP to Restricted Eigenvalue Condition (RE)}\label{RE}

In Theorem~\ref{RIP:general:FORbandits}, we have shown that under a certain assumption, the block Toeplitz matrix of the following form
\begin{align}\label{Teoplitz}
	\BMXi=\frac{1}{\sqrt{m}}\begin{bmatrix}
		\BMxi_{h}^\top&\BMxi_{h-1}^\top&\cdots&\BMxi_{1}^\top\\
		\BMxi_{h+1}^\top&\BMxi_{h}^\top&\cdots&\BMxi_{2}^\top\\
		\vdots&\vdots&\ddots&\vdots\\
		\BMxi_{m+h-1}^\top&\BMxi_{m+h-2}^\top&\cdots&\BMxi_{m}^\top
	\end{bmatrix}.
\end{align}
satisfies RIP if $m \ge O(s_0\log^2(s_0)\log^2(hd))$, where each $\xi_i$ is assumed to be isotropic Sub-Gaussian.

However, as for our contextual bandit problem, the context vectors at each step, $\x_{t,a},a\in[K]$, are not isotropic. From Assumption~\ref{ass:cv_concentr} and our algorithms in Section~\ref{main_results}, one can derive from Theorem~\ref{Th:HW:chosen} that the measurement matrix generated by selected context vectors in our bandit problem has the following form
\begin{align}\label{matrix:RE}
	\BMPsi = \BMXi 
	\begin{bmatrix}
		\sqrt{\Sigma} & & & \\
		& \sqrt{\Sigma} & & \\
		& & \ddots & \\
		& & & \sqrt{\Sigma}
	\end{bmatrix}+\onebb\bmu^\top,
\end{align}
where $\lambda_{\max}(\Sigma)\ge\lambda_{\min}(\Sigma)>\sigma>0$. This measurement matrix $\BMPsi$ has taken into account that the covariance matrix of the selected context vectors, $\x_{t,a_t}$, is $\Sigma$ (which is not simply an identity matrix), and they have a non-zero mean $\bmu$. As shown in \eqref{cic:equation}, the core of our bandit problem becomes to learn the $s$-block-sparse vector $\BMphi$ from the following noisy measurement
\begin{equation}
	\y= \BMPsi \BMphi+\bEps.
\end{equation}

In this section, we will show that $\BMPsi$ satisfies a restricted eigenvalue condition for  $s$-block-sparse vectors if $\BMXi$ satisfies RIP.  First, let us define the restricted eigenvalue condition for block sparse vectors. 

\begin{definition}[RE for block sparse vectors\footnote{Recall that given $\x=[\x^\top_1,\x^\top_2,\dots,\x^\top_h]^\top$ with each $\x_i\in\R^d$, the $\ell_{2,1}^{(d)}$ norm is defined as $\|\x\|_{2,1}^{(d)}=\sum_{i=1}^{h}\|\x_i\|_2$. The block support set $\CS$ of $\x $ is defined as $\CS:=\{i\in\{1,2,\dots,h\}:\|\x_i\|_2\neq 0\}$. Further, given a set $Q\subseteq \{1,2,\dots,h\}$, $\va_Q=[\tilde{\va}^\top_1, \tilde{\va}^\top_2, \dots,\tilde{\va}^\top_h]$ with $\tilde{\va}_i= {\va}_i$ for all $i\in Q$ and $\tilde{\va}_i= 0$ for any other $i$.}]\label{Def:RE}
	Define the set $\mathbb{C}^{(d)}_{\alpha} (S):=\{ \va \in \R^{hd}: \|\va_{S^c}\|^{(d)}_{2,1} \le \alpha \|\va_S\|^{(d)}_{2,1}\}$, where $\CS$ is the block support set of the $s$-block-sparse vector $\va \in \R^{hd}$ and $\CS^c$ is the  complement of $S$. The matrix $\bM\in \R^{ m \times d}$ satisfies the \textit{restricted eigenvalue} (RE) condition over $\CS$ with parameter $\kappa$ and  $\alpha$ if
	\begin{align*}
		&\frac{1}{m}\|\bM \Delta\|_2^2 \ge \kappa  \|\Delta\|_2^2, &\text{for all\;} \Delta\in \mathbb{C}_{\alpha}^{(d)} (\CS).
	\end{align*}
\end{definition}

\begin{remark}
	In Chap. 7 of \cite{wainwright2019high}, the definition of restricted eigenvalue condition for sparse vectors is provided. Here, we generalize that definition to block-sparse vectors. 
\end{remark}

The following theorem is the main result in this section. 

\begin{theorem}\label{Theorem:RE}
	Assume that the matrix $\BMXi$ defined in \eqref{Teoplitz} satisfies the block RIP\footnote{A design matrix that satisfies RIP for general $sd$-sparse vectors certainly obeys RIP for $s$-block-sparse vectors with block size $d$.} with restricted isometry constant $\delta_{2s}$ with probability $1-p$ (for some $p>0$), i.e., 
	\begin{align*}
		(1-\delta_{2s})\|\va\|_2^2 \le \|\BMXi \va\|_2^2 \le (1+\delta_{2s})\|\va\|_2^2,
	\end{align*}
	for any $2s$-block sparse vector $\va\in \R^{hd}$. Suppose $m>\frac{s(\sqrt{d}+\sqrt{\log h}+t)^2}{1-\delta_{2s}(1+9\lambda_{\max}^2/\lambda_{\min}^2)}$, where $\lambda^2_{\max}$ and $\lambda^2_{\min}$ are the largest and smallest eigenvalue of $\Sigma$, respectively. Then, with probability $1-p-2e^{-ct^2}$ the matrix $\BMPsi$ defined in \eqref{matrix:RE} satisfies the restricted eigenvalue condition over the block sparse support set $\CS$, i.e., there is $\kappa>0$ such that
	\begin{align}\label{RE:Psi}
		&\|\BMPsi \Delta\|_2^2 \ge \kappa \|\Delta\|_2^2, &\text{for all } \Delta \in \BC_{3}^{(d)} (\CS). 
	\end{align}
\end{theorem}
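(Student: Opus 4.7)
The plan is to reduce the claim to the assumed block-RIP of $\BMXi$ through three reductions: a cone-to-block-sparsity decomposition of $\Delta$, pushing the block-diagonal factor into a new vector $\tilde\Delta := (I_h\otimes\sqrt{\Sigma})\Delta$, and killing the rank-one mean term $\onebb \bmu^\top$ by orthogonal projection onto $\onebb^\perp$. Since $\sqrt{\Sigma}$ acts blockwise, $\tilde\Delta$ preserves the block support of $\Delta$, with $\lambda_{\min}\|\Delta\|_2 \le \|\tilde\Delta\|_2 \le \lambda_{\max}\|\Delta\|_2$ and $\tilde\Delta \in \BC^{(d)}_{3\lambda_{\max}/\lambda_{\min}}(\CS)$. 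I would then partition $\CS^c$ into consecutive disjoint size-$s$ block-index groups $T_1,T_2,\dots$ ordered by decreasing block $\ell_2$-norm, set $T_0=\CS$, and use the telescoping bound $\|\tilde\Delta_{T_{j+1}}\|_2 \le \|\tilde\Delta_{T_j}\|_{2,1}^{(d)}/\sqrt{s}$ together with the transformed cone condition to conclude $\sum_{j\ge 2}\|\tilde\Delta_{T_j}\|_2 \le 3(\lambda_{\max}/\lambda_{\min})\|\tilde\Delta_{T_0\cup T_1}\|_2$.

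For the main lower bound, I would introduce $P=I_m-\onebb_m\onebb_m^\top/m$ and observe
\begin{equation*}
\|\BMPsi\Delta\|_2^2 \;\ge\; \|P\BMXi\tilde\Delta\|_2^2 \;=\; \|\BMXi\tilde\Delta\|_2^2 \;-\; m\,|\bar\xi^\top\tilde\Delta|^2, \qquad \bar\xi^\top := \onebb_m^\top\BMXi/m,
\end{equation*}
and then handle the two pieces separately. Expanding $\tilde\Delta=\tilde\Delta_{T_0\cup T_1}+\sum_{j\ge 2}\tilde\Delta_{T_j}$, applying block-RIP to the $2s$-block-sparse vector $\tilde\Delta_{T_0\cup T_1}$, and controlling the cross-terms by the standard disjoint-support consequence $|\langle\BMXi u,\BMXi v\rangle|\le \delta_{2s}\|u\|_2\|v\|_2$ for $s$-block-sparse $u,v$ (applied pairwise to $(T_0,T_j)$ and $(T_1,T_j)$, which is what keeps the RIP order at $2s$), followed by squaring the tail bound from Step~1 at the point where $\sum_{j\ge 2}\|\tilde\Delta_{T_j}\|_2$ multiplies $\|\tilde\Delta_{T_0\cup T_1}\|_2$, yields
\begin{equation*}
\|\BMXi\tilde\Delta\|_2^2 \;\ge\; \bigl[1 - \delta_{2s}\bigl(1 + 9\lambda_{\max}^2/\lambda_{\min}^2\bigr)\bigr]\,\|\tilde\Delta_{T_0\cup T_1}\|_2^2.
\end{equation*}

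For the projection-loss term, I note that each block of $\bar\xi$ is (up to the $1/\sqrt m$ normalization of $\BMXi$) the average of $m$ i.i.d.\ centered sub-Gaussian $d$-vectors coming from the Toeplitz structure. A Hanson--Wright-type concentration (valid under Assumption~\ref{ass:cv_concentr}) combined with a union bound over the $h$ blocks gives $\sqrt{m}\,\|\bar\xi\|_{2,\infty}^{(d)} \lesssim \sqrt{d}+\sqrt{\log h}+t$ with probability at least $1-2e^{-ct^2}$. Then H\"older's inequality $|\bar\xi^\top\tilde\Delta|\le \|\bar\xi\|_{2,\infty}^{(d)}\,\|\tilde\Delta\|_{2,1}^{(d)}$ plus the cone estimate $\|\tilde\Delta\|_{2,1}^{(d)}\lesssim \sqrt{s}\,(\lambda_{\max}/\lambda_{\min})\|\tilde\Delta_{T_0\cup T_1}\|_2$ produces
\begin{equation*}
m\,|\bar\xi^\top\tilde\Delta|^2 \;\lesssim\; \frac{s\,(\sqrt{d}+\sqrt{\log h}+t)^2}{m}\cdot\frac{\lambda_{\max}^2}{\lambda_{\min}^2}\,\|\tilde\Delta_{T_0\cup T_1}\|_2^2.
\end{equation*}
Summing this with the RIP-based lower bound, the sample-size hypothesis $m>s(\sqrt{d}+\sqrt{\log h}+t)^2/[1-\delta_{2s}(1+9\lambda_{\max}^2/\lambda_{\min}^2)]$ makes the coefficient of $\|\tilde\Delta_{T_0\cup T_1}\|_2^2$ strictly positive, and converting back via $\|\tilde\Delta\|_2\ge \lambda_{\min}\|\Delta\|_2$ and the elementary $\|\tilde\Delta\|_2\le C\,\|\tilde\Delta_{T_0\cup T_1}\|_2$ (one more use of the Step~1 tail bound) yields $\|\BMPsi\Delta\|_2^2\ge \kappa\|\Delta\|_2^2$ on the intersection of the two good events, with total failure probability $p+2e^{-ct^2}$.

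The part I expect to be most delicate is the bookkeeping that produces the precise denominator $1-\delta_{2s}(1+9\lambda_{\max}^2/\lambda_{\min}^2)$: the linear cone tail bound must be squared at exactly the right moment so that the condition-number ratio $\lambda_{\max}/\lambda_{\min}$ enters quadratically rather than linearly, and the projection-loss bound must be aligned to the same quadratic ratio so the two contributions combine into a single clean denominator. A subsidiary technical point will be verifying that the Toeplitz structure (as opposed to the circulant form used in Theorem~\ref{RIP:informal}) does not disturb the Hanson--Wright concentration for the column mean $\bar\xi$; this follows because each block of $\bar\xi$ is still an average of $m$ independent shifts of $\BMxi_i$'s, and Theorem~\ref{Th:HW:chosen} ensures the Hanson--Wright property propagates to the action-selected vectors used in the algorithms.
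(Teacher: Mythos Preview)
Your proposal is correct and tracks the paper's proof almost exactly in Step~1: the substitution $\tilde\Delta=(I_h\otimes\sqrt{\Sigma})\Delta$, the enlargement of the cone parameter to $3\lambda_{\max}/\lambda_{\min}$, the Cand\`es--Tao shell decomposition $T_0=\CS,T_1,T_2,\dots$, the telescoping tail bound $\sum_{j\ge 2}\|\tilde\Delta_{T_j}\|_2\le \alpha\|\tilde\Delta_{T_0}\|_2$, and the pairwise block-RIP cross-term estimates are identical to what the paper does (it obtains $\|\BMXi\tilde\Delta\|_2^2\ge(1-\delta_{2s}(1+\alpha)^2)\|\tilde\Delta\|_2^2$).

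The only real difference is your treatment of the rank-one mean shift. You project onto $\onebb^\perp$ and subtract the explicit loss $m|\bar\xi^\top\tilde\Delta|^2$; the paper instead bounds the correlation $\rho(\onebb,\BMPsi'\Delta)\le 1/2$ and invokes the elementary fact that $\|\va+\vb\|_2\ge c\|\va\|_2$ when $|\rho(\va,\vb)|\le 1/2$, concluding $\|\BMPsi\Delta\|_2^2\ge c\|\BMPsi'\Delta\|_2^2$. The two devices are essentially dual---your projection loss being small is precisely the correlation being small---and both rest on the same concentration lemma (your bound on $\|\bar\xi\|_{2,\infty}^{(d)}$ is the paper's Lemma~\ref{Re:inter:lemma} in different normalization). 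Your version is arguably cleaner because it makes the role of the sample-size hypothesis on $m$ transparent as a subtraction of two explicit quadratics. One small slip to fix: with the $1/\sqrt{m}$ normalization built into $\BMXi$, the correct intermediate statement is $\|\onebb^\top\BMXi\|_{2,\infty}^{(d)}=m\|\bar\xi\|_{2,\infty}^{(d)}\lesssim \sqrt{d}+\sqrt{\log h}+t$, not $\sqrt{m}\|\bar\xi\|_{2,\infty}^{(d)}$; your final displayed bound $m|\bar\xi^\top\tilde\Delta|^2\lesssim s(\sqrt{d}+\sqrt{\log h}+t)^2/m\cdot(\lambda_{\max}^2/\lambda_{\min}^2)\|\tilde\Delta_{T_0\cup T_1}\|_2^2$ is nonetheless correct.
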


\begin{proof} Let $\BMPsi'=\BMPsi-\onebb\bmu^\top$ be the zero-mean component. Then, we construct the proof in the following two steps: 
	\begin{enumerate}
		\item First, we prove that there exists $\kappa'>0$ such that $\|\BMPsi' \Delta\|_2^2 \ge \kappa' \|\Delta\|_2^2$ for all $\Delta \in \BC_{3}^{(d)} (\CS)$; 
		\item Then, we finalize the proof by taking into account the non-zero mean part $\onebb\bmu^\top$ and showing \eqref{RE:Psi}. 
	\end{enumerate}
	
	\textbf{Step 1:} We first introduce some notations. Denote $\CH:=\{1,2,\dots,h\}$. Let $\CS_0=\CS$ be the set of block indices given by the $s$ blocks of $\va$ that have the largest $\ell_2$ norm, $\CS_1$ be the set of block indices associated with the $s$ blocks of $\Delta_{\CH\backslash \CS_0}$ that have the largest $\ell_2$ norm.  The other sets $\CS_2,\dots,\CS_{k}$ are defined in the same fashion ($\CS_k$ may contain less than $s$ blocks).  Rewrite $\BMXi=[\BMXi_1,\BMXi_2,\dots,\BMXi_h]$ with $\BMXi_i\in \R^{m\times d}$.  We define $\BMXi_{\CS_i}$, $i=0,1,\dots,k$, in a similar way. 
	
	Observe that 
	\begin{equation*}
		\|\BMPsi' \Delta\|_2^2 = \BMXi \begin{bmatrix} \sqrt{\Sigma} & & & \\
			& \sqrt{\Sigma} & & \\
			& & \ddots & \\
			& & & \sqrt{\Sigma}
		\end{bmatrix} \Delta := \BMXi \tilde{\Delta}.
	\end{equation*}
	Since $\lambda^2_{\min}$ and $\lambda^2_{\max}$ are the smallest and largest eigenvalues of $\Sigma$, respectively, it holds that 
	\begin{equation*}
		\lambda_{\min}\| \Delta_{\CS^c} \|_{2,1}^{(d)}\le \|\tilde \Delta_{\CS^c}\|_{2,1}^{(d)} \le \lambda_{\max}\| \Delta_{\CS^c}\|_{2,1}^{(d)},
	\end{equation*}
	and 
	\begin{equation*}
		\lambda_{\min}\| \Delta_{\CS} \|^{(d)}_{2,1}\le \|\tilde \Delta_{\CS}\|^{(d)}_{2,1} \le \lambda_{\max}\|\tilde \Delta_{\CS}\|^{(d)}_{2,1},
	\end{equation*}
	Then, one can derive that 
	\begin{equation}\label{new:alpha}
		\tilde{\Delta} \in \{\va \in \R^{hd}: \|\va_{\CS^c}\|_{2,1}^{(d)} \le \alpha \|\va_\CS\|_{2,1}^{(d)} \}:= \mathbb{C}^{(d)}_{ \alpha} (\CS). 
	\end{equation}
	where $ \alpha={3\lambda_{\max}}/{\lambda_{\min}}$. Now, it remains to show $\|\BMXi \tilde{\Delta}\|_2^2 \ge \kappa' \|\tilde{\Delta}\|_2^2$ for all $\tilde{\Delta} \in \mathbb{C}_{ \alpha}^{(d)} (\CS)$. 	
	
	Next, one can derive that 
	\begin{align}
		\|\BMXi \tilde{\Delta}\|_2^2  =& \AngBra{ \BMXi \tilde{\Delta}_{},\BMXi \tilde{\Delta} }= \AngBra{\BMXi \tilde{\Delta}_{\CS_0\cup \CS_1},\BMXi \tilde{\Delta}_{\CS_0\cup \CS_1}} + 2 \AngBra{ \BMXi \tilde{\Delta}_{(\CS_0\cup \CS_1)^c},\BMXi \tilde{\Delta}_{\CS_0\cup \CS_1}} + \AngBra{ \BMXi \tilde{\Delta}_{(\CS_0\cup \CS_1)^c},\BMXi \tilde{\Delta}_{(\CS_0\cup \CS_1)^c}}\nonumber\\
		= &\AngBra{\BMXi \tilde{\Delta}_{\CS_0\cup \CS_1},\BMXi \tilde{\Delta}_{\CS_0\cup \CS_1}} + 2 \AngBra{ \sum_{i=2}^{k} \BMXi \tilde{\Delta}_{\CS_i},\BMXi \tilde{\Delta}_{\CS_0\cup \CS_1}} + \AngBra{\sum_{i=2}^{k} \BMXi \tilde{\Delta}_{\CS_i},\sum_{i=2}^{k} \BMXi \tilde{\Delta}_{\CS_i}} \nonumber\\
		= &\AngBra{\BMXi \tilde{\Delta}_{\CS_0\cup \CS_1},\BMXi \tilde{\Delta}_{\CS_0\cup \CS_1}} + \sum_{j=2}^{k} \AngBra{\BMXi \tilde{\Delta}_{\CS_i},\BMXi \tilde{\Delta}_{\CS_i}} + 2 \AngBra{ \sum_{i=2}^{k} \BMXi \tilde{\Delta}_{\CS_i},\BMXi \tilde{\Delta}_{\CS_0\cup \CS_1}} + \sum_{i\neq j} \tilde{\Delta}_{\CS_i}^\top \BMXi ^\top \BMXi \tilde{\Delta}_{\CS_j} \label{RE:step0}
	\end{align}
	Since $\BMXi$ is $(2s,\delta_{2s})$-RIP, it holds that 
	\begin{align}
		&\AngBra{\BMXi \tilde{\Delta}_{\CS_0\cup \CS_1},\BMXi \tilde{\Delta}_{\CS_0\cup \CS_1}} \ge (1-\delta_{2s}) \|\tilde{\Delta}_{\CS_0\cup \CS_1}\|_2^2, \label{RE:step1}\\
		&\AngBra{\BMXi \tilde{\Delta}_{\CS_i},\BMXi \tilde{\Delta}_{\CS_i}} \ge (1-\delta_{2s}) \|\tilde{\Delta}_{\CS_i} \|_2^2. \label{RE:step2}
	\end{align}
	Furthermore, $\BMXi$ being $(2s,\delta_{2s})$-RIP implies that $\|\BMXi_{\mathcal{Q}_1}^\top\BMXi_{\mathcal{Q}_2} \|\le \delta_{2s}$ for any non-overlapping sets $\mathcal{Q}_1$ and $\mathcal{Q}_2$. Therefore, it follows that
	\begin{equation}\label{RE:step3}
		|\AngBra{\BMXi \tilde{\Delta}_{\CS_i},\BMXi \tilde{\Delta}_{\CS_0\cup \CS_1}}| = | \AngBra{\BMXi_{\CS_i} \tilde{\Delta}_{\CS_i},\BMXi_{\CS_0\cup \CS_1} \tilde{\Delta}_{\CS_0\cup \CS_1}} | \le \delta_{2s} \|\tilde{\Delta}_{\CS_i}\|_2 \|\tilde{\Delta}_{\CS_0\cup \CS_1}\|_2,
	\end{equation}
	and 
	\begin{equation}\label{RE:step4}
		|\tilde{\Delta}_{\CS_i}^\top \BMXi ^\top \BMXi \tilde{\Delta}_{\CS_j}|=|\tilde{\Delta}_{\CS_i}^\top \BMXi_{\CS_i} ^\top \BMXi_{\CS_j} \tilde{\Delta}_{\CS_j}|\le \delta_{2s} \|\tilde{\Delta}_{\CS_i}\|_2 \|\tilde{\Delta}_{\CS_j}\|_2.
	\end{equation}
	Substituting \eqref{RE:step1}-\eqref{RE:step4} into \eqref{RE:step0} yields 
	\begin{align}
		\|\BMXi \tilde{\Delta}\|_2^2  \ge & (1-\delta_{2s}) \|\tilde{\Delta}_{\CS_0\cup \CS_1}\|_2^2 + (1-\delta_{2s}) \|\tilde{\Delta}_{\CS_i} \|_2^2 \nonumber \\
		&- 2 \delta_{2s} \big(\sum_{i=2}^{k} \|\tilde{\Delta}_{\CS_i}\|_2\big) \|\tilde{\Delta}_{\CS_0\cup \CS_1}\|_2 - \delta_{2s} \big(\sum_{i=2}^{k} \|\tilde{\Delta}_{\CS_i}\|_2 \big) \big(\sum_{i=2}^{k} \|\tilde{\Delta}_{\CS_i}\|_2 \big) \nonumber\\
		\ge & (1-\delta_{2s}) \|\tilde{\Delta}\|_2^2 \nonumber \\
		&- 2 \delta_{2s} \big(\sum_{i=2}^{k} \|\tilde{\Delta}_{\CS_i}\|_2\big) \|\tilde{\Delta}_{\CS_0\cup \CS_1}\|_2 - \delta_{2s} \big(\sum_{i=2}^{k} \|\tilde{\Delta}_{\CS_i}\|_2 \big) \big(\sum_{i=2}^{k} \|\tilde{\Delta}_{\CS_i}\|_2 \big). \label{RE:step5}
	\end{align}
	Note that for $i\ge 2$, we have
	\begin{equation*}
		\|\tilde{\Delta}_{\CS_i}\|_2 \le \sqrt{s} \|\tilde{\Delta}_{\CS_i}\|_{2,\infty}^{(d)} \le \frac{1}{\sqrt{s}} \|\tilde{\Delta}_{\CS_{i-1}}\|_{2,1}^{(d)},
	\end{equation*}
	and subsequently, one can derive
	\begin{align}
		\sum_{i=2}^{k} \|\tilde{\Delta}_{\CS_i}\|_2 &\le \frac{1}{\sqrt{s}} \sum_{j=1}^{k-1} \|\tilde{\Delta}_{\CS_{j}}\|^{(d)}_{2,1} \le \frac{1}{\sqrt{s}} \| \tilde{\Delta}_{\CS_0^C} \|_{2,1}^{(d)} \nonumber\\ &\stackrel{(a)}{\le}  \frac{1}{\sqrt{s}} \alpha \| \tilde{\Delta}_{\CS_0} \|_{2,1}^{(d)}  \nonumber\\
		&\le    \alpha \| \tilde{\Delta}_{\CS_0} \|_2, \label{RE:step6}
	\end{align}
	where the inequality (a) follows from \eqref{new:alpha}. Substituting \eqref{RE:step6} into \eqref{RE:step5} one has 
	\begin{align*}
		\|\BMXi \tilde{\Delta}\|_2^2  \ge & (1-\delta_{2s}) \|\tilde{\Delta}\|_2^2 - 2 \delta_{2s} \alpha \| \tilde{\Delta}_{\CS_0} \|_2 \|\tilde{\Delta}_{\CS_0\cup \CS_1}\|_2 - \delta_{2s} \alpha^2  \| \tilde{\Delta}_{\CS_0} \|_2^2\\		
		\stackrel{(a)}{\ge} & \big(1-\delta_{2s}(1+\alpha)^2 \big)  \|\tilde{\Delta}\|_2^2,
	\end{align*}
	where the inequality (a) has used the facts that $\| \tilde{\Delta}_{\CS_0} \|_2\le \|\tilde{\Delta}\|_2$ and $\|\tilde{\Delta}_{\CS_0\cup \CS_1}\|_2 \le \|\tilde{\Delta}\|_2$. Letting $\kappa':=1-\delta_{2s}(1+\alpha)^2 $ completes Step 1. 
	
	\vspace{20pt}
	
	\textbf{Step 2:} First, we have the following straightforward lemma. 
	\begin{lemma} \label{Re:inter:lemma}
		With probability at least $1-2e^{-ct^2}$, for all $\Delta \in \BC_{3}^{(d)} (\CS)$, there exists $C>0$ such that
		\[
		|\onebb^\top \Psi'\ab|\leq C\sqrt{s}(\sqrt{d}+\sqrt{\log h}+t).
		\]
	\end{lemma}
	
	\begin{proof} Let $\z^\top=\onebb^\top \Psi'$. Observe that $i$'th block of $\z_i\in\R^d$ is simply the sum of $m$ subgaussian vectors in $\R^d$. Through standard concentration arguments (see \citet{vershynin2018high}), this implies that, with probability $1-2e^{-ct^2}$, 
		\[
		\|\z_i\|_2\leq c(\sqrt{d}+t).
		\]
		Since there are $h$ such blocks, union bounding, we obtain, with the same probability,
		\[
		\|\z\|_{2,\infty} ^{(d)} =\sup_{i\in [h]}\tn{\z_i}\leq c(\sqrt{d}+\sqrt{\log h}+t).
		\]
		Now, observe that, for any $\Delta\in \BC_{3}^{(d)} (\CS)$ with unit $\ell_2$ norm, we have 
		\begin{align*}
			|\z^\top \Delta| \le \|\Delta\|_{2,1}^{(d)} \|\z\|_{2,\infty}^{(d)}.
		\end{align*}
		Since $\Delta\in \BC_{3}^{(d)} (\CS)$, one can derive that $\|\Delta\|_{2,1}^{(d)}\le (1+3)\|\Delta_\CS\|_{2,1}^{(d)}\le 4 \sqrt{s}$. Then, it follows that  
		\begin{equation*}
			|\z^\top \Delta| \le 4 c \sqrt{s}(\sqrt{d}+\sqrt{\log h}+t),
		\end{equation*}
		which establishes the desired result.
	\end{proof}	
	
	Recall that $m\gtrsim m_0=\frac{4s(\sqrt{d}+\sqrt{\log h}+t)^2}{\mu}$ with $\mu=1-\delta_{2s}(1+9\lambda_{\max}^2/\lambda_{\min}^2)$. To proceed, we claim that correlation coefficient $\rho$ between $\onebb$ and $\Psi'\ab$ is bounded above by $0.5$ for all choices of $\ab$. To see this,
	\begin{align*}
		\rho(\onebb,\Psi'\ab)&=\frac{\onebb^\top \Psi'\ab}{\sqrt{m}\tn{\Psi'\ab}}\\
		&\stackrel{(a)}{\le} \frac{C\sqrt{s}(\sqrt{d}+\sqrt{\log h}+t)}{\sqrt{m}\sqrt{\mu}}\\
		&\leq 0.5,
	\end{align*}
	where the inequality (a) has used the results in Step 1 and Lemma~\ref{Re:inter:lemma}. 
	
	With this, we can conclude the proof as follows. First note that, if $\ab,\bb$ are two vectors with correlation at most $0.5$, we have $\tn{\ab+\bb}\geq c(\tn{\ab}+\tn{\bb})$ for some constant $c>0$. With this in mind, we write
	\begin{align*}
		\tn{\Psi\ab}^2&=\tn{\Psi'\ab+\onebb \bmu^\top \ab}^2\\
		&\geq c(\tn{\Psi'\ab}^2+\tn{\onebb \bmu^\top \ab}^2)\\
		&\geq c\tn{\Psi'\ab}^2
	\end{align*}
	Substituting the inequality we established in Step 1 completes the proof. 
\end{proof}

\section{Supplementary Material for Section~\ref{main_results}}\label{analy:poor}
We first introduce some useful definitions and results for our analysis. 
Consider the block-sparse linear regression model:
\begin{align*}
	\y= \BMPsi \va+\bEps,
\end{align*}
where $\va\in \R^{hd}$ and $\|\va\|_{0}^{(d)} = s \ll d$, $\BMPsi\in \R^{m\times  hd}$, and the noise $\varepsilon_i$ in $\bEps=[\varepsilon_1,\dots,\varepsilon_m]$ is independent $1$-sub-Gaussian. Let $\CS$ be the block support set of $\va$.  Define the Lasso program:
\begin{align}\label{Append:Lasso}
	\hat \va =\argmin _{\va\in \R^{d}}\Big(\frac{1}{2m} \|\y- \BMPsi \va\|_2^2+\lambda_m \|\va\|_{2,1}^{(d)}\Big)
\end{align}
with the regularization parameter $\lambda_m$.

The following theorem provides a bound on the $\ell_2$-error between the solution $\hat \va$ and the true $\va$. 

\begin{theorem}\label{bound:Lasso}
	Assume that the matrix $\Xi$ satisfies the restricted eigenvalue condition over the support set $\CS$ with parameters $(\kappa,3)$. Then, any solution of the Lasso program \eqref{Append:Lasso} with regularization parameter lower bounded $\lambda_m \ge 2 \|\frac{\BMPsi ^\top \bEps}{m}\|_{2,\infty}^{(d)}$ satisfies 
	\begin{align*}
		& \|\hat \va -\va\|_2 \le \frac{3}{\kappa} \sqrt{s} \lambda_m, &\|\hat \va -\va\|_1 \le 4 \sqrt{s} \|\hat \va -\va\|_2.
	\end{align*}
\end{theorem}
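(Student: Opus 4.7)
The plan is to follow the classical basic inequality argument for Lasso (as in Theorem 7.13 of Wainwright), making the straightforward but careful substitutions needed to accommodate the block $\ell_{2,1}^{(d)}$ regularizer in place of the usual $\ell_1$ penalty. Let $\Delta = \hat{\va} - \va$. Since $\hat{\va}$ is optimal for the Lasso program, the basic inequality
\begin{equation*}
\tfrac{1}{2m}\|\y - \BMPsi \hat{\va}\|_2^2 + \lambda_m \|\hat{\va}\|_{2,1}^{(d)} \le \tfrac{1}{2m}\|\y - \BMPsi \va\|_2^2 + \lambda_m \|\va\|_{2,1}^{(d)}
\end{equation*}
combined with $\y = \BMPsi \va + \bEps$ yields, after rearrangement,
\begin{equation*}
\tfrac{1}{2m}\|\BMPsi \Delta\|_2^2 \le \tfrac{1}{m}\bEps^\top \BMPsi \Delta + \lambda_m\bigl(\|\va\|_{2,1}^{(d)} - \|\hat{\va}\|_{2,1}^{(d)}\bigr).
\end{equation*}

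Next I would control each term on the right. For the noise term, I apply the block version of H\"older's inequality, which pairs $\|\cdot\|_{2,1}^{(d)}$ with its dual $\|\cdot\|_{2,\infty}^{(d)}$, to obtain $\tfrac{1}{m}|\bEps^\top \BMPsi \Delta| \le \|\BMPsi^\top \bEps / m\|_{2,\infty}^{(d)} \cdot \|\Delta\|_{2,1}^{(d)} \le \tfrac{\lambda_m}{2}\|\Delta\|_{2,1}^{(d)}$ by the hypothesis on $\lambda_m$. For the penalty difference, I decompose into blocks and use the block-wise triangle inequality: writing $\va = \va_{\CS}$ and $\hat{\va}_{\CS^c} = \Delta_{\CS^c}$, one gets $\|\va\|_{2,1}^{(d)} - \|\hat{\va}\|_{2,1}^{(d)} \le \|\Delta_{\CS}\|_{2,1}^{(d)} - \|\Delta_{\CS^c}\|_{2,1}^{(d)}$. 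Substituting and discarding the nonnegative term on the left produces the cone condition $\|\Delta_{\CS^c}\|_{2,1}^{(d)} \le 3\|\Delta_{\CS}\|_{2,1}^{(d)}$, so $\Delta \in \mathbb{C}_{3}^{(d)}(\CS)$.

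Having certified that $\Delta$ lies in the RE cone, I can invoke the restricted eigenvalue condition $\tfrac{1}{m}\|\BMPsi \Delta\|_2^2 \ge \kappa \|\Delta\|_2^2$. Combining with the basic inequality and the noise bound gives
\begin{equation*}
\kappa \|\Delta\|_2^2 \le \tfrac{3\lambda_m}{2}\|\Delta_{\CS}\|_{2,1}^{(d)} \le \tfrac{3\lambda_m}{2}\sqrt{s}\,\|\Delta_{\CS}\|_2 \le \tfrac{3\lambda_m}{2}\sqrt{s}\,\|\Delta\|_2,
\end{equation*}
where the middle inequality is Cauchy--Schwarz applied to the $s$ nonzero block norms of $\Delta_{\CS}$. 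Dividing through by $\|\Delta\|_2$ yields the claimed $\ell_2$ bound (up to the stated constant). For the second inequality, the cone condition immediately gives $\|\Delta\|_{2,1}^{(d)} = \|\Delta_{\CS}\|_{2,1}^{(d)} + \|\Delta_{\CS^c}\|_{2,1}^{(d)} \le 4\|\Delta_{\CS}\|_{2,1}^{(d)} \le 4\sqrt{s}\,\|\Delta\|_2$.

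This argument is essentially mechanical; the only nontrivial points are verifying the correct duality pairing $(\ell_{2,1}^{(d)}, \ell_{2,\infty}^{(d)})$ and confirming that the block-wise triangle inequality produces the same cone structure as the scalar case, so the restricted eigenvalue condition in Definition \ref{Def:RE} can be applied verbatim. I do not anticipate a genuine obstacle; the main care needed is keeping the block structure (size-$d$ groups, indexed by positions in $\{1,\dots,h\}$) aligned between the sparsity pattern of $\va$, the cone, and the RE hypothesis that was already established for $\BMPsi$ in Theorem \ref{Theorem:RE}.
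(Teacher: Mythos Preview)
Your proposal is correct and follows essentially the same argument as the paper: the basic inequality, the block H\"older bound on the noise term, the block triangle inequality to obtain the cone condition $\Delta\in\mathbb{C}_3^{(d)}(\CS)$, and then the RE condition together with Cauchy--Schwarz on the $s$ active blocks. The only cosmetic difference is a factor of two in the final chain (the paper multiplies the basic inequality by $2$ before applying RE, yielding exactly $\kappa\|\Delta\|_2^2\le 3\lambda_m\sqrt{s}\,\|\Delta\|_2$), which you already flag with ``up to the stated constant.''
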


\begin{remark}
	A similar result in found in Theorem 7.13 in \cite{wainwright2019high}, where a bound for sparse vector recovery is provided. In our case, we aim to reconstruct a block-sparse vector. The only differences between our theorem and that one in \cite{wainwright2019high} are: 1) in our case, $\BMPsi$ is required to satisfy RE for block-sparse vectors, and 2) the regularization parameter is required to be bounded by a $\ell_{2,\infty}$ norm, instead of simply $\ell_{\infty}$ norm, of $\frac{\BMPsi ^\top \bEps}{n}$. Our proof presented as follows is a generalization of Theorem. 7.13 in \cite{wainwright2019high}. 
\end{remark}

\begin{proof}
	Denote $\Delta=\hat \va -\va$. The first step is to show that the error vector satisfies $\Delta\in \BC_{3}^{(d)}(\CS)$ under the condition $\lambda_m\ge 2 \|\frac{\BMPsi ^\top \bEps }{m}\|_{2,\infty}^{(d)}$. Towards this end, define $L(\va,\lambda_m)=\frac{1}{2m}\|\y- \BMPsi \va\|_2^2+\lambda_m \|\va\|_{2,1}^{(d)}$. Since $\hat \va$ satisfies \eqref{Append:Lasso}, one has
	\begin{equation*}
		L(\hat \va,\lambda_m) \le L( \va,\lambda_m) = \frac{1}{2m} \|\bEps\|_2^2 + \lambda_m \|\va\|_{2,1}^{(d)}.
	\end{equation*}
	As $L(\hat \va,\lambda_m)=\frac{1}{2m}\|\y- \BMPsi \hat \va\|_2^2+\lambda_m \|\hat \va\|_{2,1}^{(d)}$, it then can be derived that 
	\begin{equation}\label{bound:step1}
		\frac{1}{2m} \|\BMPsi \Delta\|_2^2 \le \frac{\bEps^\top \BMPsi \Delta}{m} + \lambda_m\Big(  \|\va\|_{2,1}^{(d)} -  \|\hat \va\|_{2,1}^{(d)} \Big).
	\end{equation}
	Since $\va$ is $s$-block sparse, it holds that $\|\va\|_{2,1}^{(d)} =\|\va_\CS\|_{2,1}^{(d)} $, and $\|\hat \va\|_{2,1}^{(d)}= \|\- \hat \va -\va +\va\|_{2,1}^{(d)} = \|\Delta_\CS+\Delta_{\CS^c} +\va_\CS\|_{2,1}^{(d)}=\|\Delta_\CS +\va_\CS \|_{2,1}^{(d)}+\|\Delta_{\CS^c}\|_{2,1}^{(d)}$. Subsequently, we arrive at   
	\begin{align*}
		\|\va\|_{2,1}^{(d)} - \|\hat \va\|_{2,1}^{(d)}  = \|\va_{\CS}\|_{2,1}^{(d)} - \|\Delta_\CS +\va_\CS \|_{2,1}^{(d)}-\|\Delta_{\CS^c}\|_{2,1}^{(d)}.
	\end{align*}
	Then, it follows from \eqref{bound:step1} that
	\begin{align*}
		\frac{1}{m} \|\BMPsi \Delta\|_2^2 &\le 2 \frac{\varepsilon^\top \BMPsi \Delta}{m} + 2 \lambda_m \Big( \|\va_{\CS}\|_{2,1}^{(d)} - \|\Delta_\CS +\va_\CS \|_{2,1}^{(d)}-\|\Delta_{\CS^c}\|_{2,1}^{(d)} \Big)\\
		& \le 2\|\BMPsi ^\top \varepsilon/m\|_{2,\infty}^{(d)} \|\Delta\|_{2,1}^{(d)} +2 \lambda_m \Big(  \|\Delta_\CS\|_{2,1}^{(d)}-\|\Delta_{\CS^c}\|_{2,1}^{(d)} \Big).
	\end{align*}
	Under the condition $\lambda_m\ge 2 \|\frac{\BMPsi ^\top \bEps}{m}\|_{2,\infty}^{(d)}$, we arrive at 
	\begin{align*}
		\frac{1}{m} \|\BMPsi \Delta\|_2^2 \le \lambda_m \Big( 3\|\Delta_\CS\|_{2,1}^{(d)} -  \|\Delta_{\CS^c}\|_{2,1}^{(d)} \Big),
	\end{align*}
	which implies that $\Delta \in \BC_{3}^{(d)}(\CS)$. Applying the restricted eigenvalue condition $\frac{1}{m} \|\BMPsi \Delta\|_2^2 \ge \kappa \|\Delta\|_2^2$ we have 
	\begin{align*}
		\kappa \|\Delta\|_2^2 \le 	\frac{1}{m} \|\BMPsi \Delta\|_2^2 \le 3 \lambda_m \|\Delta_\CS\|_{2,1}^{(d)}\le 3 \lambda_m \sqrt{s} \|\Delta_\CS\|_2 \le 3 \lambda_m \sqrt{s} \|\Delta\|_2.
	\end{align*}
	Using the definition of $\BC^{(d)}_{3}(\CS)$, the inequality $\|\hat \va -\va\|_1 \le 4 \sqrt{s} \|\hat \va -\va\|_2$ follows straightforwardly. 
\end{proof}

\begin{theorem}[\cite{wedin1972perturbation}]\label{Wedin}
	Consider two matrices $M$ and $\hat M \in\R^{m \times n}$.  Consider the following singular value decomposition (SVD) of them
	\begin{align*}
		&\bM=\begin{bmatrix} 
			\bU_1& \bU_2
		\end{bmatrix}  \begin{bmatrix} 
			\Sigma_1& 0\\ 0 & \Sigma_2 
		\end{bmatrix} \begin{bmatrix} 
			\bV_1^\top \\  \bV_2^\top 
		\end{bmatrix}, & \hat \bM=\begin{bmatrix} 
			\hat \bU_1& \hat \bU_2
		\end{bmatrix}  \begin{bmatrix} 
			\hat \Sigma_1& 0\\ 0 & \hat \Sigma_2 
		\end{bmatrix} \begin{bmatrix} 
			\hat \bV_1^\top \\  \hat \bV_2^\top 
		\end{bmatrix},
	\end{align*}
	where $\Sigma_1$ and $\hat \Sigma_1 \in \R^{r \times r}$. Let  $\sigma_1,\dots,\sigma_k$ and $\hat \sigma_1,\dots, \hat \sigma_k$, with $k=\min \{m,n\}$, be the singular values of $M$ and $\hat M$, respectively. 
	If $\min_{1\le i \le r}\sigma_i\ge \mu$ and $\min_{1\le i \le r, r+1 \le j \le k}|\sigma_i-\hat \sigma_j|\ge \mu$, then
	\begin{align*}
		\|\sin \bm{\theta} (\bU_1,\hat \bU_1)\|_F^2 + \|\sin \bm{\theta} (\bV_1,\hat \bV_1)\|_F^2 \le \frac{\|\bU_1^\top \Delta\|_F^2 +\|\Delta \bV_1\|_F^2}{\mu^2},
	\end{align*}
	where $\Delta = \bM-\hat \bM$.
\end{theorem}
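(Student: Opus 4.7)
The plan is to reduce the bound to an entrywise scalar inequality via two coupled Sylvester-type identities for the cross projections $\bP := \hat{\bU}_2^\top \bU_1$ and $\bQ := \hat{\bV}_2^\top \bV_1$. The crucial identification is that, by the standard principal-angles calculus for orthonormal bases, $\|\sin\bm{\theta}(\bU_1,\hat{\bU}_1)\|_F = \|\bP\|_F$ and $\|\sin\bm{\theta}(\bV_1,\hat{\bV}_1)\|_F = \|\bQ\|_F$, so it suffices to bound $\|\bP\|_F^2 + \|\bQ\|_F^2$ in terms of $\Delta := \bM - \hat{\bM}$, $\bU_1$, and $\bV_1$.

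First I would extract two matrix equations from the two SVDs. Since $\bM \bV_1 = \bU_1 \Sigma_1$ and $\hat{\bU}_2^\top \hat{\bM} = \hat{\Sigma}_2 \hat{\bV}_2^\top$, left-multiplying the first identity by $\hat{\bU}_2^\top$, right-multiplying the second by $\bV_1$, and subtracting gives
\[
\bP\, \Sigma_1 - \hat{\Sigma}_2\, \bQ \;=\; -\hat{\bU}_2^\top \Delta \bV_1 .
\]
The symmetric manipulation applied to $\bM^\top \bU_1 = \bV_1 \Sigma_1$ and $\hat{\bV}_2^\top \hat{\bM}^\top = \hat{\Sigma}_2 \hat{\bU}_2^\top$ yields the companion identity
\[
\bQ\, \Sigma_1 - \hat{\Sigma}_2\, \bP \;=\; -\hat{\bV}_2^\top \Delta^\top \bU_1 .
\]

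Next I would take squared Frobenius norms of both identities and sum. Working in the diagonal bases of $\Sigma_1 = \diag(\sigma_1,\ldots,\sigma_r)$ and $\hat{\Sigma}_2 = \diag(\hat{\sigma}_{r+1},\ldots,\hat{\sigma}_k)$, each index pair $(i,j)$ contributes $(\sigma_j p_{ij} - \hat{\sigma}_{r+i} q_{ij})^2 + (\sigma_j q_{ij} - \hat{\sigma}_{r+i} p_{ij})^2$, which is a quadratic form in $(p_{ij},q_{ij})$ whose $2\times 2$ matrix has eigenvalues $(\sigma_j \pm \hat{\sigma}_{r+i})^2$. The \emph{minimum} eigenvalue is $(\sigma_j - \hat{\sigma}_{r+i})^2$, and this is precisely where both hypotheses enter: the gap condition $\min_{i\le r,\, j>r}|\sigma_i - \hat{\sigma}_j| \ge \mu$ lower bounds it by $\mu^2$ on the genuine nonzero part of $\hat{\Sigma}_2$, while $\min_i \sigma_i \ge \mu$ takes care of the corner case $\hat{\sigma}_{r+i}=0$ that may arise when $m\neq n$ (where the block $\hat{\Sigma}_2$ is rectangular). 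Summing over $(i,j)$ produces
\[
\mu^2\bigl(\|\bP\|_F^2 + \|\bQ\|_F^2\bigr) \;\le\; \|\hat{\bU}_2^\top \Delta \bV_1\|_F^2 + \|\hat{\bV}_2^\top \Delta^\top \bU_1\|_F^2 .
\]
Dropping the sub-orthogonal projections $\hat{\bU}_2^\top$ and $\hat{\bV}_2^\top$ on the right is a contraction and replaces the bound by $\|\Delta \bV_1\|_F^2 + \|\bU_1^\top \Delta\|_F^2$; dividing by $\mu^2$ yields the claim. The one nontrivial step is the $2\times 2$ eigenvalue computation together with the bookkeeping that both hypotheses on $\mu$ cover every index pair — everything else is a straightforward manipulation of the SVD identities.
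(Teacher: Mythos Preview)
The paper does not give its own proof of this statement: Theorem~\ref{Wedin} is quoted verbatim from \citet{wedin1972perturbation} and used as a black box in the proof of Theorem~\ref{theta_hat}. So there is no ``paper's approach'' to compare against.

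Your proposal is essentially the classical proof of Wedin's $\sin\Theta$ theorem (cf.\ Stewart--Sun, \emph{Matrix Perturbation Theory}), and it is correct in outline. Two minor cleanups: first, with $\Delta=\bM-\hat{\bM}$ one actually gets $\bP\Sigma_1-\hat{\Sigma}_2\bQ = +\,\hat{\bU}_2^\top\Delta\bV_1$ (no minus sign), but this is irrelevant once you take Frobenius norms. Second, when $m\neq n$ the block $\hat{\Sigma}_2$ is rectangular, so the second Sylvester identity should read $\bQ\Sigma_1-\hat{\Sigma}_2^\top\bP=\hat{\bV}_2^\top\Delta^\top\bU_1$, and in the entrywise sum the ``extra'' rows of $\bP$ (or $\bQ$) contribute terms of the form $\sigma_j^2 p_{ij}^2$ alone rather than a genuine $2\times 2$ quadratic form---this is exactly where the hypothesis $\min_{i\le r}\sigma_i\ge\mu$ is needed, as you note, and the bound $\sigma_j^2 p_{ij}^2\ge\mu^2 p_{ij}^2$ handles them directly. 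With these bookkeeping details in place the argument goes through as written.
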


\subsection{Definition of Pseudo-regret}\label{def:regret}

To show why the pseudo-regret can be defined as 
\begin{align}\label{regret}
	R_T = \left [\sum_{t=1}^{T}\sum_{i=0}^{h-1}{w_i} \AngBra{\x_{t,a^*_t}-\x_{t,a_t},\theta}  \right],
\end{align}
we provide  an alternative interpretation for our problem setting. 

Let $\bar r(x_{t,a_t}):=\sum_{i=0}^{h-1}w_{i}\x_{t,a_{t}}^\top \theta $, and one can see that $\bar r(\x_{t,a_t})$ is the total reward generated by the action $a_t$. However, the agent does not receive the entire reward immediately after taking this action at round $t$. Instead, the reward spreads over at most $h$ rounds. At round $k \in \{t,t+1,\dots,t+h-1\}$, the portion of the reward that the agent receives from the action $a_t$ is $w_{k-t}\x_{t,a_t}^\top \theta$. Consequently, one can see that the total reward over the course of $T$ rounds, $\sum_{t=1}^Tr_t$, can be calculated by 
\begin{align}
	\sum_{t=1}^T r_t= \sum_{t=1}^T\bar r(\x_{t,a_t})-r,
\end{align}
where $r$ is the reward contributed by the actions $a_{T-h+1},a_{T-h+2},\dots,a_{T}$ that should be received by the agent after the horizon $T$. Since $r$ is bounded and independent of $T$, to maximize $\sum_{t=1}^T r_t$, it is equivalent to choose an action $a_t\in \CA$ that maximizes $\bar r(\x_{t,a_t})$ at each round $t$. Since $\sum_{i=0}^{h-1}w_{i}$ is a constant, the optimal action at each round is simply $a_t=\arg \max_{a \in \CA} \x_{t,a}^\top \theta $. Therefore, the regret can be defined as \eqref{regret}.

\subsection{Proof of Theorem~\ref{Th:data_poor}}

\begin{theorem}\label{theta_hat}
	In Algorithm~\ref{alg:initial}, let $L = c  sd \log^2 (sd) \log^2 (h d)$. Recall that $\BMphi_{\CS_i}$ is a vector consisting of the first $2^{i-1}L$ entries of $\BMphi$ and $\hat \BMtheta_i$ is the estimate of $\BMtheta_i$ at each doubling cycle $i$. Then, it holds, with probabiliy at least $1-\gamma$, that
	\begin{align}\label{accu:theta}
		&\left|\sin \bm{\Theta} \Big(\hat \BMtheta_i,\frac{\BMtheta}{\|\BMtheta\|} \Big) \right| \le 
		\begin{cases}
			\frac{6 \kappa d \sqrt{2s}}{ \|\w_{\CS_i}\|_2} \left( \sqrt{\frac{2 \log(2^i L/\gamma)}{2^{i-1} L}}\right), & \text{if }  \|\w_{\CS_i}\|_2^2 \neq 0,\\
			\frac{\pi}{2} & \text{if }  \|\w_{\CS_i}\|_2^2 = 0.
		\end{cases}    
	\end{align} 
\end{theorem}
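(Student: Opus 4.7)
The goal is to convert an $\ell_2$ recovery guarantee for the block-sparse vector $\BMphi_{\CS_i}$ into an angular guarantee for its leading left singular direction, which is exactly $\BMtheta/\|\BMtheta\|_2$. The plan is a two-step pipeline: (1) apply Theorem~\ref{bound:Lasso} to the Lasso program \eqref{Lasso:initial} to control $\|\hat\BMphi_{\CS_i} - \BMphi_{\CS_i}\|_2$, then (2) invoke Wedin's theorem (Theorem~\ref{Wedin}) on the reshaped $d\times 2^{i-1}L$ matrix to pass this error through to the top singular vector. I reshape $\hat\BMphi_{\CS_i}$ blockwise into $\hat\BMPhi_{\CS_i}$ so that $\|\hat\BMPhi_{\CS_i} - \BMPhi_{\CS_i}\|_F = \|\hat\BMphi_{\CS_i} - \BMphi_{\CS_i}\|_2$, and observe that the true matrix $\BMPhi_{\CS_i} = \BMtheta\,\w_{\CS_i}^\top$ is rank one with sole nonzero singular value $\|\BMtheta\|_2\|\w_{\CS_i}\|_2$ and left singular direction $\BMtheta/\|\BMtheta\|_2$. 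The degenerate case $\|\w_{\CS_i}\|_2=0$ is trivial since no signal about $\BMtheta$ is present in $\BMphi_{\CS_i}$, so the trivial bound $\pi/2$ applies.

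\textbf{Step 1: verifying the Lasso hypotheses.} Two conditions must be checked to apply Theorem~\ref{bound:Lasso} to \eqref{parti_measure}. First, the design $\bar\bP_i/\sqrt{2^{i-1}L}$ should satisfy the restricted eigenvalue condition over the block support of $\BMphi_{\CS_i}$ with parameter $(\kappa,3)$. By construction, $\bar\bP_i = \bP''_i - \bP'_i$ is zero-mean (the nonzero mean of the chosen-context rows cancels across the two independent halves), and its rows form a block-Toeplitz/circulant structure of the type handled in Theorem~\ref{RIP:general:FORbandits}. Assumption~\ref{ass:cv_concentr} together with Remark~\ref{mention:weaker} (i.e., Theorem~\ref{Th:HW:chosen}) ensures the chosen-context vectors satisfy the Hanson--Wright inequality, and Theorem~\ref{diff_of_HW} transfers this to the difference vectors forming $\bar\bP_i$. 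Choosing $L = c s d \log^2(sd)\log^2(hd)$ gives enough rows (at least $2^{i-1}L$) for $s$-block-sparse RIP via Theorem~\ref{RIP:general:FORbandits}, which then upgrades to RE for block-sparse vectors via Theorem~\ref{Theorem:RE}. Second, one needs $\lambda_i \ge 2\|\bar\bP_i^\top \bNeps[i]/(2^{i-1}L)\|_{2,\infty}^{(d)}$, where $\bNeps[i] = \bar\bQ_i \BMphi_{\bar\CS_i} + \bar\bEps[i]$; since the sub-Gaussian part of $\bNeps[i]$ has bounded scale (using $\|\w\|_1\le 1$, $\|\BMtheta\|_2\le 1$, and the boundedness of contexts) and $\bar\bP_i$ is independent of $\bP''_i,\bP'_i$ in later epochs (this is where the doubling split matters), a standard sub-Gaussian tail bound combined with a union bound over the $2^{i-1}L$ blocks of size $d$ validates the stated choice $\lambda_i = cd\sqrt{2\log(2^i dL/\gamma)/(2^{i-1}L)}$ with probability at least $1-\gamma$. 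Theorem~\ref{bound:Lasso} then yields
\begin{equation*}
\|\hat\BMphi_{\CS_i} - \BMphi_{\CS_i}\|_2 \;\le\; \tfrac{3}{\kappa}\sqrt{s}\,\lambda_i.
\end{equation*}

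\textbf{Step 2: Wedin perturbation.} Set $\Delta = \hat\BMPhi_{\CS_i} - \BMPhi_{\CS_i}$ and note $\|\Delta\|_{\op}\le \|\Delta\|_F$. Apply Theorem~\ref{Wedin} with $r=1$, $\sigma_1 = \|\BMtheta\|_2\|\w_{\CS_i}\|_2$, and $\sigma_j=0$ for $j\ge 2$, so by Weyl's inequality $\hat\sigma_j \le \|\Delta\|_\op$ for $j\ge 2$; hence one may take $\mu = \sigma_1 - \|\Delta\|_\op$. In the regime where this gap is at least $\sigma_1/2$ (which holds once $L$ is large enough), Wedin yields
\begin{equation*}
\bigl|\sin\bm{\Theta}(\hat\BMtheta_i,\BMtheta/\|\BMtheta\|_2)\bigr| \;\le\; \frac{\|\Delta\|_F}{\|\BMtheta\|_2\|\w_{\CS_i}\|_2 - \|\Delta\|_\op} \;\lesssim\; \frac{\sqrt{s}\,\lambda_i}{\kappa\,\|\w_{\CS_i}\|_2},
\end{equation*}
which, after absorbing the constants and using $\|\BMtheta\|_2\le 1$, matches the claimed \eqref{accu:theta}. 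Outside that regime the bound in \eqref{accu:theta} is larger than $\pi/2$ and therefore vacuous, so the inequality holds unconditionally.

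\textbf{Main obstacle.} The delicate point is step 1(i): the design matrix $\bar\bP_i$ is \emph{not} built from i.i.d.\ contexts because the agent's greedy choice depends on $\hat\BMtheta_{i-1}$, which itself depends on previously collected data. The doubling schedule and the choice to use only the second and fourth quarters of each epoch were engineered precisely so that $\bP'_i$ and $\bP''_i$ are independent and non-overlapping conditional on $\hat\BMtheta_{i-1}$, which makes their difference zero-mean and allows the RIP/RE machinery of Part~I to apply after conditioning. Carrying this conditioning carefully through the Hanson--Wright based RIP proof, and absorbing the accompanying logarithmic factors into the requirement $L = c s d\log^2(sd)\log^2(hd)$, is the only nontrivial piece; the rest is bookkeeping.
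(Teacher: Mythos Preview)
Your proposal matches the paper's proof exactly: the same two-step pipeline (block-Lasso error bound via Theorem~\ref{bound:Lasso}, then Wedin's $\sin\Theta$ theorem applied to the rank-one reshape $\BMPhi_{\CS_i}=\BMtheta\,\w_{\CS_i}^\top$), with RE for $\bar\bP_i$ certified through the same chain Theorem~\ref{Th:HW:chosen} $\to$ Theorem~\ref{diff_of_HW} $\to$ Theorem~\ref{RIP:general:FORbandits} $\to$ Theorem~\ref{Theorem:RE}. The only place the paper is more explicit than your sketch is the verification of $\lambda_i\ge 2\|\bar\bP_i^\top\bNeps[i]/(2^{i-1}L)\|_{2,\infty}^{(d)}$: it splits $\bNeps[i]=\bar\bQ_i\BMphi_{\bar\CS_i}+\bar\bEps[i]$, handles the genuine noise $\bar\bEps[i]$ by a coordinatewise sub-Gaussian union bound, and controls the ``fake noise'' term by first showing each entry of $\bar\bQ_i\BMphi_{\bar\CS_i}$ is deterministically at most $\sqrt{d}$ (using $\|\w\|_1\le1$, $\|\BMtheta\|_2\le1$, $\|\x_{t,a}\|_\infty\le1$) and then applying a norm-sub-Gaussian concentration inequality against the zero-mean block columns of $\bar\bP_i$---this within-epoch argument, rather than the cross-epoch independence you parenthetically invoke, is what produces the extra factor $d$ in $\lambda_i$.
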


\begin{remark}
	Note that the constant $c$ in this theorem has at most a polylogarithmic dependence on $K$; specifically, $c\lesssim \log^2(K)$, where $K$ is the number of arms. This is because, according to Theorem~~\ref{Th:HW:chosen}, the context vectors chosen by our algorithm satisfy the Hanson-Wright inequality, but the constant $C$ in \eqref{equi_bound:1} decreases at most $c' \log^2(K)$ times. 
\end{remark}

\begin{proof}
	We construct the proof in two steps:
	
	\begin{enumerate}
		\item First, we use Theorem~\ref{bound:Lasso} to show that
		\begin{align}\label{err:phi_partial}
			\|\hat \BMphi_{\CS_i} -\BMphi_{\CS_i}\|_2 \lesssim  6 \kappa d \sqrt{s} \left( \sqrt{\frac{2 \log(2^i L/\gamma)}{2^{i-1} L}}\right).
		\end{align}
		\item Then, we apply Theorem~\ref{Wedin} to \eqref{err:phi_partial} and show \eqref{accu:theta}.  
	\end{enumerate}
	
	\textbf{Step 1:} Under Assumption~\ref{ass:cv_concentr}, since $L \gtrsim sd\log^2(sd) \log^2(hd)$, each element of $P_i'$ and $P_i''$ given in \eqref{matrices} satisfies the Hanson-Wright inequality. Also, our choice of $P_i'$ and $P_i''$ (separated by $2^{i-1}L$ rows) ensures that   $P_i''$ is an independent copy of $P_i'$. Then, one can use Theorems~\ref{diff_of_HW} and~\ref{Theorem:RE} to show  that the matrix $\bar \bP_i=P_i''-P_i$ satisfies the restricted eigenvalue condition for block-sparse vectors with parameters $(\kappa,3)$. Then, it remains to show $\lambda_i \ge 2 \|\frac{1}{2^{i-1} L }\bar \bP_i^\top \bNeps[i] \|_{2,\infty}$ with $\bNeps[i]=\bar \bQ_i \BMphi_{\bar \CS_i}+\bEps[i]$ defined in \eqref{parti_measure}. To do that, we will prove 
	\begin{equation}\label{bound:regularization}
		\|\frac{1}{2^{i-1} L }\bar \bP_i^\top \bNeps[i] \|_{2,\infty} \lesssim d\sqrt{\frac{2 \log(2^i d L/\gamma)}{2^{i-1} L}}.
	\end{equation}
	
	Observe that
	\begin{align}
		\left \|\frac{1}{2^{i-1}L } \bar\bP_i^\top \bNeps[i] \right \|_{2,\infty}^{(d)} \le \frac{1}{2^{i-1} L }\Big(\|\bar \bP_i^\top \bar \bQ_i  \BMphi_{\bar\CS_i}\|_{2,\infty}^{(d)}+ \| \bar \bP_i^\top \bEps[i]  \|_{2,\infty}^{(d)} \Big).\label{noise:bound:overall}
	\end{align}
	For concise notation, let $m_i:= 2^{i-1}L$. Recall that $\bar \bP_i^\top \in \R^{m_id \times m_i}$, denote the $k$th block row\footnote{Given a matrix $A\in \R^{md\times n}$, one can partition its rows into $m$ blocks with equal size $d$, and the $j$th block row, $j=1,2,\dots,m$, is the matrix formed by the $j$th block.} of $\bar \bP_i^\top$ as $\bar \bP_i^{[k]}=[\x_1^{[k]},\x_2^{[k]},\dots,\x_{m_i}^{[k]}]$ with each $\x_{j}^{[k]}\in \R^{d}$ (since each entry in $\bar \bP_i$ is the difference between two selected context vectors, one can derive that $\|\x_{j}^{[k]}\|_\infty \le 2$).  It holds that 
	\begin{align}
		\|\bar \bP_i^\top \bEps[i]  \|_{2,\infty}^{(d)} &= \max_k \Big\{ \|\bar \bP_i^{[k]} \bEps[i]\|_2  \Big\} = \max_k \Big\{ \big\|\sum_{j=1}^{m_i} \x_{j}^{[k]} \bEps_{j}[i] \big\|_2\Big\} \nonumber\\
		&\le \max_k \Big\{\sqrt{d}  \big\|\sum_{j=1}^{m_i} \x_{j}^{[k]} \bEps_{j}[i] \big\|_{\infty}\Big \} \nonumber\\
		&=  \max_k \Big\{\sqrt{d} \max_{\ell\in[d]} \big\{ \big|\sum_{j=1}^{m_i} [\x_{j}^{[k]}]_\ell \cdot \bEps_{j}[i] \big|  \big \} \Big \}. \label{bound:noise:1}
	\end{align}
	
	Since $\bEps_{1}[i],\bEps_{2}[i],\dots,\bEps_{m_i}[i]$ are independent sub-Gaussian random variables, using the fact that $\big| [\x_{j}^{[k]}]_\ell \big|<2$ for any $j,K$ and $\ell$, one can derive that
	\begin{align*}
		\Pr \Big(   \big|\sum_{j=1}^{m_i} [\x_{j}^{[k]}]_\ell \cdot \bEps_{j}[i] \big|   \ge \eta \Big) \le 2 \exp \big( - \frac{\eta^2}{8 m_i} \big).
	\end{align*}
	Let $\eta=2\sqrt{2 m_i \log(2d m_i/\gamma)}$ with $0<\gamma<1$, and one can derive that with probability at least $1-\gamma$, it holds that
	\begin{align*}
		\max_k \max_{\ell\in[d]} \big\{ \big|\sum_{j=1}^{m_i} [\x_{j}^{[k]}]_\ell \cdot \bEps_{j}[i] \big|  \big \} \le 2 \sqrt{2 m_i \log(2d m_i/\gamma)}.
	\end{align*}
	Substituting this inequality into \eqref{bound:noise:1} yields that, with probability at least $1-\gamma$,
	\begin{align}
		\|\bar \bP_i^\top \varepsilon[i]  \|_{2,\infty}^{(d)}  \le 2 \sqrt{2^i d L \log(2^id L/\gamma)}. \label{bound:true:noise}
	\end{align}
	
	Further, denote the $j$th row of $\bar \bQ_i \phi_{\bar \CS_i}$ as $\nu_j$, and it follows that
	\begin{align*}
		\|\bar \bP_i^\top \bar \bQ_i \phi_{\bar \CS_i}\|_{2,\infty}^{(d)}&= \max_k \Big\{ \Big\| \sum_{j=1}^{m_i} \x_{j}^{[k]} \nu_j  \Big\|_2\Big\}.
	\end{align*}
	Since $\|\x_{j}^{[k]}\|_\infty \le 2$ for all $j$ and $k$,  the random vectors $\x_{1}^{[k]},\dots,\x_{m_i}^{[k]}$ are norm-subGaussian satisfying $\Pr (\|x^{[k]}_j\|_2\ge t ]\le 2 e^{-\frac{t^2}{4d}}$. It follows from Corollary 7 \cite{jin2019short} that there exists an absolute constant $c>0$ so that
	\begin{align}\label{fake_noise:bound:1}
		\Big\| \sum_{j=1}^{m_i} \x_{j}^{[k]} \nu_j  \Big\|_2 \le c\sqrt{d \sum_{j=1}^{m_i}\nu_j^2 \log(4d/\gamma)}
	\end{align}
	holds with probability at least $1-\gamma$. 
	Since $\BMphi=\w\otimes \BMtheta$ and $\w$ is $s$-sparse, $\BMphi$ is at most $s$-block-sparse. Then, $\BMphi_{\bar\CS_i}$ is at most $s$-block-sparse. Denote $\BMphi_{\bar \CS_i}:=[\psi_1^\top,\psi_2^\top,\dots,\psi_r ^\top ]^\top$ where $\psi_k \in \R^d$ and $r$ is the appropriate dimension. Then, each row of $\bar \bQ_i \BMphi_{\bar \CS_i}$ has the form
	$
	\sum _{\ell =1}^r \BMxi_\ell^\top \psi_\ell,
	$
	where $\BMxi_\ell$ is the related difference between the selected context vectors in $\bar\bP''_i$ and $\bar\bP'_i$ (see \eqref{matrices}).	Since $\|\x_{t,a}\|_\infty\le 1$, 
	one can derive that $| \BMxi_\ell^\top \psi_\ell |  \le \| \BMxi_\ell\|_\infty \|\psi_\ell\|_1  \le \| \psi_\ell\|_1$. Then, we have 
	\begin{align*}
		\left| \sum _{\ell =1}^r \BMxi_\ell^\top \psi_\ell \right| \le \sum _{\ell =1}^r \|\psi_\ell\|_1 \le \sqrt{d},
	\end{align*}
	where the second inequality has used the fact that $\sum _{\ell =1}^r \|\psi_\ell\|_1 \le \sum _{i=1}^h w_i \sqrt{d}\|\theta \|_2 \le \sqrt{d}$. Subsequently, we arrive at 
	\begin{align*}
		|\nu_j|\le \sqrt{d}, \hspace{40pt} \forall j=1,\dots,m_i. 
	\end{align*}
	Substituting this inequality into \eqref{fake_noise:bound:1} and using similar arguments in \cite{jin2019short}, one can derive that
	\begin{align}\label{bound:fake:noise}
		\|\bar \bP_i^\top \bar \bQ_i  \phi_{\bar \CS_i}\|_{2,\infty}^{(d)} \le c d \sqrt{m_i \log(2d m_i/\gamma)}\le c d \sqrt{2^i L \log(2^id L/\gamma)}.
	\end{align}
	holds with probability at least $1-\gamma$.
	
	Substituting \eqref{bound:true:noise} and \eqref{bound:fake:noise} into \eqref{noise:bound:overall} shows \eqref{bound:regularization}. Then, applying Theorem~\ref{bound:Lasso}, one can show that the solution to the Lasso program~\eqref{Lasso:initial} satisfies \eqref{err:phi_partial}.

	\vspace{12pt}
	\textbf{Step 2:}   Recall from the definition of $\hat \Phi_{\CS_i}$ (resp., $\Phi_{\CS_i}$) that its $i$th column is defined by the $((i-1)d+1)$-th to the $id$-th entries of $\hat \phi$ (resp., $\phi_{\CS_i}$).  The inequality \eqref{err:phi_partial} implies that
	\begin{align}\label{err:Phi}
		\|\hat \Phi_{\CS_i} -\Phi_{\CS_i} \|_F \le 6 \kappa d \sqrt{s} \left( \sqrt{\frac{2 \log(2^id L/\gamma)}{2^{i-1} L}}\right).
	\end{align}
	Consider the SVD for $ \Phi_{\CS_i}$ and $\hat \Phi_{\CS_i}$
	\begin{align*}
		&\Phi_{\CS_i}=\begin{bmatrix} 
			\bU_1& \bU_2
		\end{bmatrix}  \begin{bmatrix} 
			\sigma_1& 0\\ 0 & \Sigma_2 
		\end{bmatrix} \begin{bmatrix} 
			\bV_1^\top \\  \bV_2^\top 
		\end{bmatrix}, & \hat \Phi_{\CS_i}=\begin{bmatrix} 
			\hat \bU_1& \hat \bU_2
		\end{bmatrix}  \begin{bmatrix} 
			\hat \sigma_1& 0\\ 0 & \hat \Sigma_2 
		\end{bmatrix} \begin{bmatrix} 
			\hat \bV_1^\top \\  \hat \bV_2^\top 
		\end{bmatrix}.
	\end{align*}
	Since $\Phi_{\CS_i}=\BMtheta \w_{\CS_i}^\top$, it can be seen that $U_1=\frac{\theta}{\|\theta\|}$. Applying Theorem~\ref{Wedin} yields that 
	\begin{align*}
		\left\|\sin \bm{\Theta} \Big(\hat \theta,\frac{\theta}{\|\theta\|} \Big) \right\|_F^2 &\le \|\sin \bm{\theta} (\hat \bU_1, \bU_1)\|_F^2 + \|\sin \bm{\theta} (\hat \bV_1, \bV_1)\|_F^2 \\ 
		&\le \frac{\|\bU_1^\top \Delta\|_F^2 +\|\Delta \bV_1\|_F^2}{\mu^2}\\
		& \le \frac{2 \|\Delta\|_F^2 }{\mu^2},
	\end{align*}
	where $\mu= \min\{\sigma_1,  \min_{2 \le j \le \min\{d,2^{i-1}L\}}|\sigma_1-\hat \sigma_j|\}$ and $\Delta=\hat \Phi_{\CS_i}- \Phi_{\CS_i}$. From Weyl's Theorem (see Theorem~1 in \cite{wedin1972perturbation}), it holds that $|\sigma_j-\hat \sigma_j|\le \|\hat \Phi_{\CS_i} -\Phi_{\CS_i}\|_2$ for any $j$. When $2 \le j \le \min\{d,2^{i-1}L\}$, we have $|\sigma_j-\hat \sigma_j|=|\hat \sigma_j|\le \|\hat \Phi_{\CS_i} -\Phi_{\CS_i} \|_2$. Therefore, one can derive that $\mu\ge \sigma_1-\|\hat \Phi_{\CS_i} -\Phi_{\CS_i} \|_2$. Then, it follows from \eqref{err:Phi} that 
	\begin{align*}
		\left\|\sin \bm{\Theta} \Big(\hat \theta,\frac{\theta}{\|\theta\|} \Big) \right\|_F^2 \le \frac{72 \kappa^2 sd^2}{\sigma_1^2 } \left( {\frac{2 \log(2^i d L/\gamma)}{2^{i-1} L}} \right).
	\end{align*}
	Since $\Phi_{\CS_i}=\BMtheta \w_{\CS_i}^\top$, it holds that $\sigma_1=\|\BMtheta\|_2 \|\w_{\CS_i}\|_2$. If $\|\w_{\CS_i}\|_2\neq 0$, we have
	\begin{align}\label{accu:theta:1}
		&\left|\sin \bm{\Theta} \Big(\hat \BMtheta,\frac{\BMtheta}{\|\BMtheta\|} \Big) \right| \le \frac{6 d\sqrt{2} \kappa \sqrt{s}}{ \|\BMtheta\|_2 \|\w_{\CS_i}\|_2} \left( \sqrt{\frac{2 \log(2^i d L/\gamma)}{2^{i-1} L}}\right),
	\end{align}
	where the fact that $\sin \bm{\Theta} \big(\hat \theta,\frac{\theta}{\|\theta\|} \big)$ is a scalar has been used.  If $\|\w_{\CS_i}\|_2 = 0$,  $\sin \bm{\Theta} \big(\hat \theta,\frac{\theta}{\|\theta\|} \big)$ can be as large as $\frac{\pi}{2}$, which completes the proof. 
\end{proof}

Now, we are now ready to prove Theorem~\ref{Th:data_poor}. 

\begin{pfof}{Theorem~\ref{Th:data_poor}}
	The proof consists of two steps:
	\begin{enumerate}
		\item We show that at each round $t$, taking a sub-optimal action $a_t=\argmax_{a\in [K]} \AngBra{x_{t,a},\hat \BMtheta_t}$ will result in a total regret that only depends on the angle distance between $\hat \BMtheta_t$ and the true $\BMtheta$, i.e., $\sin \bm{\Theta} \big(\hat \theta,\frac{\theta}{\|\theta\|} \big)$. 
		\item We apply Theorem~\ref{theta_hat} and sum up the regret accumulated in each epoch of the doubling scheme and show how the distribution of the weights in the vector $\w$ affect the final regret. 
	\end{enumerate}
	
	\textbf{Step 1:} Since $w_i>0$ for all $i=1,2,\dots,h$, it holds that $\|\w\|_1=\sum_{i=1}^h w_i$. Recall that $a_t^*=\argmax_{a\in [K]}\AngBra{\x_{t,a},\BMtheta}$, and the total reward of the action $a_t \in [K]$ at round $t$ contributes to the cumulative reward is  $\bar r(\x_{t,a_t})=\|\w\|_1\x_{t,a_{t}}^\top \theta $. Therefore, the regret generated by taking the action $a_t \in [K]$ at round $t$ can be calculated by 
	\begin{align*}
		\bar r(\x_{t,a_t^*}) -\bar r(\x_{t,a_t}) =\|\w\|_1\x_{t,a^*_{t}}^\top \theta -\|\w\|_1\x_{t,a_{t}}^\top \theta.
	\end{align*}
	Let $\bar \theta =\frac{\theta}{\|\theta\|_2}$. Then, it follows that 
	\begin{align*}
		\bar r(\x_{t,a_t^*}) -\bar r(\x_{t,a_t}) =\|\w\|_1 \|\theta\|_2 (\x_{t,a^*_{t}}^\top \bar \theta -\x_{t,a_{t}}^\top \bar \theta).
	\end{align*}
	Observe that 
	\begin{align*}
		\x_{t,a^*_{t}}^\top \bar \BMtheta -\x_{t,a_{t}}^\top \bar \BMtheta = \x_{t,a^*_t}^\top (\bar \BMtheta-\hat \BMtheta)-\x_{t,a_t}^\top (\bar \BMtheta-\hat \BMtheta) +(\x_{t,a^*_t}-\x_{t,a_t})^\top \hat \BMtheta.
	\end{align*}
	Since $\x_{t,a_{t}}=\argmax_{a\in [K]}\AngBra{\x_{t,a},\hat \theta}$ is the estimated optimal action, it holds that  $\AngBra{\x_{t,a_t},\hat \BMtheta} \ge \AngBra{\x_{t,a_t^*},\hat \BMtheta}$, which implies
	\begin{align*}
		\x_{t,a^*_{t}}^\top \BMtheta -\x_{t,a_{t}}^\top \BMtheta \le x_{t,a^*_t}^\top (\bar \BMtheta-\hat \BMtheta)-\x_{t,a_t}^\top (\bar \BMtheta-\hat \BMtheta)=(\x_{t,a^*_t}-\x_{t,a_t})^\top  (\bar \BMtheta-\hat \BMtheta).
	\end{align*}
	By assumption, $\|\x_{t,a}\|_2\le 1$ for all $a\in [K]$, $\|\BMtheta\|_2 \le 1$,  and $\|\w\|_1\le 1$. Consequently, it follows that
	\begin{align*}
		\bar r(\x_{t,a_t^*}) -\bar r(\x_{t,a_t})\le 2 \|\bar \BMtheta-\hat \BMtheta\|_2.
	\end{align*}
	Next, we provides the bound for $\|\bar \BMtheta-\hat \BMtheta\|_2$. Let $\alpha=:\sin \bm{\Theta} (\hat \BMtheta, \bar \BMtheta )$, and one can derive that 
	\begin{align*}
		\|\bar \BMtheta-\hat \BMtheta\|_2 \le 2 \big| \sin(\frac{\arcsin \alpha}{2}) \big|\le c |\alpha|,
	\end{align*}
	It can be derived that $2 \big| \sin(\frac{\arcsin \alpha}{2}) \big|\le c |\alpha|$ for some constant $c$ when $|\alpha|$ is small, and $2 \big| \sin(\frac{\arcsin \alpha}{2}) \big| =\sqrt{2}$ when $|\alpha|=\frac{\pi}{2}$. Therefore, we have
	\begin{align}\label{regret:chuncks}
		\bar r(\x_{t,a_t^*}) -\bar r(\x_{t,a_t})\le 
		\begin{cases}
			2 c  |\sin \bm{\Theta} (\hat \BMtheta, \bar \BMtheta )|, &\text{if }  \alpha \text{ is small},\\
			2 \sqrt{2} &  \text{if }  \alpha=\frac{\pi}{2}.
		\end{cases}
	\end{align}
	
	\textbf{Step 2:} Recall that in Algorithm~\ref{alg:initial} the doubling  sequence is 
	\begin{align*}
		&T_1,T_2,\dots,T_m, &\text{where } T_i=4(2^{i}-1)L,
	\end{align*}
	and $\hat\BMtheta_i$ is estimated at each $T_i$. Recall that the regret satisfies
	\begin{align*}
		R_{T} &=\left [\sum_{t=T_1+1}^{T}\sum_{i=0}^{h-1}{w_i} \AngBra{\x_{t,a^*_t}-\x_{t,a_t},\theta}  \right] \le \sum_{t=1}^{T} \big(\bar r(\x_{t,a_t^*}) -\bar r(\x_{t,a_t}) \big).
	\end{align*}
	
	Let $T_0=0$ and denote $\Tilde{R}_k=\sum_{T_{k-1}+1}^{T_{k}} (\bar r(\x_{t,a_t^*}) -\bar r(\x_{t,a_t}) )$ for $k=1,2,\dots,m$. It can be seen that $\Tilde{R}_1 \le 4L$. For $k=2,3,\dots, m$, it follows from Theorem~\ref{theta_hat} and \eqref{regret:chuncks} that if  $\|\w_{\CS_{k-1}}\|_2 \neq 0$, 
	\begin{align*}
		\Tilde{R}_k&\le \frac{6 \kappa d \sqrt{2s}}{ \|\w_{\CS_{k-1}}\|_2} \left( \sqrt{\frac{2 \log(2^{k-1} L/\gamma)}{2^{k-2} L}} \right) (T_k-T_{k-1}) \\
		&= \frac{24 \kappa d \sqrt{2s}}{ \|\w_{\CS_{k-1}}\|_2} \left( \sqrt{{2 \log(2^{k-1} d L/\gamma)}}\right) \sqrt{2^k L},
	\end{align*}
	and $\Tilde{R}_k\le 2^{k+1}L$ if $\|\w_{\CS_{k-1}}\|_2 = 0$. Assume that $k_1$ is the first $k$ such that $\|\w_{\CS_{k}}\|_2 \ge \mu$ with $0\le \mu \le 1$. Then, the regret satisfies 
	\begin{align*}
		R_T=\sum_{k=1}^m\Tilde{R}_k&\le \sum_{k=1}^{k_1} 2^{k+1} L + \frac{24  \kappa d \sqrt{2s}}{ \mu} \sum_{k=k_1+1}^{m} \left( \sqrt{{2 \log(2^{k-1} d L/\gamma)}} \right) \sqrt{2^k L}\\
		&\le 4(2^{k_1}-1)L\cdot 2\sqrt{2} + \frac{24 \kappa  d \sqrt{2s}}{ \mu}\left( \sqrt{{2 \log(T/\gamma)}} \right) \frac{(\sqrt{2})^{m+1}-(\sqrt{2})^{k_1+1}}{\sqrt{2}-1} \sqrt{L}.
	\end{align*}
	Since $T_i=4(2^{m}-1)L=T$, one can calculate that $m=\log_2(\frac{T}{4L}+1)$, which implies that  
	\begin{align*}
		R_T \le 8\sqrt{2}(2^{k_1}-1)L + \frac{24 \kappa d \sqrt{s}}{ (\sqrt{2}-1)\mu}\left( \sqrt{{2 \log(dT/\gamma)}} \right) \sqrt{T}.
	\end{align*}
	For any $\mu\in(0,1)$ and any $k_1$ such that $4(2^{k_1}-1)L\le T$, there always exists $\alpha(\mu)\in (0,1)$ such that
	\begin{align*}
		h^{\alpha(\mu)} = 4(2^{k_1}-1)L.  
	\end{align*}
	Consequently, it holds that 
	\begin{align}\label{form:full_poor}
		R_T \le 2\sqrt{2} \min\{h^{\alpha(\mu)},T\} + \frac{24\kappa d}{(\sqrt{2}-1)\mu}\left( \sqrt{2sT \log(dT/\gamma)}\right),
	\end{align}
	which completes the proof.
\end{pfof}

\subsection{Proof of Theorem~\ref{Th:data_rich}}

\begin{theorem}\label{theta_hat:rich}
	In Algorithm~\ref{alg:rich}, let $\hat \BMphi[j]$ be the estimate of $\BMphi$ at the $j$th epoch for the phase when $t> h$ (i.e., $\hat \phi[j]$ is the solution to the Lasso program~\ref{Lasso:rich}). Let $\hat \BMtheta_j$ be the estimate of $\BMtheta$ at the same epoch. Then, it holds, with probabiliy at least $1-\gamma$, that
	\begin{align}\label{accu:theta:rich}
		\left|\sin \bm{\Theta} \Big(\hat \BMtheta_j,\frac{\BMtheta}{\|\BMtheta\|} \Big) \right| \le 6 \kappa \sqrt{2sd}  \sqrt{\frac{2 \log(2^j d h/\gamma)}{2^{j-1} h}}. 
	\end{align} 
\end{theorem}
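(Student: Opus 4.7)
My plan is to follow the blueprint of Theorem~\ref{theta_hat} from the data-poor regime, adapting the argument to the fact that in each epoch of Algorithm~\ref{alg:rich} the \emph{full} weight vector $\BMphi\in\R^{hd}$ is estimated (rather than a prefix). The proof has two main steps: a Lasso error bound on $\|\hat\BMphi[j]-\BMphi\|_2$, followed by Wedin's theorem to lift the Frobenius error on the reshaped matrix to an angle bound on $\hat\BMtheta_j$.

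\textbf{Step 1 (block-RE for the design matrix).} First I would establish that $\tilde\BMXi[j]\in\R^{2^{j-1}h\times hd}$, the block-Toeplitz matrix formed from the later half of the epoch's selected contexts $\BMxi_t=\x_{t,a_t}$, satisfies the block-RE condition of Definition~\ref{Def:RE} with parameter $\kappa$. Conditioning on $\hat\BMtheta_{j-1}$, the $\{\BMxi_t\}$ in this sub-epoch are independent (the $\{\x_{t,a}\}_a$ are i.i.d.\ across $t$ and the greedy action is a deterministic function of them) and each satisfies the Hanson--Wright inequality by Theorem~\ref{Th:HW:chosen}. Since $2^{j-1}h\ge h\gtrsim sd\log^2(sd)\log^2(hd)$ (which is the same sample-size condition that $L$ meets in Algorithm~\ref{alg:initial}), Theorem~\ref{RIP:general:FORbandits} gives RIP for $2s$-block-sparse vectors on the zero-mean component; combining with Theorem~\ref{Theorem:RE} then yields block-RE for $\tilde\BMXi[j]$ itself, correctly accounting for the non-identity covariance $\Sigma$ and nonzero mean $\bmu$ of the selected contexts.

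\textbf{Step 2 (noise bound and Lasso).} Next I would verify $\lambda_j\ge 2\|\tilde\BMXi[j]^\top\tilde\bEps[j]/(2^{j-1}h)\|_{2,\infty}^{(d)}$ so that Theorem~\ref{bound:Lasso} applies. A key simplification relative to the data-poor case is the absence of the ``unestimated-block'' noise $\bar\bQ_i\BMphi_{\bar\CS_i}$, because we now estimate all of $\BMphi$; only the sub-Gaussian $\tilde\bEps[j]$ remains. Using $\|\BMxi_t\|_\infty\le 1$ together with a sub-Gaussian tail and a union bound over the $hd$ coordinates (exactly as in \eqref{bound:noise:1}--\eqref{bound:true:noise}) gives
\[
\big\|\tilde\BMXi[j]^\top\tilde\bEps[j]\big\|_{2,\infty}^{(d)}\le 2\sqrt{d\cdot 2^{j-1}h\,\log(2^{j}hd/\gamma)}
\]
with probability at least $1-\gamma$, matching the stated $\lambda_j=2\sqrt{2d\log(2^{j}h/\gamma)/(2^{j-1}h)}$. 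Theorem~\ref{bound:Lasso} then delivers $\|\hat\BMphi[j]-\BMphi\|_2\le (3\sqrt{s}/\kappa)\lambda_j\lesssim\kappa^{-1}\sqrt{sd\log(2^jhd/\gamma)/(2^{j-1}h)}$. The improvement from the $d\sqrt{s}$ rate of Theorem~\ref{theta_hat} to the $\sqrt{sd}$ rate here is exactly the gain from removing the $\bar\bQ_i\BMphi_{\bar\CS_i}$ contribution.

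\textbf{Step 3 (Wedin-style angle conversion).} Finally I would reshape $\hat\BMphi[j]$ into $\hat\BMPhi\in\R^{d\times h}$, with ground truth $\BMPhi=\BMtheta\w^\top$ of rank one, top singular value $\sigma_1=\|\BMtheta\|_2\|\w\|_2=\Theta(1)$, and top left singular vector $\BMtheta/\|\BMtheta\|$. Writing $\Delta=\hat\BMPhi-\BMPhi$, Weyl's inequality (to control the spectral gap by $\sigma_1-\|\Delta\|_2$) combined with Theorem~\ref{Wedin} gives
\[
\bigg|\sin\bm\Theta\!\Big(\hat\BMtheta_j,\tfrac{\BMtheta}{\|\BMtheta\|}\Big)\bigg|\le \frac{\sqrt{2}\,\|\Delta\|_F}{\sigma_1-\|\Delta\|_2}.
\]
Substituting the Step~2 bound on $\|\Delta\|_F=\|\hat\BMphi[j]-\BMphi\|_2$ and absorbing the $\Theta(1)$ factor $1/(\|\BMtheta\|\|\w\|_2)$ into the constant yields the claimed inequality.

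\textbf{Main obstacle.} The principal technical obstacle is Step~1: establishing block-RE for a block-Toeplitz matrix whose rows are correlated (through the circulant structure) and non-isotropic with nonzero mean. Handling this requires stacking three nontrivial tools in the right order --- Theorem~\ref{Th:HW:chosen} (Hanson--Wright for greedily selected contexts), Theorem~\ref{RIP:general:FORbandits} (block-circulant RIP under Hanson--Wright), and Theorem~\ref{Theorem:RE} (RIP$\Rightarrow$block-RE under non-identity $\Sigma$ and nonzero $\bmu$). A further subtlety is that $\hat\BMtheta_{j-1}$ is measurable w.r.t.\ data from prior epochs; this is handled by performing all Lasso-side bounds conditionally on $\hat\BMtheta_{j-1}$, which is precisely what motivates using only the later half of the epoch (and the doubling structure) so that the estimation data is independent of the selection policy.
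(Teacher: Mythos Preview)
Your proposal is correct and follows essentially the same approach the paper takes: the paper explicitly states that the proof ``is very similar to that of Theorem~\ref{theta_hat} and thus is omitted here,'' and your adaptation---dropping the extraneous noise term $\bar\bQ_i\BMphi_{\bar\CS_i}$ (which yields the $\sqrt{sd}$ rather than $d\sqrt{s}$ rate), invoking Theorem~\ref{Theorem:RE} directly for the undifferenced block-Toeplitz design, and replacing $\|\w_{\CS_i}\|_2$ by the full $\|\w\|_2=\Theta(1)$ in the Wedin step---is exactly the intended modification. One small clarification on your ``main obstacle'' paragraph: the reason for using only the later half of each epoch is not to decouple the data from the \emph{current} selection policy $\hat\BMtheta_{j-1}$ (which is fixed throughout the epoch), but to guarantee that every row of $\tilde\BMXi[j]$ reaches back only into epoch $j$ and hence involves contexts all selected under the same $\hat\BMtheta_{j-1}$, so that conditional i.i.d.\ structure (and thus Theorems~\ref{Th:HW:chosen} and~\ref{RIP:general:FORbandits}) applies cleanly.
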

The proof of this theorem is very similar to that of Theorem~\ref{theta_hat} and thus is omitted here. 

\begin{pfof}{Theorem~\ref{Th:data_rich}}
	Recall that in Algorithm~\ref{alg:rich} the doubling sequence is 
	\begin{align*}
		&\tilde T_1,\tilde T_2,\dots,\tilde T_p, &\text{where } \tilde T_j=2(2^{j}-1)h,
	\end{align*}
	and $\hat\BMtheta_j$ is estimated at each $\tilde T_j$. Following similar reasoning to Step 1 in the proof of Theorem~\ref{Th:data_poor}, one can show that for any $t$ in each epoch $j$ it holds that
	\begin{align*}
		\bar r(\x_{t,a_t^*}) -\bar r(\x_{t,a_t})\le 2 c  |\sin \bm{\Theta} (\hat \BMtheta_{j-1}, \bar \BMtheta )|.
	\end{align*}
	The regret can be divided into two parts: 1) the initial phase of $h$ rounds, and 2) the later phase where $t>h$. The regret in the initial phase follows from Theorem~\ref{Th:data_poor}, which is 
	\begin{align}\label{regr:h}
		R_h \le  2\sqrt{2} h^{\alpha(\mu)} + \frac{12\kappa}{(\sqrt{2}-1)\mu}\left( \sqrt{2sdh \log(dh/\gamma)}\right).
	\end{align}
	After the initial phase, from Theorem~\ref{theta_hat}, the estimated $\hat \BMtheta$ satisfies
	\begin{align*}
		\left|\sin \bm{\Theta} \Big(\hat \BMtheta,\frac{\BMtheta}{\|\BMtheta\|} \Big) \right| \le 6 \kappa \sqrt{2sd} \left( \sqrt{\frac{2 \log(2^m d L/\gamma)}{2^{m-1} L}} \right).
	\end{align*}
	Since $m$ is the largest integer such that $T_m=2(2^{m}-1)L\le h$, it holds that
	\begin{align*}
		\left|\sin \bm{\Theta} \Big(\hat \BMtheta,\frac{\BMtheta}{\|\BMtheta\|} \Big) \right| \le 12 \kappa \sqrt{sd} \left( \sqrt{\frac{2 \log(dh/\gamma)}{h}}\right).
	\end{align*}
	Therefore, the regret accumulated in the first $2h$ rounds in the second phase, denoted by $R_{h+1:3h}$, satisfies
	\begin{align}\label{regr:h+1}
		R_{h+1:3h} \le 2h \cdot 12 \kappa \sqrt{sd} \left( \sqrt{\frac{2 \log(dh/\gamma)}{h}}  \right) = 24 \kappa \sqrt{2sd h\log(dh/\gamma)}. 
	\end{align}
	Applying Theorem~\ref{Th:data_rich}, the total regret over the horizon $T$ can be bounded as
	
	Then, applying Theorem~\ref{Th:data_rich}, one can derive that
	\begin{align*}
		R_T &\le R_h +R_{h+1:3h}+ \sum_{j=2}^p 2 (\tilde T_j-\tilde T_{j-1})2 c  |\sin \bm{\Theta} (\hat \BMtheta_{j-1}, \bar \BMtheta )|\\
		& \le R_h +R_{h+1:3h} + \sum_{j=2}^p 2^{j}h\cdot 6 \kappa \sqrt{2sd}    \sqrt{\frac{2 \log(2^j d h/\gamma)}{2^{j-2} h}},\\
		& \le R_h +R_{h+1:3h} + 48  \kappa \sqrt{sd} \sum_{j=2}^p \sqrt{(2^{j-1}) h \log(2^j d h/\gamma)}, \\
		& \le R_h +R_{h+1:3h} +   48  \kappa \sqrt{sd\log(dT/\gamma)} \sum_{j=2}^p \sqrt{2^{j-1}h}.
	\end{align*}
	Since $p$ is the largest integer such that $\tilde T_j=(2^{p-1}+p)h\le T$, one can derive that $\sum_{j=2}^p \sqrt{2^{j-1}h}\le \sqrt{T}$. Combining this with \eqref{regr:h} and \eqref{regr:h+1} one has
	\begin{align*}
		R_T \le &2\sqrt{2} h^{\alpha(\mu)} + \frac{12\kappa}{(\sqrt{2}-1)\mu} \sqrt{2sdh \log(h/\gamma)} \\
		&+ 24 \kappa \sqrt{2sd h\log(d h/\gamma)} +   48  \kappa \sqrt{sd T \log(d T/\gamma)},
	\end{align*}
	which completes the proof. 
\end{pfof}

\newpage
\section{Experiments: Details and Further Results}\label{experi}

\subsection{Details of Algorithms}
In this section, we first provide some further details of the Sparse-Alternating Gradient Descent (SA-GD), Single-Weight Match Pursuit (SW-MP), and UCB with Match Pursuit (UCB-MP) presented in Section~\ref{sec:rich}. 
As we mentioned in Section~\ref{sec:rich}, all algorithms use the same doubling trick as our adaptive-Lasso to enable fair comparison. The only difference is that they learn $\BMtheta$ and make decisions in different ways.

\vspace{15pt} 

\textbf{SA-GD.} To elaborate how SA-GD works, we rewrite \eqref{CB:high-d} into
\begin{equation*}
	r_t=\BMtheta^\top  Z_{t}\w +\varepsilon_t,
\end{equation*}
where $Z_{t}=[\BMxi_{t},\BMxi_{t-1},\dots,\BMxi_{t-h+1}]$. Recall that the $i$th epoch starts at $t=\tilde T_{i-1}+1$ and ends at $\tilde T_i$. We then define the loss function in the $i$th epoch as
\begin{align*}
	f_i(\BMtheta,\w) = \sum_{t=\tilde T_{i-1}+1}^{\tilde T_j} (r_t-\BMtheta^\top  Z_{t}\w)^2.
\end{align*}
Then, $\bt$ and $\BMphi$ are estimated by 
\begin{align*}
	&\hat \BMtheta^{(k+1)}= \BMtheta^{(k)} - \beta \nabla f_{i,\bt} (\bt^{(k)},\w^{(k)}),\\
	&\hat \w^{(k+1)}= \w^{(k)} - \beta \nabla f_{i,\w} (\bt^{(k)},\w^{(k)}),\\
\end{align*}
alternatively, where $f_{i,\bt}$ and $f_{i,\w}$ are the gradients of $f_i$ with respect to $\bt$ and $\w$, respectively. We choose a threshold $\epsilon$. When the loss $f_i(\hat \bt^{(k^*)},\hat \w^{(k^*)})\le \epsilon$, the alternative gradient descent algorithm stops and we project $\w^{(k^*)}$ to the $s$-sparse subspace. 

\vspace{15pt} 

	We wish to mention that, due to our intricate problem setting, most existing algorithms will perform worse than ours since they do not learn the long-horizon reward pattern. For instance, for the special case where $w_0=0$ (where $\w=[w_0,w_1,\dots,w_h]^\top$), it is not hard to see that the classic UCB and OFUL \cite{chu2011contextual,abbasi2011improved} will fail since nothing can be learned. To enable more fair comparison, we equip the classic algorithms with the ability to estimate the location of the largest weight in $\w$, which we will elaborate below. 

\vspace{15pt} 

\textbf{SW-MP.} The single-weight matching pursuit algorithm employs the same idea as the classic matching pursuit. Instead of estimating all non-zero values in a vector, we only look for the largest non-zero, and that is why we call this algorithm single-weight matching pursuit.

In each epoch, the rewards are generated by
\begin{figure}[ht]
	\centering
	\includegraphics[scale=1]{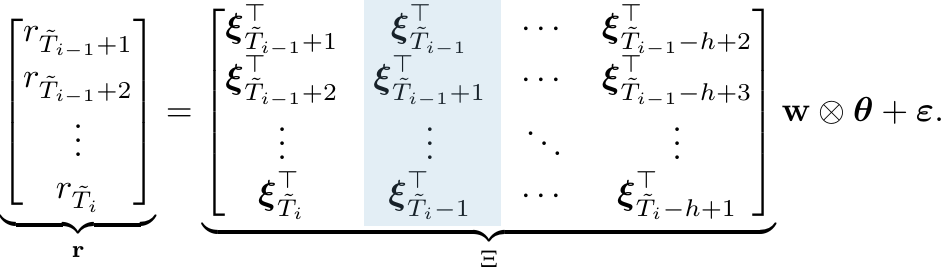}
\end{figure}

The first step of SW-MP is to estimate the location of the largest $w_i$ in the weight vector $\w$. To do that, it tests which block column (see the above  shaded area for an example of a block column) are most correlated with the reward vector $\br$. Let $\Xi^{k}$ be the $k$th block column. Then, we solve the optimization problem at the end of each epoch
\begin{align*}
	k^* = \argmax_{k\in\{1,2,\dots,h\}} \frac{1}{\|\Xi^{k}\|}\| \br^\top \Xi^{k}\|
\end{align*}
to find the index of the largest weight in $\w$. Then, we estimate $\bt$ by solving the following least squares problem
\begin{align*}
	\hat \bt = \argmin_{\bt \in \R^d} \|\br - \Xi^{k^*} \bt\|_2^2. 
\end{align*}
This estimated $\hat \bt$ will be used for decision-making in the next epoch. Note that, as expected, this algorithm can be inaccurate when there are many non-zeros in $\w$, which is also evidenced in (a3), (a4), (b3), and (b4) in Fig.~\ref{data_rich_diff_spars}. 

\vspace{15pt} 
\textbf{UCB-MP.} This algorithm shares some similarities as SW-MP. At the end of each epoch $i$, we use the same approach as SW-MP to estimate the location of the largest $w_i$ in the weight vector $\w$, i.e., 
\begin{align*}
	k^* = \argmax_{k\in\{1,2,\dots,h\}} \frac{1}{\|\Xi^{k}\|}\| \br^\top \Xi^{k}\|,
\end{align*}

Using this learned location, we use UCB \cite{chu2011contextual} to learn $\BMtheta$ and make decisions at the next epoch. Specifically, at each round, we update the confidence bound only using the chosen context vector and the corresponding reward $k^*$ rounds ago.  

\subsection{Further Results}

We also perform some further experiments to to complement those in Section~\ref{sec:rich}.

First, we consider the case where the weight vector $\w$ is uniformly randomly generated. Different sparsity of $\w$ is investigated. As shown in Fig.~\ref{fig:randm_w}, our algorithm still performs the best most of the time. Interestingly, it outperforms SA-GD even if $sd$ is much larger than $h+d$. This indicates that leveraging the low-rank structure is problematic. These observations, again, support our discussions about circulant matrices in low-rank matrix recovery in Sections~\ref{challenges} and \ref{sec:rich}. 

\begin{figure*}[ht]
	\centering
	\includegraphics[scale=0.35]{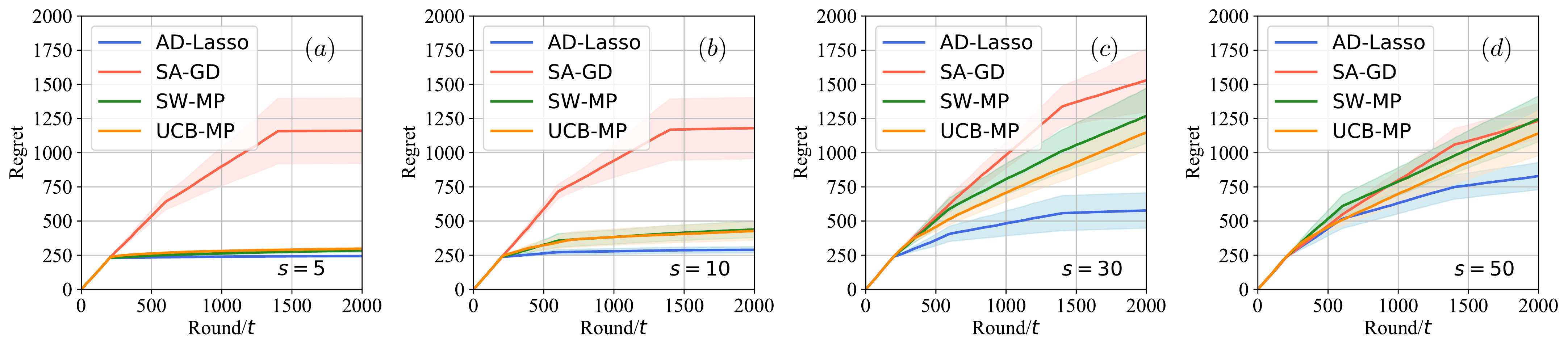}
	\caption{Performance comparison of different algorithms. Here, the $s$-sparse vector $\w$ is uniformly randomly generated and then normalized. (Universal parameters: $T=2000,h=100,d=5$, and $\|\w\|_1=1$. Shaded areas represent standard error.)}
	\label{fig:randm_w}
\end{figure*}

Further, we investigate the situation where there is only $1$ nonzero entry in $\w$. This case corresponds to contextual linear bandits with unknown delays. It can be seen from Fig.~\ref{fig:only_1_w} that our algorithm performs as nearly well as SW-MP and UCB-MP, indicating the power of our algorithm in locating unknown delays. As expected, low-rank recovery algorithm  performs poorly. 

\begin{figure*}[ht]
	\centering
	\includegraphics[scale=0.35]{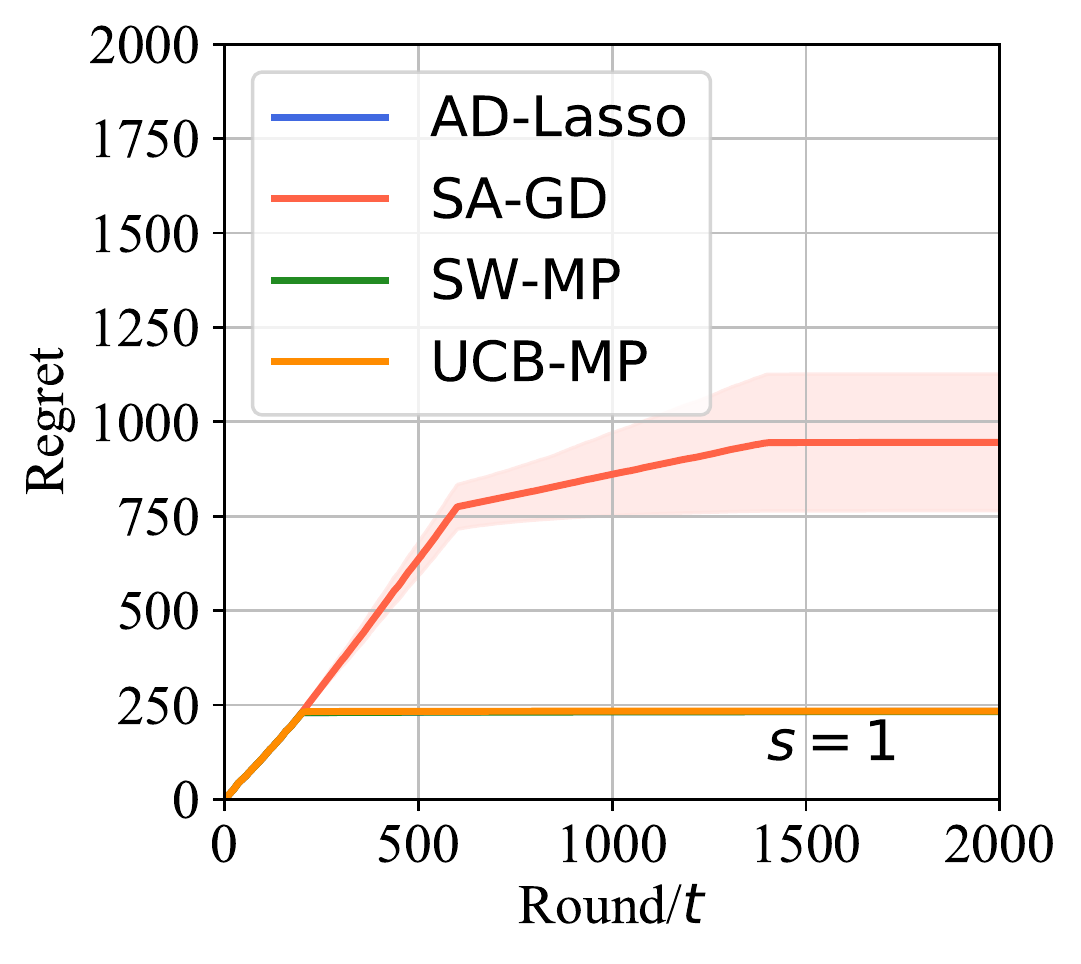}
	\caption{Performance comparison of different algorithms. Here, $\w$ contains only one non-zeros entries. ($T=2000,h=100,d=5$, and $\|\w\|_1=1$.)}
	\label{fig:only_1_w}
\end{figure*}

\end{document}